\documentclass[11pt,letterpaper]{article}

\usepackage[margin=1in]{geometry}
\usepackage[utf8]{inputenc}
\usepackage[T1]{fontenc}
\usepackage{lmodern}

\usepackage{amsmath, amssymb, amsfonts, mathtools}
\usepackage{amsthm}   
\usepackage{dsfont}
\usepackage{bbm}
\usepackage{comment}

\usepackage[maxfloats=100]{morefloats}
\usepackage{multirow}
\usepackage{wrapfig}
\usepackage{booktabs}
\usepackage{nicefrac}
\usepackage{microtype}
\usepackage{enumerate}
\usepackage{lipsum}
\usepackage{cuted}
\usepackage{float}
\usepackage[dvipsnames,table,xcdraw]{xcolor}
\usepackage{varioref}
\usepackage{rotating}
\usepackage{graphicx}
\usepackage{subcaption}
\usepackage[normalem]{ulem}
\usepackage{url}
\usepackage{algorithm}
\usepackage{algpseudocode}
\usepackage{hyperref}

\newtheorem{theorem}{Theorem}[section]

\newtheorem{lemma}[theorem]{Lemma}

\newtheorem{definition}[theorem]{Definition}
\newtheorem{assumption}[theorem]{Assumption}
\newtheorem{remark}[theorem]{Remark}

\newcommand{\fm}[1]{{\color{black}#1}}
\newcommand{\fr}[1]{{\color{black}#1}}

\newcommand{\fy}[1]{{\color{black}#1}}
\newcommand{\fyr}[1]{{\color{black}#1}}
\newcommand{\fyrr}[1]{{\color{black}#1}}
\usepackage[textsize=small]{todonotes}

\title{\LARGE \bf
Incentive-Aware Federated Averaging with Performance Guarantees under Strategic Participation
}

\author{Fateme Maleki$^{1}$, Krishnan Raghavan$^{2}$, and Farzad Yousefian$^{1}$%
\thanks{This work was supported in part by the U.S. Department of Energy under Awards DE-SC0025570 and DE-SC0023303.}%
\thanks{$^{1}$Maleki and Yousefian are with the Department of Industrial and Systems Engineering, Rutgers University, USA. \{fateme.maleki,farzad.yousefian\}@rutgers.edu.}%
\thanks{$^{2}$Raghavan is with Argonne National Laboratory, Lemont, IL, USA. kraghavan@anl.gov.}%
}

\begin{document}

\maketitle
\thispagestyle{empty}
\pagestyle{empty}

\begin{abstract}
Federated learning (FL) is a communication-efficient collaborative learning framework that enables model training across multiple agents with private local datasets. While the benefits of FL in improving global model performance are well established, individual agents may behave strategically, balancing the learning payoff against the cost of contributing their local data. Motivated by the need for FL frameworks that successfully retain participating agents, we propose an incentive-aware federated averaging method in which, at each communication round, clients transmit both their local model parameters and their updated training dataset sizes to the server. The dataset sizes are dynamically adjusted via a Nash equilibrium (NE)–seeking update rule that captures strategic data participation. \fyrr{Under a strongly monotone game setting, we} analyze the proposed method under convex and nonconvex global objective settings and establish performance guarantees for the resulting incentive-aware FL algorithm. \fyrr{Furthermore, under a merely monotone game setting, we consider a welfare loss minimization framework and establish the asymptotic convergence of the scheme.} Numerical experiments on the MNIST and CIFAR-10 datasets demonstrate that agents achieve competitive global model performance while converging to stable data participation strategies.
\end{abstract}


\section{Introduction}

Federated learning (FL) has recently emerged as a communication-efficient algorithmic framework that enables collaborative model training across multiple agents \cite{mcmahan2017communication,kairouz2021advances}. 
While this collaborative advantage has been rigorously studied~\cite{li2020fedprox,karimireddy2020scaffold,stich2019local}, as discussed by Blum et al. \cite{blum2021one}, the effectiveness of FL fundamentally depends on {\it the ability to recruit and retain a large number of participants} willing to contribute their data and resources. In existing FL frameworks, it is assumed that the data participation level for each agent remains constant throughout the learning task. However,
 a participating agent may achieve their training goal even when they unilaterally reduce their data contribution. Further, in settings where some participating agents are competitors, they might not have an incentive to consider full data participation. For these reasons, a fundamental challenge in federated learning arises around the following questions:  {(Q1)} How can we develop a game-theoretic framework for FL that captures agents’ learning payoffs and participation costs, while guaranteeing the existence of a Nash equilibrium (NE)?
 {(Q2)} Building on this formulation, how can we design an incentive-aligned FL algorithmic framework that enables strategic data participation?
 {(Q3)} Under such a scheme, can we establish simultaneous performance guarantees for both collaborative learning and the stability of the resulting noncooperative participation strategies?

To address (Q1), the work in~\cite{blum2021one} considers a constrained cost-minimization formulation where each agent minimizes its data contribution subject to a payoff lower bound. However, an NE may fail to exist in general. To overcome this issue,~\cite{murhekar2023incentives} studies an unconstrained net utility loss formulation—minimizing cost minus payoff—which is more natural from a game-theoretic perspective (cf.~\cite{myerson1981optimal,huber2001gaining}).

To address \fyrr{(Q2)}, recent works design incentive-aligned FL frameworks for strategic clients. Yi et al.~\cite{yi2025incentive} \fyrr{study participation incentives in FL under heterogeneous agent data qualities and characterize NE participation behavior. Alaei et al.~\cite{alaei2025incentivizing} use mechanism design to elicit truthful reporting and reward contributions. Further work includes contract theory for effort alignment \cite{karimireddy2022mechanisms}, penalties for truthfulness \cite{bornstein2024fact}, and Bayesian incentive compatibility under heterogeneity \cite{chakarov2024incentivizing} (see also~\cite{luo2023incentive,pandey2022fedtoken,tu2021incentive}).} 

\fyrr{In addressing (Q3), recent work~\cite{doshi2025incentivize} extends the framework in~\cite{murhekar2023incentives} and establishes convergence guarantees in terms of gradient norms, reaching the welfare-optimal Nash equilibrium under a budget-balanced monetary mechanism. Within this picture, two questions remain open. First, when the participation game admits a unique equilibrium, explicit communication-complexity bounds on the NE infeasibility itself, together with simultaneous bounds on the global FL objective, appear to be unavailable. Second, when the game admits multiple equilibria, the route to a welfare-improving equilibrium in the prior literature passes through monetary transfers, leaving open whether a welfare-selected equilibrium can be reached from the clients' intrinsic payoff and cost tradeoffs alone.}

{\bf Contributions.} 
(i) We introduce an incentive-aware federated averaging algorithm in which clients dynamically adjust their local dataset sizes via NE-seeking updates at each communication round. (ii) \fr{When the participation game is strongly monotone, we establish explicit communication-complexity bounds for the simultaneous convergence of the global FL objective and the \fyrr{NE infeasibility}, under both convex and nonconvex losses.} (iii) \fyrr{When the participation game is merely monotone and may admit multiple equilibria, we extend the framework to reach a welfare-selected equilibrium, through an iteratively regularized update, with explicit error bounds on the welfare gap and the NE infeasibility gap.} 

 \noindent {\bf Notation.} Vectors $x \in \mathbb{R}^n$ are treated as column vectors, with $x^\top$ representing the transpose. The symbol $\|\cdot\|$ denotes the Euclidean norm.
We define the Euclidean projection $\Pi_X[x]$ as the point in $X$ closest to $x$, i.e., $\Pi_X[x] \triangleq \arg\min_{y \in X} \|x - y\|$. The distance from a point $x$ to the set $X$ is given by $\text{dist}(x, X) \triangleq \|x - \Pi_X[x]\|$. We let $X^*$ denote the optimal solution set in~\eqref{prob:incentive_FL}. 

\section{Problem formulation}
\fr{
\subsection{Strongly monotone game}}
Consider a federated optimization problem among $m$ clients who seek to minimize their global aggregate loss function. The local loss function for client $i$ is denoted as $\tilde{f}_i:\mathbb{R}^n \times\mathcal{D}_i \to \mathbb{R}$, where $\mathcal{D}_i$ denotes the $i$th client's local dataset. To capture incentives, we consider a setting where client $i$ may choose a random subset of $\mathcal{D}_i$ with size $N_i$ when participating in the training task of the federated learning framework. Let $N = (N_1,\ldots, N_m)$ denote the tuple of the sizes of the local training datasets. To this end, we let $a_i(N_i,N_{-i})$ and $c_i(N_i,N_{-i})$ denote the payoff function and cost function of client $i$ associated with the tuple $N$, respectively, where $N_{-i} \triangleq (N_j)_{j\neq i}$.     
This leads to a parameterized federated optimization problem cast as
\begin{equation}\label{prob:incentive_FL}
\begin{aligned}
\min_{x}\quad & \textstyle \sum_{i=1}^{m} p_i(N^*)\, \mathbb{E}_{\xi_i} [\tilde{f}_i(x,\xi_i)] \\
\text{s.t.}\quad 
& N^* \triangleq (N_1^*,\ldots,N_m^*) \text{ solves the game: For all } i, \\
& N_i^* \in \arg\min_{N_i \in \mathcal{N}_i}
\{ c_i(N_i,N_{-i}) - a_i(N_i,N_{-i}) \}.
\end{aligned}
\end{equation}
 where $\mathcal{N}_i$ denotes a local constraint set, $\xi_i \in \mathbb{R}^d$ denotes a random vector associated with the $i$th client's local data, and $p_i (N) \triangleq \frac{N_i}{\sum_{j=1}^m N_j}$ denotes the (unknown) weight of the client $i$. Notably, by construction, these weights are positive and sum to one, i.e., $\sum_{i=1}^m p_i(N)=1$. Further, substituting $\mathbb{E}_{\xi_i}[ \tilde{f}_i(x, \xi_i) ] := \frac{1}{N_i}\sum_{j=1}^{N_i}  \tilde{f}_i(x, \xi_{i,j})$, the global loss function in \eqref{prob:incentive_FL} will account for each data point with an equal weight of $1/\left(\sum_{i=1}^m N_i\right)$. 
Note that when all clients have an equal number of samples, the problem boils down to the standard federated learning formulation of minimizing $\frac{1}{m}\sum_{i=1}^{m}   \mathbb{E}_{\xi_i} [ \tilde{f}_i(x, \xi_i)  ]$.  Throughout, we let $f$ denote the global loss function in \eqref{prob:incentive_FL}, and $f_i(x) \triangleq \mathbb{E}_{\xi_i} [ \tilde{f}_i(x, \xi_i)  ]$ denote the local loss function of client $i$.
\begin{assumption}\label{assum:main}
Consider problem \eqref{prob:incentive_FL}. 
\noindent (i) For any $i \in [m] \triangleq \{1,\ldots,m\}$, the local loss function, $f_i$, is $L$-smooth and the stochastic local function \fm{$\tilde f_i(\bullet, \xi_i)$} is differentiable \fm{for any $\xi_i$}. (ii) For all $i \in [m] $, the local stochastic gradient oracle $\nabla \tilde{f}_i(x,\xi_i)$ is an unbiased estimator of $\nabla f_i$, i.e.,  $\mathbb{E}[\nabla \tilde{f}_i(x,\xi_i) \mid x] = \nabla f_i(x)$, and has a unified bounded variance, i.e., $\mathbb{E}[\|\nabla \tilde{f}_i(x,\xi_i) - \nabla f_i(x)\|^2 \mid x ] \leq \nu^2 $ for some $\nu >0$.  (iii) Let $f^* \triangleq \inf_{x \in \mathbb{R}^n} f(x) > -\infty$, where $f$ denotes the global objective function in \eqref{prob:incentive_FL}. 
\end{assumption}
The constraint set in problem \eqref{prob:incentive_FL} is characterized by a noncooperative Nash game among the clients, where each client $i$ seeks to minimize its utility loss function, denoted by $l_i(N) \triangleq c_i(N) - a_i(N)$, subject to the strategy set $\mathcal{N}_i \triangleq [N_i^{\min}, N_i^{\max}]$, where $N_i^{\min}, N_i^{\max}$ denote the minimum and maximum training size by client $i$, respectively. 
\begin{assumption}[\fr{Strongly monotone data participation}]\label{assum:game}
\noindent (i) For each $i \in [m]$, for any $N_{-i} \in \mathbb{R}^{m-1}$, the utility loss function $l_i(\bullet,N_{-i})$ is differentiable and convex. \fm{(ii)} The mapping $F(N) \triangleq \left(F_i(N)\right)_{i=1}^m$ is $\mu_F$-strongly monotone and $L_F$-Lipschitz continuous, where $F_i(N) \triangleq \nabla_{N_i} l_i(N)$.
\end{assumption}

A data participation profile $N^* = (N_i^*)_{i=1}^m$ is a Nash equilibrium (NE) to the data participation game if no client can unilaterally improve their utility by altering their contribution level, given that all other clients' participation levels remain fixed. Under \fm{Assumption~\ref{assum:game}}, in view of \cite[Prop. 1.4.2]{facchinei2003finite}, the set of all NEs to the data participation game can be characterized by the solution set of the variational inequality problem, $\mbox{VI}(\prod_{i=1}^m\mathcal{N}_i, F)$, defined as $$\left\{N \in \mathcal{N} \mid   F(N)^\top(\bar{N} - N) \geq 0, \hbox{ for all } \bar{N} \in  \mathcal{N} \right\},$$ 
where $\mathcal{N} \triangleq \prod_{i=1}^m\mathcal{N}_i$. Under \fm{Assumption~\ref{assum:game}}, the game admits a unique NE, denoted by $N^*$ (cf. \cite[Ch. 2]{facchinei2003finite}). 

\noindent {\bf Examples of data participation payoff and cost functions.} 

 {\it Random discovery payoff.} Here, it is assumed that any client's payoff is a linear combination of all clients' contributions, i.e., $a(N) \triangleq (a_i(N))_{i=1}^m = W N$ for a symmetric matrix $W \in [0,1]^{m \times m}$ with unit diagonal entries, where $W_{i,j}$ represents the impact of client $j$'s effort on client $i$'s utility. Specifically, for example in a classification setting such as MNIST, each client $i$ maintains a probability distribution $q_i$ over the set of distinct class labels $\mathcal C$ with $|\mathcal C| = t$ (e.g., $t=10$ for MNIST), representing the likelihood that a randomly sampled data point belongs to each class. \fr{Let $q_{ic}$ denote the proportional reward that client $i$ receives} whenever any client samples class $c \in \mathcal C$. The expected reward for client $i$ under the contribution profile $N$ is then $a_i(N) = q_i Q^\top N$, where $Q = [q_{ic}] \in \mathbb{R}_+^{m \times t}$ collects all clients' distributions. Notably, $a(N) = Q Q^\top N$ defines a linear map, and $W = Q Q^\top$.


{\it Cost function.} For $c_i(N_i, N_{-i})$, a natural modeling choice studied in prior work~\cite{blum2021one,murhekar2023incentives} is to assume that it is proportional to the client's contribution, i.e., $c_i(N_i, N_{-i}) \triangleq \lambda_i N_i$, where $\lambda_i > 0$ denotes client $i$'s cost coefficient. We note, however, that an agent’s cost for sharing data may be more complex than just the size of the total data shared, as it can also include losses due to data collection, preprocessing, and privacy concerns~\cite{li2014cost,laudon1996,jaisingh2008}.

\fyr{
\subsection{Merely monotone game: Social welfare maximization}}
\fr{In the preceding formulation, the uniqueness of the Nash equilibrium relies on the strong monotonicity of $F$, which may be restrictive in practice. When $F$ is merely monotone, the game may admit multiple equilibria, and the question of which equilibrium the agents converge to becomes nontrivial. Motivated by this, we consider a setting where $F$ is merely monotone and the agents seek a best Nash equilibrium with respect to a social welfare loss function $h: \mathcal{N} \to \mathbb{R}$. This leads to the parameterized federated optimization problem.}
\fyrr{\begin{equation}\label{prob:incentive_FL_welfare}
\begin{aligned}
\min_{x}\  & \textstyle \sum_{i=1}^{m} p_i(N^*)\, \mathbb{E}_{\xi_i} [\tilde{f}_i(x,\xi_i)] \\
\text{s.t.}\  
& N^* \triangleq [N_j^*]_{j=1}^m \in \text{arg}\min_{N \in \mathcal{N}^*} h(N_1,\ldots,N_m), \\
& \hbox{where } \mathcal{N}^* \triangleq \prod_{i=1}^m \arg\min_{N_i \in \mathcal{N}_i}\{
c_i(N ) - a_i(N )\}.
\end{aligned}
\end{equation}}
\fr{In problem~\eqref{prob:incentive_FL_welfare}, the constraint set is the solution set of the merely monotone Nash game, i.e., $\mathcal{N}^* = \mathrm{SOL}(\mathcal{N}, F)$, and $N^*$ denotes \fyrr{the} minimizer of the welfare loss $h$ over $\mathcal{N}^*$. We address this in the participation update step in Algorithm~\ref{alg:IncentFedAvg} by incorporating an iterative regularization scheme. Specifically, at round $r$, client $i$ updates its participation strategy via
\begin{align*}
\nabla_{i,r,\lambda_r} &:= \nabla_{N_i} c_i(\hat{N}_r) - \nabla_{N_i} a_i(\hat{N}_r) + \lambda_r \nabla_{N_i} h(\hat{N}_r), \\
N_{i,r+1} &:= \Pi_{\mathcal{N}_i}[N_{i,r} - \tilde{\gamma}_r \nabla_{i,r,\lambda_r}],
\end{align*}
The parameter $\tilde{\gamma}_r > 0$ is a diminishing stepsize and
$\lambda_r > 0$ is a diminishing regularization parameter, with the
regularized mapping $F_{\lambda_r}(N) \triangleq F(N) + \lambda_r
\nabla h(N)$ remaining monotone over $\mathcal{N}$. As $\lambda_r \to
0$, the iteratively regularized scheme drives the iterates toward a
minimizer of $h$ over the equilibrium set $\mathcal{N}^*$, as
established in Lemma~\ref{lem:welfare}; setting $\lambda_r \equiv 0$
recovers the unregularized projected-gradient update of the strongly
monotone case, where uniqueness of the equilibrium makes welfare
selection unnecessary. We make the following assumption for the
analysis of problem~\eqref{prob:incentive_FL_welfare}.

\begin{assumption}[Merely monotone data participation]\label{assum:game_monotone}
\noindent (i) For each $i \in [m]$, for any $N_{-i} \in \mathbb{R}^{m-1}$, the utility loss function $l_i(\bullet, N_{-i})$ is differentiable and convex. (ii) The mapping $F(N) \triangleq (F_i(N))_{i=1}^m$ is merely monotone and $L_F$-Lipschitz continuous, where $F_i(N) \triangleq \nabla_{N_i} l_i(N)$. (iii) The welfare loss function $h$ is strictly convex.
\end{assumption}
Under Assumption~\ref{assum:game_monotone}, the solution set $\mathcal{N}^* = \mathrm{SOL}(\mathcal{N}, F)$ is nonempty and convex, and a best Nash equilibrium $N^*$ is well-defined as a minimizer of $h$ over $\mathcal{N}^*$.
}

\section{Algorithm outline}
To address~\eqref{prob:incentive_FL} \fyrr{and \eqref{prob:incentive_FL_welfare}}, we propose \fyrr{a unified} FL scheme termed Incentive-enabled federated averaging, outlined in Algorithm~\ref{alg:IncentFedAvg}. A key challenge is that the weights $p_i(N^*)$, defined by the Nash equilibrium $N^*$, are unavailable. To address this, we develop a federated framework with two coupled components: (i) Cooperative learning: at round $r$, given an estimate $\hat{N}_r$ of $N^*$, clients perform standard local updates as in FedAvg; (ii) Noncooperative game: in the same round, clients update their participation strategy via a projected gradient method. \fr{The server coordinates by broadcasting $(\hat{x}_r, \hat{N}_r)$ at the start of round $r$, and at the end of round $r$, it collects $x_{i,T_{r+1}}$ and the updated participation levels $N_{i,r+1}$ from all clients. It then computes the updated weights $p_{i,r+1} = N_{i,r+1}/\sum_j N_{j,r+1}$ and forms the aggregated model $\hat{x}_{r+1} = \sum_{i=1}^m p_{i,r+1} x_{i,T_{r+1}}$, which serves as the common initialization for round $r+1$.}

\begin{algorithm}
\caption{Incentive-enabled FedAvg (\texttt{IncentFedAvg})}
\begin{algorithmic}[1]
\State \textbf{Input:} A random vector $\hat{x}_0 \in \mathbb{R}^n$, stepsizes $\gamma,\tilde{\gamma}>0$, and synchronization parameters $\{T_0 (:= 0), T_1, \ldots, T_{R}\}$ \fr{(\colorbox{blue!10}{Strongly monotone} and \colorbox{yellow!22}{Social welfare} settings)}; 
\For{each round $r = 0, 1, 2, \ldots, R-1$}
\State Server broadcasts $\hat{x}_r$, i.e., $x_{i,T_r} := \hat{x}_r$, $\forall i \in [m]$
\State Server sends $\hat{N}_r:=(N_{1,r},\dots ,N_{m,r})$ to the clients
\State Client $i$ generates a random training subset of size $\lceil N_{i,r} \rceil$, denoted by $\tilde{\mathcal{D}}_{i,r}$, $\forall i \in [m]$
\For{$k=T_r, \dots, T_{r+1} -1$}
\State Client $i$ randomly samples $\xi_{i,k} \in \tilde{\mathcal{D}}_{i,r}$ 
\State  $x_{i,k+1} := x_{i,k} - \gamma  \nabla\tilde{f}_i(x_{i,k}, \xi_{i,k})$
\EndFor
\State Client $i \in [m]$  updates participation strategy as 

\colorbox{blue!10}{$N_{i,r+1} := \Pi_{\mathcal{N}_i}[N_{i,r} - \tilde{\gamma}(\nabla_{N_i} c_i(\hat{N}_r) -\nabla_{N_i}a_i (\hat{N}_r))]$}
 
\colorbox{yellow!22}{$\fyr{\nabla_{i,r,\lambda_r} := \nabla_{N_i} c_i(\hat{N}_r) -\nabla_{N_i}a_i (\hat{N}_r) +\lambda_{r}\nabla_{N_i} h(\hat{N}_r)}$}
 
\colorbox{yellow!22}{$N_{i,r+1} := \Pi_{\mathcal{N}_i}[N_{i,r} - \fyrr{\tilde{\gamma}_r}\fyr{\nabla_{i,r,\lambda_r}}]$}
\State   Client $ i \in [m]$ sends $x_{i,T_{r+1}}$ and $N_{i, r+1}$ to the server 
\State Server updates weights, $p_{i,r+1} := \frac{\fm{N_{i,r+1}}}{\sum_{i=1}^m \fm{N_{j,r+1}}}$
\State Server's aggregation: $\hat{x}_{r+1} :=  \sum_{i=1}^m \fm{p_{i,r+1}} x_{i,T_{r+1}}$ 
\EndFor
\end{algorithmic}
\label{alg:IncentFedAvg}
\end{algorithm}
\section{Convergence analysis}\label{sec:prob}
In this section, we analyze the convergence and derive   guarantees for addressing problem~\eqref{prob:incentive_FL} in nonconvex and convex settings. Throughout, we let $p_{i,r}\triangleq \tfrac{N_{i,r}}{\textstyle\sum_{j=1}^m N_{j,r}}$ denote the weights, where $N_{i,r}$ is the training sample size by the client $i$ in round $r$. We define $\fm{p^*_i:=p_i( N^*)}$, $p_{\min}^* \triangleq \min_{i\in [m]} p_i^*$, $N^{\max} \triangleq \max_{i \in [m]} N_i^{\max}$,  $N^{\min} \triangleq \min_{i \in [m]} N_i^{\min}$, and assume that $N^{\min} \geq 1$.  
\begin{definition}\label{def:main_terms}
Consider Algorithm \ref{alg:IncentFedAvg}. Let us define  \fm{$g_{i,k} \triangleq \nabla \tilde{f}_i(x_{i,k}, \xi_{i,k})$,   \quad $\bar{g}_k \triangleq  \textstyle\sum_{i=1}^m \fm{p_{i,r}} g_{i,k}$, \quad $\bar{x}_k \triangleq  \textstyle\sum_{i=1}^m \fm{p_{i,r}}  x_{i,k}$, \quad and $\bar{e}_k \triangleq  \textstyle\sum_{i=1}^m \fm{p_{i,r}} \|x_{i,k} - \bar{x}_k\|^2$ for $k \geq 0$.}
\end{definition}
Here, $\bar{x}_k$ is an auxiliary sequence that denotes the average iterates of the clients at any iteration $k$ and $\bar{e}_k$ denotes an average consensus error at that iteration. Wet let $\mathcal{F}_0 = \{\hat{x}_0\}$ and   define
 $
\mathcal{F}_k \triangleq   \cup_{i=1}^m\{\xi_{i,k-1}\} \cup \mathcal{F}_{k-1}, \  \text{for all } k \geq 1.$
\begin{lemma}\label{lemma:agg} 
Consider Algorithm \ref{alg:IncentFedAvg} and Definition \ref{def:main_terms}. For all $k \geq 0$, we have $
\bar{x}_{k+1} = \bar{x}_k - \gamma \bar{g}_k.$
\end{lemma}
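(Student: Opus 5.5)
The plan is a direct computation from the local update in Algorithm~\ref{alg:IncentFedAvg}, using that the weights are constant within a round. Fix $k \geq 0$ and let $r$ be the unique round index with $T_r \le k \le T_{r+1}-1$. Throughout that round the server does not change the weights, so $\bar{x}_k$, $\bar{g}_k$ and $\bar{x}_{k+1}$ in Definition~\ref{def:main_terms} all use the same vector $(p_{i,r})_{i=1}^m$. This holds provided the weight index is read as the round containing $k$, with $\bar{x}_{k+1}$ taken as the weighted average of the local iterates $x_{i,k+1}$ produced by line~8 before any broadcast.

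\textbf{Step 1 (interior computation).} Multiply the local step $x_{i,k+1} = x_{i,k} - \gamma \nabla\tilde{f}_i(x_{i,k},\xi_{i,k}) = x_{i,k} - \gamma g_{i,k}$ by $p_{i,r}$ and sum over $i\in[m]$. By linearity of the weighted sum,
\[
\textstyle\sum_{i=1}^m p_{i,r} x_{i,k+1} = \sum_{i=1}^m p_{i,r} x_{i,k} - \gamma \sum_{i=1}^m p_{i,r} g_{i,k},
\]
which is exactly $\bar{x}_{k+1} = \bar{x}_k - \gamma \bar{g}_k$. Normalization $\sum_i p_{i,r}=1$ is not even needed here; only the fact that the same weights appear on both sides matters.

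\textbf{Step 2 (round boundaries).} The only delicate point, and the one I expect to need explicit comment, is $k+1 = T_{r+1}$. There the index $T_{r+1}$ is shared between the last local iterate of round $r$ and the broadcast initialization $x_{i,T_{r+1}} := \hat{x}_{r+1}$ of round $r+1$, and the weights switch from $p_{i,r}$ to $p_{i,r+1}$.

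I would resolve this in two parts. First, the recursion at $k = T_{r+1}-1$ is stated with the round-$r$ weights, for which Step~1 applies verbatim. Second, at the start of round $r+1$ all local iterates coincide with $\hat{x}_{r+1}$, so $\bar{x}_{T_{r+1}}=\hat{x}_{r+1}$ under any convex weights. The sequence therefore continues consistently into round $r+1$, and the identity for $k = T_{r+1}$ again follows from Step~1 with weights $p_{i,r+1}$.

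I would flag in the proof that the lemma is a within-round identity. Across rounds, $\hat{x}_{r+1}=\sum_i p_{i,r+1}x_{i,T_{r+1}}$ generally differs from $\sum_i p_{i,r}x_{i,T_{r+1}}$, and that discrepancy is handled separately in the later analysis rather than by this lemma.
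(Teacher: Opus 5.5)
Your Step~1 is exactly the paper's Case~2 ($k\notin\mathcal I$ in the compact representation \eqref{updated_x}). The gap is at the round boundary. Your Step~2 uses two different objects under the single name $\bar x_{T_{r+1}}$: the pre-broadcast average $\sum_i p_{i,r}x_{i,T_{r+1}}$ on the left of the identity at $k=T_{r+1}-1$, and $\hat x_{r+1}$ on the right of the identity at $k=T_{r+1}$. Nothing in your argument links the two. The sentence ``the sequence therefore continues consistently into round $r+1$'' is therefore contradicted by your own last paragraph. What you have shown is a within-round identity, not the lemma for a single sequence $\{\bar x_k\}_{k\ge 0}$.

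That weaker statement does not support how the lemma is used. The proofs of Theorem~\ref{Theorem:main} and Theorem~\ref{thm:convex} (through Lemma~\ref{lem:optgap_convex}) telescope $f(\bar x_k)$ and $c_k\|\bar x_k-x^*\|^2$ over $k=T_{\hat r},\dots,T_R-1$ straight across round boundaries. No later step absorbs jump terms such as $f(\hat x_{r+1})-f(\sum_i p_{i,r}x_{i,T_{r+1}})$, so deferring the discrepancy to ``the later analysis'' does not match the paper.

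The paper handles the boundary with an argument of which you use only half. In \eqref{updated_x}, read as in the paper's Case~1, for $k\in\mathcal I$ every client's next iterate is the aggregate $\sum_j p_{j,r}(x_{j,k}-\gamma g_{j,k})$. This aggregate is formed with the \emph{same} weights that define $\bar x_k$ and $\bar g_k$, so $x_{i,k+1}=\bar x_k-\gamma\bar g_k$ for every $i$. Averaging identical vectors with any normalized weights then gives $\bar x_{k+1}=\bar x_k-\gamma\bar g_k$ for the post-broadcast iterate, and $\bar x_{T_{r+1}}$ is single-valued.

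Your suspicion is nonetheless well founded. This step needs the aggregation weights to coincide with the in-round weights $p_{i,r}$, whereas the server line of Algorithm~\ref{alg:IncentFedAvg} aggregates with $p_{i,r+1}$. In that case $\hat x_{r+1}-(\bar x_{T_{r+1}-1}-\gamma\bar g_{T_{r+1}-1})=\sum_i(p_{i,r+1}-p_{i,r})x_{i,T_{r+1}}$ (with pre-broadcast local iterates), which is nonzero in general. A complete proof must therefore commit to the convention encoded in \eqref{updated_x}, i.e.\ aggregate the round-$r$ local iterates with $p_{i,r}$; your Step~1 plus the identical-iterates observation then finishes it. Under the literal $p_{i,r+1}$ aggregation the cross-round identity fails, and the resulting jump term would have to be carried explicitly through the descent analysis, which the paper does not do either.
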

In the following result, we show that $\hat{N}_r$ converges linearly to the unique NE, $N^*$. Further, $p_{i,r}$ converges linearly to $p^*_i$.
\begin{lemma}\label{linear_rate} 
Let $N_{i,r}$ be generated by Algorithm~\ref{alg:IncentFedAvg} for all $i \in [m]$ and $r \geq 0$. \fr{Let Assumption~\ref{assum:game} hold.} Suppose $\tilde\gamma \leq \tfrac{\mu_F}{L_F^2}$. The following hold. 

\noindent (i) For all $r\geq 0$, $\|\hat{N}_r -N^* \|\leq (1-0.5\mu_F\tilde\gamma)^r\|\hat{N}_0 - N^*\|.$

\noindent (ii)  For all $i\in [m]$,  
$\fy{\left|p_{i,r} - p_i^* \right| } \leq\delta_{r} \triangleq       \fy{\delta_{0}}(1-0.5\mu_F\tilde\gamma)^r ,$
\fy{where $\delta_{0}\triangleq \left(m-1\right)  \|\hat{N}_0-N^*\| \|N^*\|_1^{-1}$.}
\end{lemma}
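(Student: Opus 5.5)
Part (i) is the standard contraction argument for projected gradient on a strongly monotone, Lipschitz VI. Since $N^*$ solves $\mbox{VI}(\mathcal{N},F)$, it is a fixed point of the projected map, $N^* = \Pi_{\mathcal{N}}[N^* - \tilde\gamma F(N^*)]$. Because $\mathcal{N}=\prod_i \mathcal{N}_i$ is a product of intervals, the blockwise update in Algorithm~\ref{alg:IncentFedAvg} (strongly monotone branch) is exactly $\hat N_{r+1} = \Pi_{\mathcal{N}}[\hat N_r - \tilde\gamma F(\hat N_r)]$. Nonexpansiveness of the projection then gives
\begin{align*}
\|\hat N_{r+1}-N^*\|^2 &\le \|\hat N_r - N^* - \tilde\gamma (F(\hat N_r)-F(N^*))\|^2 \\
&\le (1-2\tilde\gamma\mu_F+\tilde\gamma^2L_F^2)\|\hat N_r-N^*\|^2 .
\end{align*}
Here I expand the square and use $\mu_F$-strong monotonicity and $L_F$-Lipschitz continuity from Assumption~\ref{assum:game}. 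With $\tilde\gamma\le \mu_F/L_F^2$ we have $\tilde\gamma^2L_F^2\le \tilde\gamma\mu_F$, so the factor is at most $1-\mu_F\tilde\gamma$. Using $\sqrt{1-t}\le 1-t/2$, this yields the contraction factor $1-0.5\mu_F\tilde\gamma$, and induction on $r$ gives (i).

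For (ii), I bound $|p_i(\hat N_r)-p_i(N^*)|$ by $\|\hat N_r-N^*\|$ times a constant, then apply (i). Write $S_r=\sum_j N_{j,r}$ and $S^*=\|N^*\|_1$; these coincide with $\ell_1$ norms since entries are $\ge N^{\min}\ge 1>0$. Then
\[
p_{i,r}-p_i^*=\frac{N_{i,r}S^*-N_i^*S_r}{S_rS^*}=\frac{(N_{i,r}-N_i^*)\sum_{j\ne i}N_j^* - N_i^*\sum_{j\neq i}(N_{j,r}-N_j^*)}{S_rS^*}.
\]
The next step is to bound this by $\sum_j |N_{j,r}-N_j^*|$, up to the factor $\max(\sum_{j\ne i}N_j^*, N_i^*)/(S_rS^*)\le 1/S_r$. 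I expect the main obstacle to be recovering exactly the constant $(m-1)\|N^*\|_1^{-1}$. My plan is to exploit that the two terms have structure: $p_{i,r}-p_i^*$ is a difference of values of the function $p_i$, which is homogeneous of degree zero. Its partial derivatives satisfy $|\partial_j p_i|\le 1/S$ and $\sum_j|\partial_j p_i| = 2(1-p_i)/S$. Applying the mean value theorem along the segment from $N^*$ to $\hat N_r$ (all points positive), together with $\|\cdot\|_1\le\sqrt m\|\cdot\|$ or a cruder $\ell_1\le (m-1)$-type count over the $j\ne i$ terms, should give $|p_{i,r}-p_i^*|\le (m-1)\|\hat N_r-N^*\|/S$.

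To replace $S$ by $\|N^*\|_1$, I would either bound $S_r$ and the intermediate sums from below appropriately, or run the algebra directly using the form above with $S^*$ in the denominator and $S_r\ge$ the relevant numerator piece. If the constant $(m-1)$ is not exact under one route, I would try the other route, since either constant works up to absorbing the lower bound on $S_r$.

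Finally, combining with (i) gives $|p_{i,r}-p_i^*|\le \delta_0(1-0.5\mu_F\tilde\gamma)^r$.
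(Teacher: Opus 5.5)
Part (i) is correct and matches the paper's argument. Your bound $1-\mu_F\tilde\gamma$ followed by $\sqrt{1-t}\le 1-t/2$ is equivalent to the paper's direct comparison of $1-2\mu_F\tilde\gamma+L_F^2\tilde\gamma^2$ with $(1-0.5\mu_F\tilde\gamma)^2$.

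Part (ii) has a genuine gap: neither route you describe produces $\|N^*\|_1$ in the denominator, so neither proves the stated $\delta_0$.

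\textbf{Your decomposition.} In your form the differences are weighted by $\sum_{j\neq i}N_j^*$ and $N_i^*$. These are quantities at $N^*$, bounded by $S^*$, not by $S_r$. The numerator is therefore at most $(m-1)\|\hat N_r-N^*\|\,S^*$, and dividing by $S_rS^*$ gives $(m-1)\|\hat N_r-N^*\|/S_r$. Since $S_r$ can be smaller than $S^*$, this does not imply the claim. ``Absorbing a lower bound on $S_r$'' (e.g.\ $S_r\ge mN^{\min}$) changes $\delta_0$, so it proves a different lemma.

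\textbf{The mean-value route.} It has the same defect: the denominator is $\tilde S$ at an intermediate point of the segment, which is only guaranteed to be $\ge\min\{S_r,S^*\}$. Also, your identity $\sum_j|\partial_j p_i|=2(1-p_i)/S$ is the column sum $\sum_j|\partial p_j/\partial N_i|$. The row sum you actually need is $(S+(m-2)N_i)/S^2$.

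\textbf{The missing step.} Center the decomposition at the iterate rather than at $N^*$, i.e.\ add and subtract $N_{i,r}\sum_{j\neq i}N_{j,r}$:
\[
N_{i,r}S^*-N_i^*S_r \;=\; N_{i,r}\textstyle\sum_{j\neq i}(N_j^*-N_{j,r}) \;+\; (N_{i,r}-N_i^*)\textstyle\sum_{j\neq i}N_{j,r}.
\]
For $m\ge 2$, its absolute value is at most
\[
\|\hat N_r-N^*\|\big((m-1)N_{i,r}+\textstyle\sum_{j\neq i}N_{j,r}\big)\;\le\;(m-1)\|\hat N_r-N^*\|\,S_r .
\]
Now $S_r$ cancels against the denominator $S_rS^*$, leaving exactly $(m-1)\|\hat N_r-N^*\|/\|N^*\|_1$. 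Part (i) then gives $\delta_r=\delta_0(1-0.5\mu_F\tilde\gamma)^r$. The case $m=1$ is trivial since $p_{i,r}=p_i^*=1$.

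Both decompositions are valid, so together they give the denominator $\max\{S_r,S^*\}$. The one centered at the iterate is the one the lemma needs.
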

\begin{proof}
\noindent {(i)} \fyrr{The proof is provided in Appendix.}

\noindent (ii) Notably, when $m=1$, the result in (ii) holds true, in view of $p_{i,r} = p_i^* =1$. 
For a fixed $i$, we have
\begin{align}\label{eqn:p_i_common_factor}
\left|p_{i,r} - p^*_i \right|= \left|\tfrac{{N_{i,r}}  \textstyle\sum_{j=1}^m N_j^* - N_i^* \textstyle\sum_{j=1}^m N_{j,r}}{(\textstyle\sum_{j=1}^m N_{j,r})(\textstyle\sum_{j=1}^m N_j^*)}\right|.
\end{align}
Rearranging the numerator of the right-hand side in the preceding relation, we have 
\begin{align*}
& N_{i,r} \textstyle\sum_{j=1}^m N_j^* - N_i^* \textstyle\sum_{j=1}^m N_{j,r}   \\ & = N_{i,r} \textstyle\sum_{j \neq i} N_j^* - N_i^* \textstyle\sum_{j \neq i} N_{j,r},
\end{align*}
where $N^*_i N_{i,r}$ is canceled.  Adding and subtracting $N_{i,r}\textstyle\sum_{j \neq i} N_{j,r}$, we obtain
\begin{align*}
&N_{i,r} \textstyle\sum_{j=1}^m N_j^* - N_i^* \textstyle\sum_{j=1}^m N_{j,r} \\ &= N_{i,r} \textstyle\sum_{j \neq i} (N_j^* - N_{j,r}) + (N_{i,r} - N^*_i)\textstyle\sum_{j \neq i} N_{j,r}.
\end{align*}
Invoking the result in part (i), we have for all $i\in [m]$, \fm{$|N_{i,r}-N^*_i| \leq \|\hat{N}_r-N^*\| \leq \|\hat{N}_0-N^*\|\rho^r,$}
where $\rho\triangleq 1-0.5\mu_F\tilde\gamma$.  From the two preceding relations, we obtain
\begin{align*}
& \left|N_{i,r} \textstyle\sum_{j=1}^m N_j^* - N_i^* \textstyle\sum_{j=1}^m N_{j,r}  \right|  \\ & \leq N_{i,r} \textstyle\sum_{j \neq i} \left| N_j^* - N_{j,r} \right| + \left| N_{i,r} - N^*_i \right| \textstyle\sum_{j \neq i} N_{j,r} \\
& \leq N_{i,r} \textstyle\sum_{j \neq i} \|\hat{N}_0-N^*\|\rho^r+ \|\hat{N}_0-N^*\|\rho^r\textstyle\sum_{j \neq i} N_{j,r}\\
& \leq \left(m-1\right)  \|\hat{N}_0-N^*\|\rho^r N_{i,r} \\ &+ (m-1)\|\hat{N}_0-N^*\|\rho^r \textstyle\sum_{j \neq i} N_{j,r}\\ 
&= \left(m-1\right)  \|\hat{N}_0-N^*\|\rho^r \textstyle\sum_{j=1}^m N_{j,r}.
\end{align*}
The result follows by invoking \eqref{eqn:p_i_common_factor}.
\end{proof}

\fyrr{The following lemma extends the previous result to the merely monotone setting.}
\begin{lemma}\label{lem:welfare}
\fr{Let $N_{i,r}$ be generated by Algorithm~\ref{alg:IncentFedAvg} for all
$i \in [m]$ and $r \geq 0$. Let Assumption~\ref{assum:game_monotone} hold.
Let $C_F \triangleq \sup_{N \in \mathcal{N}}\|F(N)\|$,
$C_H \triangleq \sup_{N \in \mathcal{N}}\|\nabla h(N)\|$, and
$D_{\mathcal{N}}^2 \triangleq \sup_{N',N'' \in \mathcal{N}}
\tfrac{1}{2}\|N' - N''\|^2$, all finite by the compactness of
$\mathcal{N}$.} \fr{Let $\{\tilde{\gamma}_r\}_{r\ge 0}$ and
$\{\lambda_r\}_{r\ge 0}$ be positive sequences with $\{\lambda_r\}$
nonincreasing. Define the weighted-average iterate
weights}
\fyrr{$
\bar{N}_R \triangleq
{\sum_{r=0}^{R-1}\tilde{\gamma}_r\lambda_r\,\hat{N}_r}
     /{\left(\sum_{j=0}^{R-1}\tilde{\gamma}_j\lambda_j\right)}.$
  Then:}
 
\fr{\noindent (i) [Welfare-gap upper bound] For all $R\ge 1$,
\begin{align*}
h(\bar{N}_R) - h(N^*)
\;\le\;
\frac{2 D_{\mathcal{N}}^2
+ (C_F + \lambda_0 C_H)^2 \sum_{r=0}^{R-1}\tilde{\gamma}_r^2}
{2\sum_{r=0}^{R-1}\tilde{\gamma}_r\lambda_r},
\end{align*}} \fyrr{and $
h(\bar{N}_R) - h(N^*) \ge -C_H \,\mathrm{dist}\!\left(\bar{N}_R, \mathrm{SOL}(\mathcal{N}, F)\right).$}

\fr{\noindent (ii) [NE-infeasibility bound]
For all $R \geq 1$,
\begin{align*}
&0 \leq \mathrm{Gap}(\bar{N}_R, \mathcal{N}, F)
\leq \tfrac{\lambda_0 D_{\mathcal{N}}^2}{\sum_{r=0}^{R-1}\tilde\gamma_r\lambda_r}+\tfrac{(C_F + \lambda_0 C_H)^2 \sum_{r=0}^{R-1}\tilde{\gamma}_r^2 \lambda_r}{2\sum_{r=0}^{R-1}\tilde\gamma_r\lambda_r}+\tfrac{ C_H\sqrt{2} D_{\mathcal N}\sum_{r=0}^{R-1}\tilde\gamma_r \lambda_r^2}{\sum_{r=0}^{R-1}\tilde\gamma_r\lambda_r},
\end{align*}}
 \fyrr{where $\mathrm{Gap}(\bullet, \mathcal N, F)$ denotes the dual gap function associated with $\mbox{VI}(\mathcal{N},F)$.}

\fr{\noindent (iii) Suppose $\sum_{r=0}^{\infty}\tilde{\gamma}_r\lambda_r = \infty$ as $R\to\infty$, 
 $\dfrac{\sum_{r=0}^{R-1}\tilde\gamma_r\lambda_r^2}{\sum_{r=0}^{R-1}\tilde\gamma_r\lambda_r}\to 0$,
and $\dfrac{\sum_{r=0}^{R-1}\tilde\gamma_r^2, }{\sum_{r=0}^{R-1}\tilde\gamma_r\lambda_r}\to 0$. 
\fyrr{Then, $\lim_{R\to \infty}\bar{N}_R$ exists and is equal to $N^*$.} In particular,
the stepsize and the regularization \fyrr{parameter
$
\tilde\gamma_r =  {\tilde\gamma_0}{(r+1)^{-a}},
\lambda_r =  {\lambda_0}{(r+1)^{-b}},$ where
$0 < b < a$, $a + b < 1,$  satisfy} the three rate conditions.}

\noindent \fyrr{(iv) For all $i \in [m]$, $|p_{i,r}-p_i^*|\leq \hat{\delta}_r \triangleq \delta_0\|\hat{N}_r-N^*\|$ where $\delta_{0}$ is given by Lemma~\ref{linear_rate}. Further, if the limit of $\hat{N}_r$ exists, then $\lim_{r\to \infty}|p_{i,r}-p_{i}^*| =0$ for all $i \in [m]$.}
\end{lemma}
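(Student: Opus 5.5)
The approach is the standard one-step analysis of the projected iteratively regularized gradient method. It is applied to the participation iterates $\hat N_r$, with the update written as $\hat N_{r+1}=\Pi_{\mathcal N}[\hat N_r-\tilde\gamma_r F_{\lambda_r}(\hat N_r)]$, where $F_{\lambda_r}=F+\lambda_r\nabla h$.

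\textbf{Basic inequality.} Fix an arbitrary $N\in\mathcal N$. Nonexpansiveness of the projection gives
\[
\|\hat N_{r+1}-N\|^2\le \|\hat N_r-N\|^2-2\tilde\gamma_r F_{\lambda_r}(\hat N_r)^\top(\hat N_r-N)+\tilde\gamma_r^2(C_F+\lambda_0C_H)^2 ,
\]
where $\|F_{\lambda_r}\|\le C_F+\lambda_rC_H\le C_F+\lambda_0C_H$ because $\lambda_r$ is nonincreasing. Next we lower bound the inner product. Monotonicity gives $F(\hat N_r)^\top(\hat N_r-N)\ge F(N)^\top(\hat N_r-N)$. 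Convexity of $h$ gives $\lambda_r\nabla h(\hat N_r)^\top(\hat N_r-N)\ge\lambda_r(h(\hat N_r)-h(N))$.

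\textbf{Part (i).} Take $N=N^*$. Since $N^*\in\mathrm{SOL}(\mathcal N,F)$, we have $F(N^*)^\top(\hat N_r-N^*)\ge0$, and this term can be dropped. We are left with
\[
2\tilde\gamma_r\lambda_r\,(h(\hat N_r)-h(N^*))\le\|\hat N_r-N^*\|^2-\|\hat N_{r+1}-N^*\|^2+\tilde\gamma_r^2(C_F+\lambda_0C_H)^2 .
\]
Telescoping over $r=0,\dots,R-1$ bounds the first-term sum by $\|\hat N_0-N^*\|^2\le 2D_{\mathcal N}^2$. Dividing by $2\sum\tilde\gamma_r\lambda_r$ and applying Jensen to the weighted average $\bar N_R$ yields the upper bound.

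For the lower bound, let $P=\Pi_{\mathrm{SOL}}[\bar N_R]$. Then $h(\bar N_R)-h(N^*)\ge h(\bar N_R)-h(P)$, since $h(P)\ge h(N^*)$. Convexity and the mean-value bound give $h(\bar N_R)-h(P)\ge -C_H\|\bar N_R-P\|$.

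\textbf{Part (ii).} Here the gap is the dual gap $\mathrm{Gap}(\bar N,\mathcal N,F)=\sup_{N\in\mathcal N}F(N)^\top(\bar N-N)$, which is nonnegative (take $N=\bar N$). For arbitrary $N$, divide the one-step inequality by $\lambda_r$ instead. The troublesome term is $\tilde\gamma_r\lambda_r(h(\hat N_r)-h(N))$.

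\emph{Attempt 1.} Keep the $F$-term weighted by $\tilde\gamma_r$ and bound $|h(\hat N_r)-h(N)|\le C_H\sqrt2 D_{\mathcal N}$. This produces $\sum\tilde\gamma_r\lambda_r^2$ after multiplying through by the weight $\lambda_r$.

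\emph{Weighting.} To match the $\tilde\gamma_r\lambda_r$ weights in $\bar N_R$, multiply the one-step inequality by $\lambda_r$ (rather than dividing). Then telescope $\lambda_r(\|\hat N_r-N\|^2-\|\hat N_{r+1}-N\|^2)$ using $\lambda_{r+1}\le\lambda_r$. This is an Abel summation giving at most $\lambda_0\cdot 2D^2_{\mathcal N}$, which produces the first term. The variance term becomes $\sum\tilde\gamma_r^2\lambda_r$, and the $h$-term gives $2C_H\sqrt2D_{\mathcal N}\sum\tilde\gamma_r\lambda_r^2$.

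\emph{Conclusion.} Linearity of $F(N)^\top(\cdot-N)$ in the averaged point gives $F(N)^\top(\bar N_R-N)\le$ RHS. Taking the supremum over $N$ yields (ii). Getting the bookkeeping of the weights consistent here is the main obstacle; I expect to need exactly this $\lambda_r$-weighting and the monotone $\lambda_r$.

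\textbf{Part (iii).} Under the rate conditions, (ii) forces $\mathrm{Gap}(\bar N_R)\to0$. Every cluster point of $\bar N_R$ (which exists by compactness) has zero dual gap. For continuous monotone $F$ on a convex compact set, zero dual gap characterizes the (Minty, hence Stampacchia) solutions, so every cluster point lies in $\mathrm{SOL}$.

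From (i), $\limsup (h(\bar N_R)-h(N^*))\le0$. Continuity of $h$ then gives $h(\bar N)\le h(N^*)$ at any cluster point $\bar N$. Since $\bar N\in\mathrm{SOL}$, it is a minimizer of $h$ over $\mathrm{SOL}$, and strict convexity makes that minimizer unique, namely $N^*$. All cluster points coincide, so the limit exists and equals $N^*$.

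For the polynomial choice:
\begin{itemize}
\item $\sum\tilde\gamma_r\lambda_r\sim R^{1-a-b}\to\infty$;
\item $\sum\tilde\gamma_r\lambda_r^2\big/\sum\tilde\gamma_r\lambda_r\sim R^{-b}$ (or a log factor) $\to0$;
\item $\sum\tilde\gamma_r^2\big/\sum\tilde\gamma_r\lambda_r\sim R^{\max(1-2a,0)}/R^{1-a-b}$ (up to logs) $\to0$, because $b<a$ and $a+b<1$.
\end{itemize}

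\textbf{Part (iv).} This repeats the algebra of Lemma~\ref{linear_rate}(ii) verbatim, with $\|\hat N_0-N^*\|\rho^r$ replaced by $\|\hat N_r-N^*\|$. It gives $|p_{i,r}-p_i^*|\le (m-1)\|\hat N_r-N^*\|\,\|N^*\|_1^{-1}$, which is the claimed bound up to the stated constant. If $\hat N_r$ converges, its limit must be $N^*$ for the bound to be meaningful; the second claim then follows by continuity of $p_i$ on $\mathcal N$, using $N^{\min}\ge1$.
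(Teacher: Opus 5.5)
Parts (i)--(iii) are correct and follow the paper's proof essentially step by step. The one-step inequality is the same: you get it from nonexpansiveness of $\Pi_{\mathcal N}$, the paper from the projection inequality plus Young, with identical outcome. So are the telescoping and Jensen step in (i), the $\lambda_r$-multiplication with $\lambda_{r+1}\le\lambda_r$ in (ii), and the compactness/zero-gap/Minty argument in (iii). Your lower bound in (i) uses $h(P)\ge h(N^*)$ plus convexity at $P$; the paper uses convexity at $N^*$ plus $\nabla h(N^*)^\top(P-N^*)\ge 0$. Both are fine. Your explicit use of strict convexity in (iii) to make all cluster points coincide fills a step the paper leaves implicit. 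In (ii) your text first says to divide by $\lambda_r$; only the multiplied version produces the stated weights.

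The genuine gap is the second claim of (iv). Writing that the limit of $\hat N_r$ ``must be $N^*$ for the bound to be meaningful'' assumes exactly what has to be proved. The missing step is a weighted Ces\`aro (Toeplitz) argument. If $\hat N_r\to L$ and $\sum_r\tilde\gamma_r\lambda_r=\infty$, then $\bar N_R\to L$ as well: split the weighted sum at an index beyond which $\|\hat N_r-L\|<\epsilon$, and the head is a bounded numerator over a diverging denominator. Since (iii) gives $\bar N_R\to N^*$, you get $L=N^*$ and hence $|p_{i,r}-p_i^*|\to 0$; this is the paper's argument. It requires the rate conditions of (iii), which you must therefore carry into (iv). Without $\sum_r\tilde\gamma_r\lambda_r=\infty$ the iterates can converge to a non-equilibrium point; for instance, if $\sum_r\tilde\gamma_r<\infty$ they simply stop moving.

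Also, in the first claim of (iv), ``up to the stated constant'' hides a real discrepancy. The algebra of Lemma~\ref{linear_rate}(ii) yields the constant $(m-1)\|N^*\|_1^{-1}$, whereas $\delta_0$ carries an extra factor $\|\hat N_0-N^*\|$. So the stated form follows only when $\|\hat N_0-N^*\|\ge 1$, and it can fail otherwise. Take $m=2$, $F(N)=(N_1-N_2,\,N_2-N_1)$ on $[1,3]^2$, $h(N)=\tfrac12\|N-(3,1)\|^2$, and $\hat N_0=N^*=(2,2)$. Then $\delta_0=0$, but $\hat N_1=\Pi_{\mathcal N}[(2+s,\,2-s)]$ with $s=\tilde\gamma_0\lambda_0>0$, so $p_{1,1}>\tfrac12=p_1^*$. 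State the bound with the constant you actually derive; the paper's omitted proof has the same issue.
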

 
\begin{proof}
\fr{Fix an arbitrary $N \in \mathcal{N}$. From the projection theorem, \fyrr{noting that $\hat{N}_{r+1} = \Pi_{\mathcal{N}}[u]$, we have}
\begin{align*}
(\hat{N}_{r+1} - \hat{N}_r + \tilde{\gamma}_r(F(\hat{N}_r)
+ \lambda_r\nabla h(\hat{N}_r)))^\top(N - \hat{N}_{r+1}) \geq 0,
\end{align*}
which rearranges to
\begin{align}\label{eqn:proj_rearranged}
&\tilde{\gamma}_r(F(\hat{N}_r) + \lambda_r\nabla h(\hat{N}_r))^\top
(\hat{N}_{r+1} - N)
\notag\\
&\leq (\hat{N}_r - \hat{N}_{r+1})^\top(\hat{N}_{r+1} - N).
\end{align}
Using $2a^\top b = \|a+b\|^2 - \|a\|^2 - \|b\|^2$ with
$a := \hat{N}_r - \hat{N}_{r+1}$, $b := \hat{N}_{r+1} - N$,
splitting $\hat{N}_{r+1}-N = (\hat N_r - N) - (\hat N_r-\hat N_{r+1})$ on
the left of \eqref{eqn:proj_rearranged}, and applying Young's inequality
$2a^\top b\le\|a\|^2+\|b\|^2$ to the cross-term yields, after the
$\|\hat{N}_r - \hat{N}_{r+1}\|^2$ terms cancel,
\begin{align}\label{eqn:after_young}
&2\tilde{\gamma}_r(F(\hat{N}_r) + \lambda_r\nabla h(\hat{N}_r))^\top
(\hat{N}_r - N)
\leq \|\hat{N}_r - N\|^2
\notag\\
&- \|\hat{N}_{r+1} - N\|^2
+ \tilde{\gamma}_r^2\|F(\hat{N}_r) + \lambda_r\nabla h(\hat{N}_r)\|^2.
\end{align}
By the triangle inequality, $\|F(\hat N_r)+\lambda_r\nabla h(\hat N_r)\|
\le C_F + \lambda_r C_H \le C_F + \lambda_0 C_H$, so
\begin{align}\label{eqn:compact_bound}
\tilde{\gamma}_r^2\|F(\hat{N}_r) + \lambda_r\nabla h(\hat{N}_r)\|^2
\le \tilde{\gamma}_r^2(C_F + \lambda_0 C_H)^2.
\end{align}
}
 
\fr{\noindent(i)
Set $N := N^*$ in \eqref{eqn:after_young}. Combining with
\eqref{eqn:compact_bound},
\begin{align}\label{eqn:8_at_Nstar}
&2\tilde{\gamma}_r(F(\hat{N}_r) + \lambda_r\nabla h(\hat{N}_r))^\top
(\hat{N}_r - N^*)
\le \|\hat{N}_r - N^*\|^2\notag\\
&- \|\hat{N}_{r+1} - N^*\|^2
+ \tilde{\gamma}_r^2(C_F + \lambda_0 C_H)^2.
\end{align}
Expanding the left-hand side as
$2\tilde\gamma_r F(\hat{N}_r)^\top(\hat{N}_r - N^*)
+ 2\tilde\gamma_r\lambda_r \nabla h(\hat{N}_r)^\top(\hat{N}_r - N^*)$,
using monotonicity of $F$ and $N^* \in \mathcal{N}^* \subseteq
\mathrm{SOL}(\mathcal{N},F)$ to obtain
$F(\hat{N}_r)^\top(\hat{N}_r - N^*) \ge 0$, and using convexity of $h$
to obtain $\nabla h(\hat{N}_r)^\top(\hat{N}_r - N^*) \ge h(\hat{N}_r)
-h(N^*)$, we drop the nonnegative term \fyrr{and} obtain
\begin{align}\label{eqn:per-round-h}
&2\tilde{\gamma}_r\lambda_r(h(\hat{N}_r) - h(N^*))
\le \|\hat{N}_r - N^*\|^2
- \|\hat{N}_{r+1} - N^*\|^2
\notag\\
&+ \tilde{\gamma}_r^2(C_F + \lambda_0 C_H)^2.
\end{align}
Summing \eqref{eqn:per-round-h} over $r=0,\ldots,R-1$, the norm squared
terms telescope. Using $\|\hat N_0 - N^*\|^2 \le 2 D_{\mathcal{N}}^2$, \fyrr{we obtain}
\begin{align*}
&2\textstyle\sum_{r=0}^{R-1}\tilde{\gamma}_r\lambda_r(h(\hat{N}_r) - h(N^*))
\le 2 D_{\mathcal{N}}^2
\\
&+ (C_F + \lambda_0 C_H)^2\textstyle\sum_{r=0}^{R-1}\tilde{\gamma}_r^2.
\end{align*}
Dividing both sides by $2\sum_{r=0}^{R-1}\tilde{\gamma}_r\lambda_r > 0$
and applying Jensen's inequality for the convex function $h$ with
weights $w_r \triangleq \tilde{\gamma}_r\lambda_r/
\sum_{s=0}^{R-1}\tilde{\gamma}_s\lambda_s$ yields the bound in (i).
}

\fr{For the lower bound, let $\hat N$ denote the projection of
$\bar N_R$ onto $\mathrm{SOL}(\mathcal{N}, \fyrr{F})$, so $\|\bar N_R - \hat N\|
= \mathrm{dist}(\bar N_R, \mathrm{SOL}(\mathcal{N}, \fyrr{F}))$. By convexity
of $h$ at $N^*$,
\begin{align*}
&h(\bar N_R) - h(N^*)
\geq \nabla h(N^*)^\top (\bar N_R - N^*) \\
&= \nabla h(N^*)^\top (\bar N_R - \hat N)
+ \nabla h(N^*)^\top (\hat N - N^*).
\end{align*}
Since $N^* \in \mathrm{SOL}(\mathcal{N}, \fyrr{F})$ and $\hat N \in \mathcal{N}$,
the \fyrr{optimality} condition at $N^*$ gives
$\nabla h(N^*)^\top (\hat N - N^*) \ge 0$, so this term can be dropped.
Bounding the remaining term by Cauchy--Schwarz with $\|\nabla h(N^*)\|
\le C_H$,
\begin{align*}
&h(\bar N_R) - h(N^*)
\geq -C_H \|\bar N_R - \hat N\|
\\
&= -C_H \,\mathrm{dist}(\bar N_R, \mathrm{SOL}(\mathcal{N}, \fyrr{F})),
\end{align*}
which is the lower bound claimed in (i).
}
 
\fr{\noindent (ii)
\fyrr{Consider} \eqref{eqn:after_young} with $N$ arbitrary in $\mathcal{N}$.
Combining \eqref{eqn:after_young} and \eqref{eqn:compact_bound} and
expanding the left-hand side,
\begin{align*}
&2\tilde\gamma_r F(\hat N_r)^\top(\hat N_r - N)
+ 2\tilde\gamma_r\lambda_r \nabla h(\hat N_r)^\top(\hat N_r - N)
\\
&\leq \|\hat{N}_r - N\|^2
- \|\hat{N}_{r+1} - N\|^2
+ \tilde{\gamma}_r^2(C_F + \lambda_0 C_H)^2.
\end{align*}
By monotonicity of $F$,
$F(\hat N_r)^\top(\hat N_r - N) \ge F(N)^\top(\hat N_r - N)$.
By \fyrr{the} Cauchy--Schwarz \fyrr{inequality}, with $\|\nabla h(\hat N_r)\|\le C_H$ and
$\|\hat N_r - N\|\le \sqrt{2}D_{\mathcal N}$,
$2\tilde\gamma_r\lambda_r \nabla h(\hat N_r)^\top(\hat N_r - N)
\ge -2\tilde\gamma_r\lambda_r C_H \sqrt{2} D_{\mathcal N}$. Substituting
both and rearranging,
\begin{align}\label{eqn:F-perround}
&2\tilde\gamma_r F(N)^\top(\hat N_r - N)
\le
\|\hat{N}_r - N\|^2
- \|\hat{N}_{r+1} - N\|^2
\notag\\
&+ \tilde{\gamma}_r^2(C_F + \lambda_0 C_H)^2
+ 2\tilde\gamma_r\lambda_r C_H \sqrt{2} D_{\mathcal N}.
\end{align}
Multiply both sides of \eqref{eqn:F-perround} by $\lambda_r > 0$,
\begin{align}\label{eqn:F-times-lambda}
&2\tilde\gamma_r\lambda_r F(N)^\top(\hat N_r - N)
\le
\lambda_r\|\hat N_r - N\|^2
- \lambda_r\|\hat N_{r+1} - N\|^2
\notag\\
&+ \tilde\gamma_r^2 \lambda_r (C_F + \lambda_0 C_H)^2
+ 2\tilde\gamma_r \lambda_r^2 C_H \sqrt{2} D_{\mathcal N}.
\end{align}
Since $\{\lambda_r\}$ is nonincreasing,
$-\lambda_r\|\hat N_{r+1} - N\|^2 \le -\lambda_{r+1}\|\hat N_{r+1}-N\|^2$.
Substituting this into \eqref{eqn:F-times-lambda} and summing
$r = 0,\ldots,R-1$, the norm squared terms telescope to
$\lambda_0 \|\hat N_0 - N\|^2 - \lambda_R \|\hat N_R - N\|^2$.
Dropping $-\lambda_R \|\hat N_R - N\|^2 \le 0$ and using
$\|\hat N_0 - N\|^2 \le 2 D_{\mathcal N}^2$, we obtain
\begin{align*}
&2 F(N)^\top \textstyle\sum_{r=0}^{R-1}\tilde\gamma_r\lambda_r(\hat N_r - N)
\leq
2\lambda_0 D_{\mathcal{N}}^2
\\
&+ (C_F + \lambda_0 C_H)^2\textstyle\sum_{r=0}^{R-1}\tilde\gamma_r^2 \lambda_r
+ 2 C_H\sqrt{2} D_{\mathcal N}\textstyle\sum_{r=0}^{R-1}\tilde\gamma_r \lambda_r^2.
\end{align*}
Dividing by $2\sum_{r=0}^{R-1}\tilde\gamma_r\lambda_r > 0$ and recognizing
the average $\bar N_R = \sum \tilde\gamma_r\lambda_r \hat N_r/
\sum\tilde\gamma_r\lambda_r$,
\begin{align*}
&F(N)^\top(\bar N_R - N)
\leq \tfrac{\lambda_0 D_{\mathcal{N}}^2}{\sum_{r=0}^{R-1}\tilde\gamma_r\lambda_r}\\
&+\tfrac{(C_F + \lambda_0 C_H)^2 \sum_{r=0}^{R-1}\tilde{\gamma}_r^2 \lambda_r}{2\sum_{r=0}^{R-1}\tilde\gamma_r\lambda_r}+\tfrac{ C_H\sqrt{2} D_{\mathcal N}\sum_{r=0}^{R-1}\tilde\gamma_r \lambda_r^2}{\sum_{r=0}^{R-1}\tilde\gamma_r\lambda_r}.
\end{align*}
The right-hand side is independent of $N$. Taking the supremum over
$N \in \mathcal{N}$ on the left and invoking the definition
$\mathrm{Gap}(\bar N_R, \mathcal N, F) = \sup_{N\in\mathcal N}
F(N)^\top(\bar N_R - N)$ yields the upper bound stated in (ii).
The lower bound $\mathrm{Gap}(\bar N_R, \mathcal{N}, F) \ge 0$
holds because $\bar N_R \in \mathcal{N}$ (a convex combination of
$\hat N_r \in \mathcal{N}$). Taking $N := \bar N_R$ inside the supremum
defining the gap gives $F(\bar N_R)^\top(\bar N_R - \bar N_R) = 0$,
so the supremum is at least zero.
}
 
\fr{\noindent(iii)
Note that $\bar N_R \in \mathcal{N}$ for all $R\ge 1$ (a convex
combination of $\hat N_r \in \mathcal N$) and $\mathcal N$ is compact.
Invoking the Bolzano--Weierstrass theorem, $\{\bar N_R\}$ has at least
one accumulation point. Let us denote an arbitrary convergent
subsequence of $\{\bar N_R\}$ by $\{\bar N_{R_i}\}$ and let $\hat N$
denote the accumulation point.
}

\fr{Recall that $\mathrm{Gap}(\bullet, \mathcal{N},F)$ is a continuous
function \cite[Ch.~2]{facchinei2003finite}. Under the rate conditions
assumed in (iii), each of the three terms in the upper bound of (ii)
tends to zero as $R\to\infty$. \fyrr{By taking} the limit along
$\{\bar N_{R_i}\}$ we obtain
$\mathrm{Gap}(\hat N, \mathcal{N}, F) = 0$. Invoking
\cite[Prop.~2.3.15]{facchinei2003finite}, we have $\hat N \in
\mathcal{N}^*$.
}

\fr{From part (i), the right-hand side of the welfare-gap bound also
tends to zero under the rate conditions in (iii). The first term
$2D_{\mathcal N}^2/(2\sum_{r=0}^{R-1}\tilde\gamma_r\lambda_r) \to 0$ by
$\sum_{r=0}^{\infty}\tilde\gamma_r\lambda_r=\infty$, and the second
term $(C_F+\lambda_0 C_H)^2 \sum_{r=0}^{R-1}\tilde\gamma_r^2/
(2\sum_{r=0}^{R-1}\tilde\gamma_r\lambda_r) \to 0$ by the second rate
condition. The function $h$ is continuous, so taking the limit along
$\{\bar N_{R_i}\}$ on the bound in (i) yields $h(\hat N) - h(N^*) \le 0$,
i.e., $h(\hat N) \le h(N^*)$. Combined with $\hat N \in \mathcal{N}^*$,
this gives $\hat N \in \arg\min_{N \in \mathcal{N}^*} h(N)$, which
establishes the convergence claim in (iii).
}

\noindent \fyrr{(iv) The proof of $|p_{i,r}-p_i^*|\leq \hat{\delta}_r \triangleq \delta_0\|\hat{N}_r-N^*\|$ can be done in a similar vein to the proof in  Lemma~\ref{linear_rate} (ii) and is omitted. To show the asymptotic result, note that since the limit of $\hat{N}_r$ exists, invoking the Ces\`aro mean theorem and that the weighted average iterate $\bar{N}_r$ converges to $N^*$, we have that $\hat{N}_r \to N^*$ as $r \to \infty$. Thus, $\lim_{r\to \infty }|p_{i,r}-p_i^*|\leq  \delta_0\lim_{r\to \infty }\|\hat{N}_r-N^*\| =0$.}
\end{proof}

\subsection{Main results \fr{(Strongly monotone game)}} In this section, we present the convergence guarantees for addressing the problem~\eqref{prob:incentive_FL} in both convex and nonconvex settings. In the nonconvex setting, we make the following assumption, which has been utilized in the analysis of the federated averaging method in the non-iid setting~\cite{karimireddy2020scaffold}.
\begin{assumption}[Bounded gradient dissimilarity]\label{assum:bgd} 
There exist constants $G \geq 0$ and $B \geq 0$ such that    \fy{$
\tfrac{1}{m}\textstyle \sum_{i=1}^{m} \| \nabla f_i(x)\|^2 \leq G^2 + B^2\|\nabla f(x)\|^2, $ for all $x \in \mathbb{R}^n$.}
\end{assumption} 
\begin{theorem}[\fr{Nonconvex loss, strongly monotone game}] \label{Theorem:main}  
Consider Algorithm \ref{alg:IncentFedAvg} \fyrr{under} Assumptions \ref{assum:main}, \ref{assum:game},  and \ref{assum:bgd}.

\noindent (i) {[Error bounds]} Suppose $\gamma \leq \left(\min\left\{\frac{1}{32}, \tfrac{\sqrt{2}}{\sqrt{153} m B H  }\right\}\right)\frac{1}{L}$, $\tilde\gamma \leq \frac{\mu_F}{L_F^2}$, and $\hat{r}$ is an integer such that \fm{$   \tfrac{\ln(17m^2\delta_{0}^2  B^2)}{2\ln(1/\rho)} \leq  \hat{r}\leq R-1$} where $\rho=(1-0.5\mu_F\tilde\gamma)$ and $R\geq 1$. Let $k^*$ denote an integer drawn uniformly at random from $\{T_{\hat{r}},\ldots,T_R-1\}$. 
 Then, the following hold. 

\noindent (i-1) [Optimality bound] We have 
\begin{align*}
\mathbb{E}[\|\nabla f(\bar{x}_{k^*})\|^2] &\leq 
16(\gamma (T_R-T_{\hat r}))^{-1}(\mathbb{E}[f(\bar{x}_{\hat{r}})] - f^*)\\
&+  8 L \gamma  \nu^2 +  \tfrac{153}{2} \left(\nu^2 + mG^2\right) m L^2 H^2\gamma^2 \\
&+   16  m^2 \fy{\delta_{0}}^2  G^2 \left(1+2L\gamma \right)  \tfrac{\rho^{2\hat{r}}-\rho^{2R}}{(1-\rho^2)(R-\hat{r})} .
\end{align*}

\noindent (i-2) [Equilibrium infeasibility bound] Let $r^*$ denote the  round index associated with $k^*$.  We have 
\begin{align*} \mathbb{E}[\|\hat{N}_{r^*}-N^*\|^2] \leq  \tfrac{\rho^{2\hat r}-\rho^{2R}}{(1-\rho^2)(R-\hat{r})}\|\hat{N}_0 - N^*\|^2.\end{align*}

\noindent (ii)  [\fy{Communication complexity}] Let $\varepsilon > 0$ be an arbitrary scalar and $R_{\varepsilon}$ denote the number of communication rounds such that $\max\{\mathbb{E}[\|\nabla f(\bar{x}_{k^*})\|^2],\mathbb{E}[\|\hat{N}_{r^*}-N^*\|^2]\} \leq \varepsilon.$\\
Suppose $\gamma := \sqrt{\frac{1}{R_{\varepsilon} H}}$, $H$ is a constant, $T_R:= R_\varepsilon H$, and $T_{\hat{r}}:=\frac{R_\varepsilon H}{2}$. Assume that $\hat{f} \triangleq \sup_{x \in \mathbb{R}^n} f(x) < \infty$. Then, $R_\varepsilon = \mathcal{O}\left( \max\{   \fy{R_{1,\varepsilon}} ,  \fy{R_{2,\varepsilon}},\fy{R_{3}},  \fy{H(m BL)^2} , \frac{L^2}{H}\}\right)$ where we define

 $ \fy{R_{1,\varepsilon}}\triangleq \tfrac{(\hat{f} - f^*)^2+L^2\nu^4+m^2L^4H^4(\nu^2+mG^2)^2}{\fy{H}\varepsilon^2},$ $\fy{R_{2,\varepsilon}} \triangleq \tfrac{\fy{1}}{\ln(1/\rho)}\ln\left(\tfrac{ \|\hat{N}_0-N^*\|^2   \max\{\sqrt{192} m  (m-1) \|N^*\|_1^{-1} G,1\}^2   }{ (1-\rho^2)L^2 \,\varepsilon}\right),$ and $ \fy{R_3}\triangleq \tfrac{ \ln(17 m^2 \fy{\delta_{0}}^2 B^2)}{\ln(1/\rho)}$.
\end{theorem}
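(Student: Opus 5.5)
The plan is the standard FedAvg descent analysis run on the virtual average $\bar{x}_k$, with one extra error term: the aggregation uses the current weights $p_{i,r}$, while the objective $f$ uses $p_i^*$. Lemma~\ref{linear_rate} controls that mismatch. I assume equal round lengths $T_{r+1}-T_r=H$. Then a uniformly drawn $k^*\in\{T_{\hat r},\ldots,T_R-1\}$ induces a uniform round index $r^*\in\{\hat r,\ldots,R-1\}$.

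\textbf{Step 1 (one-step descent).} By Lemma~\ref{lemma:agg} and $L$-smoothness of $f$ (a convex combination of the $L$-smooth $f_i$ with weights $p_i^*$), for $k\in[T_r,T_{r+1})$,
\[
\mathbb{E}[f(\bar{x}_{k+1})\mid\mathcal{F}_k]\le f(\bar{x}_k)-\gamma\nabla f(\bar{x}_k)^\top\textstyle\sum_i p_{i,r}\nabla f_i(x_{i,k})+\tfrac{L\gamma^2}{2}\mathbb{E}[\|\bar g_k\|^2\mid\mathcal{F}_k].
\]
I decompose
\[
\textstyle\sum_i p_{i,r}\nabla f_i(x_{i,k})=\nabla f(\bar{x}_k)+\sum_i (p_{i,r}-p_i^*)\nabla f_i(\bar{x}_k)+\sum_i p_{i,r}\big(\nabla f_i(x_{i,k})-\nabla f_i(\bar{x}_k)\big).
\]
The inner products are handled with $-a^\top b\le \tfrac14\|a\|^2+\|b\|^2$-type inequalities, which leaves $-\tfrac{\gamma}{2}\|\nabla f(\bar x_k)\|^2$ plus error terms. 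The variance part of $\|\bar g_k\|^2$ is at most $\nu^2$, since $\sum_i p_{i,r}^2\le 1$ and the noise is independent across clients.

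The weight-mismatch term is bounded with Lemma~\ref{linear_rate}(ii), Cauchy--Schwarz, and Assumption~\ref{assum:bgd}:
\[
\big\|\textstyle\sum_i (p_{i,r}-p_i^*)\nabla f_i(\bar{x}_k)\big\|^2\le m\delta_r^2\sum_i\|\nabla f_i(\bar x_k)\|^2\le m^2\delta_r^2\big(G^2+B^2\|\nabla f(\bar x_k)\|^2\big).
\]
The drift term is at most $L^2\bar e_k$ by smoothness and Jensen.

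\textbf{Step 2 (consensus error).} Within a round all clients start from $\hat x_r$, so $\bar e_{T_r}=0$. I would unroll the local steps as in the standard local-SGD analysis with stepsize $\gamma\le 1/(32L)$. This should give $\mathbb{E}[\bar e_k]\le c\,\gamma^2H^2\big(\nu^2+mG^2+mB^2\|\nabla f\|^2\text{-type terms}\big)$. The factor $m$ enters when the $p_{i,r}$-weighted sum of $\|\nabla f_i\|^2$ is bounded by $m$ times the uniform average in Assumption~\ref{assum:bgd}.

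The $B^2\|\nabla f\|^2$ pieces then have to be absorbed into $-\tfrac{\gamma}{2}\|\nabla f\|^2$, and this is where both hypotheses are used:
\begin{itemize}
\item $\gamma\le \sqrt{2}/(\sqrt{153}\,mBHL)$ absorbs the drift contribution;
\item $r\ge\hat r$ gives $17m^2\delta_r^2B^2=17m^2\delta_0^2B^2\rho^{2r}\le 1$, which absorbs the mismatch contribution.
\end{itemize}
After absorption a net coefficient of roughly $-\gamma/16$ should remain. The main obstacle is this bookkeeping. The drift bound within a round involves gradients at earlier iterates $\bar x_j$, so I would first sum it over the round before absorbing. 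Tracking the constants $153/2$ and $16$ exactly is delicate.

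\textbf{Step 3 (telescoping and the bounds).} Sum from $k=T_{\hat r}$ to $T_R-1$, take total expectation, use $f\ge f^*$, and divide by $\gamma(T_R-T_{\hat r})/16$. This gives (i-1). The $\delta_r^2$ contributions sum to
\[
H\,\delta_0^2\sum_{r=\hat r}^{R-1}\rho^{2r}=H\,\delta_0^2\,\frac{\rho^{2\hat r}-\rho^{2R}}{1-\rho^2},
\]
and dividing by $T_R-T_{\hat r}=H(R-\hat r)$ produces the last term. For (i-2), $r^*$ is uniform on $\{\hat r,\ldots,R-1\}$, so by Lemma~\ref{linear_rate}(i),
\[
\mathbb{E}\|\hat N_{r^*}-N^*\|^2\le \frac{1}{R-\hat r}\sum_{r=\hat r}^{R-1}\rho^{2r}\,\|\hat N_0-N^*\|^2,
\]
which equals the stated geometric sum.

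\textbf{Step 4 (part (ii)).} Substitute $\gamma=(R_\varepsilon H)^{-1/2}$ and $T_R-T_{\hat r}=R_\varepsilon H/2$, and bound $\mathbb{E}f(\bar x_{T_{\hat r}})-f^*\le\hat f-f^*$. Each of the first three terms is then $\mathcal O(R_\varepsilon^{-1/2}H^{-1/2}\cdot\text{const})$ and falls below $\varepsilon$ once $R_\varepsilon\ge R_{1,\varepsilon}$. The geometric terms are at most $\rho^{2\hat r}/\big((1-\rho^2)(R-\hat r)\big)$; with $\hat r\approx R/2$ they fall below $\varepsilon$ when $R\gtrsim R_{2,\varepsilon}$. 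The requirement $\hat r\ge\ln(17m^2\delta_0^2B^2)/(2\ln(1/\rho))$ gives $R_3$. Finally, the stepsize conditions $\gamma\le 1/(32L)$ and $\gamma\lesssim 1/(mBHL)$ translate into $R_\varepsilon\gtrsim L^2/H$ and $R_\varepsilon\gtrsim H(mBL)^2$.
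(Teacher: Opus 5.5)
Your plan matches the paper's proof: the descent inequality on $\bar x_k$ via Lemma~\ref{lemma:agg} with the same drift/weight-mismatch decomposition (Lemmas~\ref{lem:agg_gradient} and~\ref{lemma:gbar_sq}, using Lemma~\ref{linear_rate}(ii) and Assumption~\ref{assum:bgd}), and a consensus bound summed over rounds before absorption (Lemma~\ref{lem:Consensus_Error}(iii)). As in the paper, absorption uses $\gamma\le\sqrt{2}/(\sqrt{153}\,mBHL)$ and $17m^2\delta_{\hat r}^2B^2\le 1$, followed by the uniform-$r^*$ geometric average for (i-2) and the substitution $\gamma=(R_\varepsilon H)^{-1/2}$ for (ii); your Jensen bound $L^2\bar e_k$ and looser Young constants only improve the constants. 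One correction: the within-round unrolling of $\bar e_k$ needs $\gamma\lesssim 1/(HL)$, not just $\gamma\le 1/(32L)$, since otherwise the recursion picks up a factor exponential in $H$. The paper uses $\gamma\le 1/(\sqrt{3}HL)$, which follows from the $mBHL$ condition whenever $mB\ge\sqrt{6/153}$.
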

\begin{remark}
   Thm.~\ref{Theorem:main} \fyrr{establishes simultaneous error bounds} for both stationary point computation and participation equilibrium infeasibility, along with iteration and communication complexity guarantees. 
\end{remark}
Next, we extend our analysis to the case where the local loss functions are convex. This setting allows for convergence to a global optimal solution rather than just a stationary point.
\begin{assumption}\label{assum:c} 
Consider problem \eqref{prob:incentive_FL}. \fy{Let Assumption \ref{assum:main} (i-ii) hold. Further, suppose, for any $i \in [m]$, ${f}_i$ is convex  and $\arg\min_{x \in \mathbb{R}^n} f_i(x)$ is nonempty. Also, suppose $X^* \neq \emptyset$.} 
\end{assumption}
\begin{theorem}[\fr{Convex loss, strongly monotone game}]\label{thm:convex} 
  Consider Algorithm~\ref{alg:IncentFedAvg}. Let Assumptions~\ref{assum:game} and \ref{assum:c} hold.

\noindent (i) [Error bounds] Let $\bar{x}_T^{\text{avg}} = \frac{1}{T_R-T_{\hat{r}}}\sum_{k=T_{\hat{r}}}^{T_R-1}\bar{x}_k$. If    $ \gamma \leq \fy{\frac{1}{mH\sqrt{\delta_{0}}(1-\rho^{0.5})^{-1}+24LHB\sqrt{m}}},$ and $\gamma \leq \min\left\{\frac{1}{16 L},\frac{1}{\fy{2}mL},  \frac{1}{16Lm^2B^2\fy{\delta_{0}}}, \frac{1}{\sqrt{3}HL}\right\}$, and $\frac{\ln(\fy{\delta_{0}}\sqrt[3]{16L^2B^4})}{\ln(1/\rho)} \leq\hat{r} \leq R-1 $, then, the following hold. 

\noindent (i-1) [Optimality bound] We have
\begin{align*}
&\mathbb{E}[f(\bar{x}_T^{\text{avg}})] - f^* \leq \left(1-\tfrac{\fm{m}\gamma \fm{\sqrt{\fy{\delta_{0}}}} H}{1-\rho^{0.5}}\right)^{-1}\left(\tfrac{4\fm{\mathbb{E}[\|\bar{x}_{T_{\hat{r}}}-x^*\|^2]}}{\gamma (T_R-T_{\hat{r}})} \right. \\ 
& \left. + \tfrac{72\gamma^2 L(\nu^2 + mG^2)H^3}{(T_R-T_{\hat{r}})} + \fm{4\gamma\nu^2} \right.\\
& \left. + \tfrac{\fm{16}\gamma m^2\fm{\fy{\delta_{0}}^2}G^2 H}{(T_R-T_{\hat{r}})(1-\rho^2)} + \tfrac{4\fm{\fy{\delta_{0}}\sqrt{\fy{\delta_{0}}}(G^2 + \sum_{i=1}^m \|\nabla f_i(x^*)\|^2)} H}{(T_R-T_{\hat{r}})(1-\rho^{1.5})} \right).
\end{align*}
\noindent (i-2) [Equilibrium infeasibility bound]  See Thm.~\ref{Theorem:main}. (i-2).\\
\noindent (ii) [Communication complexity] \fm{Let $\varepsilon > 0$ be an arbitrary scalar and $R_\varepsilon$ denote the number of communication rounds such that $\mathbb{E}[f(\bar{x}_T^{\text{avg}})] - f^* \leq \varepsilon$, and let $D_{T_{\hat{r}}}^2 := \mathbb{E}[\|\bar{x}_{T_{\hat{r}}}-x^*\|^2]$. Suppose $\gamma := (R_\varepsilon H)^{-1/2}$ and $T_R := R_\varepsilon H$.} Then, \fm{$R_{\varepsilon} = \mathcal{O}\left(\max\{R_{1,\varepsilon}, R_{2,\varepsilon}, R_{3,\varepsilon}, R_{4,\varepsilon} ,  \hat{R}\right)$, where we define} $ R_{1,\varepsilon} \triangleq \tfrac{D_{T_{\hat{r}}}^4+\nu^4}{H\varepsilon^2},$  $R_{2,\varepsilon} \triangleq \sqrt{\tfrac{L(\nu^2 + mG^2)}{H\varepsilon}} ,$    $R_{3,\varepsilon} \triangleq \tfrac{1}{H^{1/3}}\left(\tfrac{\fm{ m^2\fy{\delta_{0}}^2 G^2}}{(1-\rho^2)\varepsilon}\right)^{2/3},$  $R_{4,\varepsilon} \triangleq \tfrac{\fm{\fy{\delta_{0}}\sqrt{\fy{\delta_{0}}}(G^2 + \sum_{i=1}^m \|\nabla f_i(x^*)\|^2)}}{(1-\rho^{1.5})\varepsilon},$ and $\hat{R}$ is a sufficiently large constant ensuring that the conditions on $\gamma$ and $R$ in part (i) are satisfied. 
\end{theorem}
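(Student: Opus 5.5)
The plan is to run the standard local-SGD convex analysis on the virtual averaged sequence $\bar x_k$ of Lemma~\ref{lemma:agg}, treating the mismatch between the current weights $p_{i,r}$ and the equilibrium weights $p_i^*$ as a perturbation controlled by Lemma~\ref{linear_rate}(ii).

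\textbf{Step 1: one-step recursion.} Since $\bar x_{k+1}=\bar x_k-\gamma\bar g_k$, we expand $\|\bar x_{k+1}-x^*\|^2$, with $x^*\in X^*$ a minimizer of $f=\sum_i p_i^* f_i$, and take conditional expectation given $\mathcal F_k$. The variance term contributes $\gamma^2\nu^2$, since the $p_{i,r}$ sum to one. We then split the drift as
\[
\sum_i p_{i,r}\nabla f_i(x_{i,k})=\sum_i p_i^*\nabla f_i(x_{i,k})+\sum_i (p_{i,r}-p_i^*)\nabla f_i(x_{i,k}).
\]
The first part is handled as in Karimireddy et al. Convexity and $L$-smoothness of each $f_i$ give
\[
-2\gamma\langle\nabla f_i(x_{i,k}),\bar x_k-x^*\rangle\le -2\gamma\bigl(f_i(\bar x_k)-f_i(x^*)\bigr)+\gamma L\|x_{i,k}-\bar x_k\|^2 .
\]
This yields $-2\gamma(f(\bar x_k)-f^*)$ plus a drift term proportional to $\bar e_k$. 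The quadratic term $\gamma^2\|\sum_i p_{i,r}\nabla f_i(x_{i,k})\|^2$ is bounded by $2L(f(\bar x_k)-f^*)$ plus $\bar e_k$ terms and an $m\|\nabla f_i(x^*)\|^2$-type remainder, using Assumption~\ref{assum:bgd}.

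\textbf{Step 2: the weight-mismatch term.} This term is $2\gamma\sum_i (p_i^*-p_{i,r})\langle\nabla f_i(x_{i,k}),\bar x_k-x^*\rangle$. We bound $|p_{i,r}-p_i^*|\le\delta_0\rho^r$. We then apply Young's inequality with a $\rho^{r/2}$-dependent split, so that one piece is $\sqrt{\delta_0}\rho^{r/2}\|\bar x_k-x^*\|^2$ and the other is $\delta_0^{3/2}\rho^{3r/2}\bigl(\sum_i\|\nabla f_i(x^*)\|^2+G^2+\ldots\bigr)$. This explains the $\sqrt{\delta_0}$ and the $1-\rho^{1.5}$ in the statement.

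The condition $\hat r\ge \ln(\delta_0\sqrt[3]{16L^2B^4})/\ln(1/\rho)$, together with $\gamma\le 1/(16Lm^2B^2\delta_0)$, is used to absorb the $B^2\|\nabla f\|^2$ contributions multiplied by $\delta_0\rho^r$ into the descent term $-\gamma(f(\bar x_k)-f^*)$. This is the reason for starting the average at $T_{\hat r}$.

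\textbf{Step 3: consensus error.} Within a round, all $x_{i,k}$ start at $\hat x_r=\bar x_{T_r}$, and the weights are fixed. The standard argument gives
\[
\mathbb E\bar e_k\le c\gamma^2H\sum_{j=T_r}^{k-1}\Bigl(\nu^2+mG^2+mB^2L(f(\bar x_j)-f^*)\Bigr)+\text{mismatch terms of order } m^2\delta_0^2\rho^{2r}G^2,
\]
using $\gamma\le 1/(\sqrt3 HL)$ and Assumption~\ref{assum:bgd} with the $1/m$ average converted through $p_{i,r}\le1$. Summing over $k$, the $f(\bar x_j)-f^*$ terms are absorbed via $\gamma\le 1/(24LHB\sqrt m)$. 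This produces the $72\gamma^2L(\nu^2+mG^2)H^3$ term and the $16\gamma m^2\delta_0^2G^2H/(1-\rho^2)$ term.

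\textbf{Step 4: telescoping and complexity.} We sum from $T_{\hat r}$ to $T_R-1$. The terms $\sqrt{\delta_0}\rho^{r/2}\mathbb E\|\bar x_k-x^*\|^2$ would require a uniform bound on the distances. I would instead control them by rewriting them, via a Gronwall-type argument, as at most $\frac{m\gamma\sqrt{\delta_0}H}{1-\rho^{0.5}}$ times the averaged optimality gap. This is the source of the prefactor $\bigl(1-\frac{m\gamma\sqrt{\delta_0}H}{1-\rho^{0.5}}\bigr)^{-1}$, whose positivity is guaranteed by the first stepsize condition. Jensen's inequality on $\bar x^{\rm avg}_T$ then yields (i-1), and (i-2) is identical to Theorem~\ref{Theorem:main}. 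For (ii), we set $\gamma=(R_\varepsilon H)^{-1/2}$ and $T_R-T_{\hat r}\asymp R_\varepsilon H$, and require each term to be at most $\varepsilon$; this gives $R_{1,\varepsilon},\dots,R_{4,\varepsilon}$.

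The main obstacle is Step 2/4. The linearly decaying mismatch multiplies $\|\bar x_k-x^*\|$, which is not a priori bounded, and converting it into an absorbable term without blowing up the constants is delicate. I would try the Young split with weights $\rho^{r/2}$ first.
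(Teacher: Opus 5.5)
Your architecture is the paper's: the virtual sequence $\bar x_k$, the perturbed-convexity inequality, a $\sqrt{\delta_r}$-weighted Young split producing $m\gamma\sqrt{\delta_r}\|\bar x_k-x^*\|^2$ and $\delta_r^{3/2}$ terms, the $\hat r$ threshold, consensus control through Lemma~\ref{lemma_recursive}, and Jensen.

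\textbf{The gap is in Step~3.} You claim it ``produces the $72\gamma^2L(\nu^2+mG^2)H^3$ term,'' meaning the term $\tfrac{72\gamma^2L(\nu^2+mG^2)H^3}{T_R-T_{\hat r}}$ of (i-1). It does not, and this cannot be repaired.
\begin{itemize}
\item Your per-step bound gives $O(\gamma^2H^3(\nu^2+mG^2))$ per round. Over the $R-\hat r$ rounds this is $\sum_{k=T_{\hat r}}^{T_R-1}\mathbb{E}\bar e_k\lesssim\gamma^2H^2(\nu^2+mG^2)(T_R-T_{\hat r})$, as in Lemma~\ref{lem:Consensus_Error}(iii) for the nonconvex case.
\item Multiplying by $L\gamma(2mL\gamma+1)\le 2L\gamma$ and dividing by $\tfrac{c_{\min}\gamma}{4}(T_R-T_{\hat r})$ leaves the non-decaying client-drift term $72\gamma^2LH^2(\nu^2+mG^2)/c_{\min}$.
\item The paper reaches the stated form only because it applies the per-round Lemma~\ref{lem:variance_accum_convex} once to the whole window $[T_{\hat r},T_R)$. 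That silently drops a factor $R-\hat r$.
\end{itemize}

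\textbf{The stated bound is false, not merely unproven.} Take $\hat N_0=N^*$, so $\delta_0=0$ and $p_{i,r}\equiv\tfrac12$. Let $m=2$, $H=2$, use exact gradients, and set $f_i(x)=\tfrac{a_i}{2}(x-b_i)^2$ with $(a_1-a_2)(b_1-b_2)\neq 0$.
\begin{itemize}
\item The round-start iterates converge to $\tfrac{\sum_i a_ib_i(2-\gamma a_i)}{\sum_i a_i(2-\gamma a_i)}$. This differs from $x^*=\tfrac{\sum_i a_ib_i}{\sum_i a_i}$ by a nonzero multiple of $\gamma a_1a_2(a_1-a_2)(b_1-b_2)$.
\item Hence $f(\bar x_T^{\text{avg}})-f^*$ tends to a positive constant of order $\gamma^2$ as $R\to\infty$.
\item The stated right-hand side tends to $4\gamma\nu^2$. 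This is arbitrarily small, because exact gradients satisfy Assumption~\ref{assum:main}(ii) for every $\nu>0$.
\end{itemize}
What your outline, and the paper's argument, actually proves is (i-1) with $72\gamma^2LH^2(\nu^2+mG^2)$ in place of that term. In (ii), $R_{2,\varepsilon}$ then becomes $O(LH(\nu^2+mG^2)/\varepsilon)$.

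\textbf{Step~4 also needs correcting.} You cannot rewrite the distance terms as a multiple of the averaged optimality gap: without strong convexity, $\|\bar x_k-x^*\|^2$ is not controlled by $f(\bar x_k)-f^*$. The working version of your Gronwall idea, which is the paper's, goes as follows.
\begin{itemize}
\item Keep $(1+\eta_k)\|\bar x_k-x^*\|^2$ on the right of the one-step recursion, with $\eta_k:=m\gamma\sqrt{\delta_{r(k)}}$.
\item Multiply by $c_{k+1}=\prod_{j\le k}(1+\eta_j)^{-1}$, so the distances telescope exactly.
\item Geometric decay gives $\sum_j\eta_j\le mH\gamma\sqrt{\delta_0}/(1-\rho^{0.5})$, hence $c_k\ge c_{\min}$ (Lemma~\ref{lem:cmin_bound}).
\item Dividing by $c_{\min}$ puts the prefactor on the entire right-hand side.
\end{itemize}

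\textbf{Smaller points.}
\begin{itemize}
\item The theorem does not assume Assumption~\ref{assum:bgd}. The $G,B$ in the statement are the constants of Lemma~\ref{relaxedBGD}, derived from $\|\nabla f_i(x)\|^2\le 2L(f_i(x)-f_i^*)$.
\item The paper applies perturbed convexity to the full $p_{i,r}$-weighted sum and puts the weight mismatch on $f_i(x^*)-f_i(\bar x_k)$ (Lemma~\ref{lem:weighted-hetero-convex}). That is the source of $\sum_i\|\nabla f_i(x^*)\|^2$, and it keeps $\bar e_k$ $p_{i,r}$-weighted. Your gradient-level split at $x_{i,k}$ also works, but leaves an unweighted $\sum_i\|x_{i,k}-\bar x_k\|^2$ that costs a factor $1/\min_i p_{i,r}$.
\item The $\delta_0^2G^2/(1-\rho^2)$ term comes from $\gamma^2\mathbb{E}\|\bar g_k\|^2$ (Lemma~\ref{lem:variance_bound_adapted}), not from the consensus error.
\end{itemize}
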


\fr{\subsection{Main results (Merely monotone game)}}
\fyrr{Here, we present the convergence guarantees for 
addressing problem~\eqref{prob:incentive_FL_welfare} under the merely 
monotone data participation game. We adopt the 
iteratively regularized scheme in
Algorithm~\ref{alg:IncentFedAvg} associated with the social welfare setting.}  

\fyrr{
\begin{theorem}[Nonconvex loss, merely monotone game]
\label{thm:nonconvex_mm}
Consider Algorithm~\ref{alg:IncentFedAvg}. Let
Assumptions~\ref{assum:main}, \ref{assum:game_monotone}, and
\ref{assum:bgd} hold and assume that $\hat{N}_r$ is convergent.  Suppose the
participation stepsize and regularization parameter are
$\tilde\gamma_r = \tilde\gamma_0(r+1)^{-a}$ and
$\lambda_r = \lambda_0(r+1)^{-b}$ with $0<b<a<1$ and $a+b<1$. Suppose $
\gamma \le \min\left\{
\tfrac{1}{32 L},\;
\tfrac{\sqrt{2}}{\sqrt{153}\,mBHL}
\right\}.$
Then the following hold.\\

\noindent (i) [Error bounds] There exists an integer $\hat{r}$ such that for $k^*$ uniformly drawn at random from $\{T_{\hat{r}},\ldots,T_R-1\}$, we have 
\begin{align*}
  \mathbb{E}[\|\nabla f(\bar{x}_{k^*})\|^2]  &
  \leq 16 (\gamma (T_R-T_{\hat r}))^{-1}(\mathbb{E}[f(\bar{x}_{\hat{r}})] - f^*)\\& +  {8 L \gamma  \nu^2} 
+  \tfrac{153}{2} (\nu^2 + mG^2) m L^2 H^2\gamma^2 \\
&+    16 m^2    G^2 (1+2L\gamma )  \tfrac{1}{T_R-T_{\hat r}}\textstyle\sum_{k=T_{\hat{r}}}^{T_R-1} \hat{\delta}_{r(k)}^2 .
\end{align*}

\noindent (ii) Let $\varepsilon>0$ be an arbitrary scalar. Let
$\gamma \le \min\left\{ \tfrac{\varepsilon}{16 L \nu^2},\;\sqrt{\tfrac{2\varepsilon}{153(\nu^2+mG^2)\,mL^2 H^2}}\right\}$. Then, we have 
\begin{align*}
\limsup_{R\to\infty}\,
\mathbb{E}[\|\nabla f(\bar{x}_{k^*})\|^2]
\;\le\; \varepsilon.
\end{align*}

\end{theorem}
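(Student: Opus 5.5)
The plan is to rerun the proof of Theorem~\ref{Theorem:main}(i-1) almost line by line. The only change is that the geometric bound $\delta_r=\delta_0\rho^r$ on the weight mismatch is replaced by the nonincreasing-free bound $\hat\delta_r=\delta_0\|\hat N_r-N^*\|$ from Lemma~\ref{lem:welfare}(iv). The key observation is that the cooperative-learning part of the analysis uses the participation sequence only through $|p_{i,r}-p_i^*|$ at each round.

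\textbf{Part (i).} I start from Lemma~\ref{lemma:agg}, $\bar x_{k+1}=\bar x_k-\gamma\bar g_k$, and the $L$-smoothness of $f=\sum_i p_i^* f_i$. This gives the one-step descent
\[
\mathbb{E}[f(\bar x_{k+1})]\le \mathbb{E}[f(\bar x_k)]-\gamma\,\mathbb{E}[\nabla f(\bar x_k)^\top \bar g_k]+\tfrac{L\gamma^2}{2}\mathbb{E}\|\bar g_k\|^2 .
\]
Conditioning on $\mathcal F_k$ replaces $\bar g_k$ by $\sum_i p_{i,r}\nabla f_i(x_{i,k})$. I then split this into three pieces: $\nabla f(\bar x_k)$; the mismatch $\sum_i(p_{i,r}-p_i^*)\nabla f_i(\bar x_k)$; and the consensus term $\sum_i p_{i,r}(\nabla f_i(x_{i,k})-\nabla f_i(\bar x_k))$.

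The three pieces are handled as follows.
\begin{itemize}
\item The consensus term is bounded by $L^2\bar e_k$, via Jensen with weights $p_{i,r}$.
\item The mismatch term is bounded by $|p_{i,r}-p_i^*|\le\hat\delta_{r}$ and Assumption~\ref{assum:bgd}, giving roughly $m^2\hat\delta_r^2(G^2+B^2\|\nabla f(\bar x_k)\|^2)$.
\item The consensus error $\bar e_k$ is controlled by the same within-round recursion as in the strongly monotone proof, since $x_{i,T_r}=\hat x_r$ resets it at each synchronization. This gives a bound of order $\gamma^2H^2 m(\nu^2+mG^2+mB^2\|\nabla f\|^2)$, which uses nothing about the game.
\end{itemize}
The two conditions on $\gamma$ absorb the $B^2\|\nabla f\|^2$ contributions of the consensus error into the $-\gamma\|\nabla f\|^2$ term, exactly as before.

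\textbf{Choosing $\hat r$.} The one genuinely new step is absorbing the mismatch contribution $16 m^2 B^2\hat\delta_r^2\|\nabla f\|^2$. Previously this used $\hat r\ge \ln(17m^2\delta_0^2B^2)/(2\ln(1/\rho))$ so that $\rho^{2r}$ was small. Here I instead use the hypothesis that $\hat N_r$ converges. By Lemma~\ref{lem:welfare}(iii)--(iv) its limit is $N^*$, so $\hat\delta_r\to0$. Hence there exists $\hat r$ with $17m^2B^2\hat\delta_r^2\le 1$ for all $r\ge\hat r$; this is the integer claimed in the statement.

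\textbf{Summing.} I sum the descent inequality over $k=T_{\hat r},\dots,T_R-1$, telescope $f$, lower-bound by $f^*$, and divide by $\gamma(T_R-T_{\hat r})$. Each $k$ carries its round's $\hat\delta_{r(k)}^2$, so the mismatch term stays as the average $\frac{1}{T_R-T_{\hat r}}\sum_k\hat\delta^2_{r(k)}$ rather than being summed geometrically. The constants $16$, $8L\gamma\nu^2$ and $\tfrac{153}{2}(\cdots)$ then match Theorem~\ref{Theorem:main}(i-1) verbatim.

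\textbf{Part (ii).} Keep $\gamma$ and $H$ fixed and let $R\to\infty$, so that $T_R-T_{\hat r}\to\infty$.
\begin{itemize}
\item The first term vanishes, because $f(\bar x_{\hat r})-f^*$ is a fixed finite quantity.
\item The last term is a Cesàro average of $\hat\delta_{r(k)}^2\to0$, so it also vanishes.
\item What remains is $8L\gamma\nu^2+\tfrac{153}{2}(\nu^2+mG^2)mL^2H^2\gamma^2$. The two stated conditions on $\gamma$ make each piece at most $\varepsilon/2$.
\end{itemize}

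\textbf{Main obstacle.} The main obstacle is the existence of $\hat r$ in part (i). It is purely qualitative, since no rate on $\|\hat N_r-N^*\|$ is available in the merely monotone case, and it depends on the convergence assumption on $\hat N_r$ together with Lemma~\ref{lem:welfare}(iv). I also need to check that this $\hat r$ does not depend on $R$, so that part (ii) may fix it before letting $R\to\infty$.
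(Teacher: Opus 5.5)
Your plan follows the paper's proof step for step. You rerun the descent argument of Theorem~\ref{Theorem:main}(i-1) with $\hat\delta_r$ in place of $\delta_0\rho^r$, get $\hat\delta_r\to 0$ from convergence of $\hat N_r$ and Lemma~\ref{lem:welfare}(iii)--(iv), keep the mismatch term as an average, and pass to the $\limsup$ by Ces\`aro. Your requirement that $17m^2B^2\hat\delta_r^2\le 1$ hold for \emph{all} $r\ge\hat r$, with $\hat r$ independent of $R$, is stated more carefully than in the paper. This matters because $\hat\delta_r$ need not be monotone.

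One step fails as written, and the paper's proof of (ii) makes the identical claim. The second condition on $\gamma$ gives $\gamma^2\le 2\varepsilon/(153(\nu^2+mG^2)mL^2H^2)$. Hence $\tfrac{153}{2}(\nu^2+mG^2)mL^2H^2\gamma^2\le\varepsilon$, not $\varepsilon/2$. With the constants of Theorem~\ref{Theorem:main}(i-1) taken verbatim, as you propose, part (ii) only yields $\limsup_{R\to\infty}\mathbb{E}[\|\nabla f(\bar x_{k^*})\|^2]\le 3\varepsilon/2$.

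You can repair this inside your own argument. You bound the consensus piece by Jensen with weights $p_{i,r}$, i.e.\ by $L^2\bar e_k$, whereas the paper uses $mL^2\bar e_k$. The same step gives $2L^2\bar e_k$ instead of $2mL^2\bar e_k$ in the bound on $\mathbb{E}\|\bar g_k\|^2$. If you carry this through rather than reverting to the paper's constants, three things follow:
\begin{itemize}
\item the per-iteration coefficient of $\mathbb{E}[\bar e_k]$ becomes $(\tfrac12+L\gamma)L^2\gamma$;
\item the absorbed gradient coefficient drops to $\tfrac{1}{16m}$;
\item the consensus contribution in (i) becomes $\tfrac{153}{2}(\nu^2+mG^2)L^2H^2\gamma^2\le\varepsilon/m\le\varepsilon/2$ for $m\ge 2$. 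For $m=1$, $\bar e_k\equiv 0$.
\end{itemize}
This sharper bound still implies the one displayed in (i). Alternatively, tighten the condition to $\gamma\le\sqrt{\varepsilon/(153(\nu^2+mG^2)mL^2H^2)}$.

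Separately, the constant $9$ you import comes from Lemma~\ref{lem:Consensus_Error}(ii)--(iii), which is stated under $\gamma\le 1/(\sqrt3 HL)$. The theorem's stepsize conditions guarantee this only when $mB\ge\sqrt{2/51}$ or $H\le 18$. The paper passes over this as well.
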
}

\subsection{Analysis}
\begin{lemma}\label{lem:weighted_gradient_inner_product}
Let $\bar{g}_t  $ be given by Def.~\ref{def:main_terms}. Then, for $t\geq 0$,
\begin{align*}
&\textstyle\sum_{i=1}^{m} p_{i,r} \mathbb{E}\left[(g_{i,t})^\top \bar{g}_t | \mathcal{F}_{T_r}\right] = \mathbb{E}\left[\|\bar{g}_t\|^2 | \mathcal{F}_{T_r}\right].
\end{align*}
\end{lemma}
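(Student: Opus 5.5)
The identity is essentially linearity of conditional expectation, provided the weights $p_{i,r}$ can be pulled out of the conditional expectation. The plan is to pull the weights inside the expectation and then recognize $\bar g_t$.

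The first step is to check measurability. Throughout round $r$, the weights $p_{i,r} = N_{i,r}/\sum_j N_{j,r}$ are fixed. The participation levels $\hat N_r$ are produced by the deterministic projected update in Algorithm~\ref{alg:IncentFedAvg} from $\hat N_0$, so they do not depend on the samples $\xi_{i,k}$. Hence each $p_{i,r}$ is deterministic given the history, and in particular $\mathcal{F}_{T_r}$-measurable. Here $t$ is understood to lie in round $r$, i.e., $T_r\le t\le T_{r+1}-1$, so that $\bar g_t=\sum_i p_{i,r} g_{i,t}$ as in Definition~\ref{def:main_terms}.

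Next, use linearity of conditional expectation together with the measurability of $p_{i,r}$:
\begin{align*}
\textstyle\sum_{i=1}^m p_{i,r}\,\mathbb{E}[g_{i,t}^\top \bar g_t\mid \mathcal F_{T_r}]
= \mathbb{E}\big[(\textstyle\sum_{i=1}^m p_{i,r} g_{i,t})^\top \bar g_t \mid \mathcal F_{T_r}\big].
\end{align*}
The vector inside the inner product is exactly $\bar g_t$ by Definition~\ref{def:main_terms}. The right-hand side is therefore $\mathbb{E}[\|\bar g_t\|^2\mid\mathcal F_{T_r}]$, which is the claim.

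The only point needing care is the measurability of the weights, which was handled in the first step. If one prefers to avoid the round-indexing convention, the same argument works conditioning on any $\mathcal F_k$ with $k\ge T_r$, since $\hat N_r$ is fixed before the round begins. Integrability also needs a word, so that the conditional expectations exist. The second moments of $g_{i,t}$ are finite by Assumption~\ref{assum:main}(ii) together with $L$-smoothness, assuming finite second moments of the iterates. Alternatively, the identity can be read as an equality of extended nonnegative quantities: the right-hand side is nonnegative, and the left-hand side equals it pointwise before taking expectations.
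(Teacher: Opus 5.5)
Your proposal is correct and follows the same route as the paper: use linearity of conditional expectation to move the weights $p_{i,r}$ inside, then recognize $\sum_{i} p_{i,r} g_{i,t} = \bar g_t$. Your explicit check is a detail the paper leaves implicit: since $\hat N_r$ is produced by a sample-independent projected update, each $p_{i,r}$ is deterministic and hence $\mathcal{F}_{T_r}$-measurable.
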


\begin{lemma}\label{lem:agg_gradient}
Let Assumptions~\ref{assum:main}, \ref{assum:game}, and \ref{assum:bgd} hold. Consider Algorithm~\ref{alg:IncentFedAvg}. Then, for any $k \geq 1$,
\begin{align*}
 \mathbb{E} \left[\nabla f(\bar{x}_k)^{\top} \bar{g}_k \mid \mathcal{F}_k\right] &\geq -  m^2  \fy{\delta_r}^2 G^2     - \tfrac{m L^2 }{2}  \bar{e}_k \\ 
 & +\left(\tfrac{1}{4} -  m^2\fy{\delta_r}^2 B^2 \right)\|\nabla f(\bar{x}_k)\|^2.
\end{align*}
\end{lemma}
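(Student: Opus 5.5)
The plan is to compare $\bar g_k$ with $\nabla f(\bar x_k)$ by splitting it into three pieces: a stochastic-noise piece, a consensus-error piece, and a weight-mismatch piece. Here $f=\sum_i p_i^*f_i$, while $\bar g_k$ is built with the current weights $p_{i,r}$. Since $\xi_{i,k}$ is fresh given $\mathcal{F}_k$, Assumption~\ref{assum:main}(ii) gives $\mathbb{E}[\bar g_k\mid\mathcal{F}_k]=\sum_i p_{i,r}\nabla f_i(x_{i,k})$. Because $\nabla f(\bar x_k)$ is $\mathcal{F}_k$-measurable, it suffices to lower bound $\nabla f(\bar x_k)^\top\sum_i p_{i,r}\nabla f_i(x_{i,k})$. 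I would write
\[
\textstyle\sum_i p_{i,r}\nabla f_i(x_{i,k}) = \nabla f(\bar x_k) + \underbrace{\sum_i p_{i,r}\big(\nabla f_i(x_{i,k})-\nabla f_i(\bar x_k)\big)}_{a_k} + \underbrace{\sum_i (p_{i,r}-p_i^*)\nabla f_i(\bar x_k)}_{b_k}.
\]
The inner product with $\nabla f(\bar x_k)$ is then $\|\nabla f(\bar x_k)\|^2+\nabla f(\bar x_k)^\top a_k+\nabla f(\bar x_k)^\top b_k$.

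For the consensus term, Young's inequality gives $\nabla f(\bar x_k)^\top a_k\ge -\tfrac14\|\nabla f(\bar x_k)\|^2-\|a_k\|^2$. Jensen's inequality with the weights $p_{i,r}$ and $L$-smoothness give $\|a_k\|^2\le \sum_i p_{i,r}L^2\|x_{i,k}-\bar x_k\|^2=L^2\bar e_k$. This is already stronger than the $\tfrac{mL^2}{2}\bar e_k$ claimed. I would either keep the slack or use the constant $\tfrac12$ consistently so the coefficients match the statement.

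For the mismatch term, Young's inequality gives $\nabla f(\bar x_k)^\top b_k\ge -\tfrac14\|\nabla f(\bar x_k)\|^2-\|b_k\|^2$. By Lemma~\ref{linear_rate}(ii), $|p_{i,r}-p_i^*|\le\delta_r$. Cauchy--Schwarz then gives $\|b_k\|^2\le \big(\sum_i\delta_r\|\nabla f_i(\bar x_k)\|\big)^2\le m\delta_r^2\sum_i\|\nabla f_i(\bar x_k)\|^2$. Assumption~\ref{assum:bgd} bounds this by $m^2\delta_r^2(G^2+B^2\|\nabla f(\bar x_k)\|^2)$, which produces exactly the terms $-m^2\delta_r^2G^2$ and $-m^2\delta_r^2B^2\|\nabla f(\bar x_k)\|^2$.

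Collecting the three pieces gives a coefficient of $1-\tfrac14-\tfrac14=\tfrac12\ge\tfrac14$ on $\|\nabla f(\bar x_k)\|^2$, minus $m^2\delta_r^2B^2$. The main obstacle is bookkeeping: the Young constants must be chosen so that the leftover coefficient is at least $\tfrac14$, and the $\bar e_k$ coefficient must not exceed $\tfrac{mL^2}{2}$. Both hold with room to spare, for example by using Young with parameter $\tfrac12$ on the consensus term together with $m\ge1$. Here $r$ is the round containing $k$, and the weights $p_{i,r}$ are $\mathcal{F}_k$-measurable because they are determined at the start of round $r$.
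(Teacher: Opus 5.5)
Your proposal is correct and follows the paper's proof essentially verbatim: the same split into $\|\nabla f(\bar x_k)\|^2$, a consensus term, and a weight-mismatch term, with Young's inequality, Lemma~\ref{linear_rate}(ii), and Assumption~\ref{assum:bgd} handling the mismatch term exactly as the paper does. The only difference is in the consensus term. You use Jensen with the weights $p_{i,r}$, which gives $\tfrac{L^2}{2}\bar e_k$ under the $\tfrac12$ Young split, whereas the paper uses $\|\sum_i y_i\|^2\le m\sum_i\|y_i\|^2$ and gets $\tfrac{mL^2}{2}\bar e_k$. Note that your first $\tfrac14$ split yields $L^2\bar e_k$, which is within the stated bound only for $m\ge 2$. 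This is harmless because $\bar e_k=0$ when $m=1$, but the $\tfrac12$ split is the clean choice and reproduces the stated constants exactly.
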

\begin{proof}
From Assumption~\ref{assum:main} and Def.~\ref{def:main_terms}, we have
\fm{\begin{align*}
&\mathbb{E} \left[\nabla f(\bar{x}_k)^{\top} \bar{g}_k \mid \mathcal{F}_k\right] = \nabla f(\bar{x}_k)^{\top} \mathbb{E} \left[\textstyle\sum_{i=1}^m \fm{p_{i,r}} g_{i,k} \mid \mathcal{F}_k\right] \\ 
& = \nabla f(\bar{x}_k)^{\top} \textstyle \sum_{i=1}^m \fm{p_{i,r}} \nabla f_i(x_{i,k}).
\end{align*}} 
Adding and subtracting $\nabla f(\bar{x}_k)^{\top} \textstyle \sum_{i=1}^m \fm{p_{i,r}} \nabla f_i(\bar{x}_k)$ and $\nabla f(\bar{x}_k)^{\top} \textstyle \sum_{i=1}^m \fm{p_{i}^*} \nabla f_i(\bar{x}_k)$, we obtain
\begin{align}\label{Main_Term}
& \mathbb{E} \left[\nabla f(\bar{x}_k)^{\top} \bar{g}_k \mid \mathcal{F}_k\right] \notag
\\ & = \fm{\nabla f(\bar{x}_k)^{\top}\textstyle \sum_{i=1}^m p_{i,r} (\nabla f_i(x_{i,k}) - \nabla f_i(\bar{x}_k))}\notag\\
&+ \fm{\nabla f(\bar{x}_k)^{\top}\textstyle \sum_{i=1}^m (p_{i,r}-p^*_i) \nabla f_i(\bar{x}_k) +\|\nabla f(\bar{x}_k)\|^2,}
\end{align} 
where we utilized $\nabla f(\bar{x}_k) = \sum_{i=1}^m \fm{p^*_i} \nabla f_i(\bar{x}_k)$. First, we derive a lower bound on the first term on the right-hand side. Using the identity
$a^{\top} b \geq -\frac{1}{2}\|a\|^2 - \frac{1}{2}\|b\|^2$, for $a := \nabla f(\bar{x}_k)$ and $b := \sum_{i=1}^m \fm{p_{i,r}}(\nabla f_i(x_{i,k}) - \nabla f_i(\bar{x}_k))$, we obtain
\fm{\begin{align*}
 & \fm{\nabla f(\bar{x}_k)^{\top} \textstyle\sum_{i=1}^m p_{i,r} (\nabla f_i(x_{i,k}) - \nabla f_i(\bar{x}_k))}  \\&\geq -\tfrac{1}{2} {\textstyle\|\nabla f(\bar{x}_k)\|^2}   -\tfrac{1}{2}{ \left\| \textstyle\sum_{i=1}^m \fm{p_{i,r}}(\nabla f_i(x_{i,k}) - \nabla f_i(\bar{x}_k))\right\|^2}.
\end{align*}}
Invoking the identity $\|\textstyle\sum_{t=1}^{T}y_t \|^2 \leq T \textstyle\sum_{t=1}^{T}\|y_t\|^2$,  
\fm{\begin{align*}
 & \fm{\nabla f(\bar{x}_k)^{\top} \textstyle\sum_{i=1}^m p_{i,r} (\nabla f_i(x_{i,k}) - \nabla f_i(\bar{x}_k))}  \\ & \geq -\tfrac{1}{2}\|\nabla f(\bar{x}_k)\|^2 - \tfrac{m}{2} \textstyle\sum_{i=1}^m \fm{p_{i,r}^2} \|\nabla f_i(x_{i,k}) - \nabla f_i(\bar{x}_k)\|^2.
\end{align*}}
Invoking the Lipschitz continuity of the local gradients and $p_{i,r} \leq 1$, we have
\begin{align}\label{First_Term}
  &\fm{\nabla f(\bar{x}_k)^{\top} \textstyle\sum_{i=1}^m p_{i,r} (\nabla f_i(x_{i,k}) - \nabla f_i(\bar{x}_k))} \notag 
\\ 
&\geq -\tfrac{1}{2}\|\nabla f(\bar{x}_k)\|^2 - \tfrac{m L^2    }{2}  \textstyle\sum_{i=1}^m \fm{p_{i,r}} \|x_{i,k} - \bar{x}_k\|^2 \notag \\
&= -\tfrac{1}{2}\|\nabla f(\bar{x}_k)\|^2 - \tfrac{m L^2 }{2} \bar{e}_k,
\end{align}
where $\bar{e}_k = \sum_{i=1}^m \fm{p_{i,r}} \|x_{i,k} - \bar{x}_k\|^2$ from Definition \ref{def:main_terms}. Next, we analyze the second term on the right-hand side in \eqref{Main_Term}.    Using the identity
$a^{\top} b \geq -\frac{1}{2\lambda}\|a\|^2 - \frac{\lambda}{2}\|b\|^2$, for $\lambda:=2$, $a := \nabla f(\bar{x}_k)$, and $b := \sum_{i=1}^{m}(p_{i,r} - p_i^*)\nabla f_i(\bar{x}_k)$, we obtain
\begin{align*}
 & \nabla f(\bar{x}_k)^{\top} \textstyle\sum_{i=1}^m (p_{i,r}-p^*_i) \nabla f_i(\bar{x}_k) 
\\ &\geq -\tfrac{1}{4} \|\nabla f(\bar{x}_k) \|^2 -\left\|\textstyle\sum_{i=1}^m (p_{i,r}-p^*_i) \nabla f_i(\bar{x}_k)\right\|^2 \\
&\geq -\tfrac{1}{4} \|\nabla f(\bar{x}_k) \|^2 -m\textstyle\sum_{i=1}^m (p_{i,r}-p^*_i)^2 \| \nabla f_i(\bar{x}_k)\|^2\\
& \geq -\tfrac{1}{4} \|\nabla f(\bar{x}_k) \|^2 - m\fy{\delta_r^2}\textstyle\sum_{i=1}^m \|\nabla f_i(\bar{x}_k)\|^2,
\end{align*}
where we used~\ref{linear_rate}. Invoking Assumption~\ref{assum:bgd}, we obtain
\begin{align} \label{Second_Term}
& \nabla f(\bar{x}_k)^{\top} \textstyle\sum_{i=1}^m (p_{i,r}-p^*_i) \nabla f_i(\bar{x}_k) \notag\\&\geq -  m^2\fy{\delta_r^2} G^2    -\left(\tfrac{1}{4} +  m^2  \fy{\delta_r^2} B^2\right)\|\nabla f(\bar{x}_k)\|^2  .
\end{align}
Taking conditional expectations on both sides of \eqref{Main_Term}, and using \eqref{First_Term} and \eqref{Second_Term}, we obtain the result.
\end{proof}

\begin{lemma}\label{lem:Consensus_Error} 
Consider Algorithm~\ref{alg:IncentFedAvg}. Let Assumptions~\ref{assum:main} and \ref{assum:game} hold.  Then, the following results hold. 

\noindent (i) For any communication round $r \geq 0$ and any iteration $k$ where $T_r +1\leq k \leq T_{r+1}$, \begin{align*} 
\mathbb{E} [\bar{e}_k  ] & \leq \gamma^2 (k-T_r) \textstyle\sum_{t=T_r}^{k-1}\left( 3\nu^2+ 3 L^2\mathbb{E}[\bar{e}_t]  \right.\\ &\left.  +3 \textstyle\sum_{i=1}^m  \mathbb{E}[\|\nabla f_i(\bar{x}_t)\|^2]\right).
\end{align*} 

\noindent \fy{(ii) Let Assumption~\ref{assum:bgd} hold and $\gamma \leq \frac{1}{\sqrt{3} H L}$. Then,} for any communication round $r \geq 0$ and any iteration $k$ where $T_r +1\leq k \leq T_{r+1}$, $$\mathbb{E}[\bar{e}_k] \leq 9H\gamma^2 \textstyle\sum_{t=T_r}^{k-1} (\nu^2 +  m(G^2 + B^2\mathbb{E}[\|\nabla f(\bar{x}_t)\|^2])).$$

\noindent \fy{(iii) Further, under Assumption~\ref{assum:bgd} and $\gamma \leq \frac{1}{\sqrt{3} H L}$,} for any round index $\hat{r}$ such that $0 \leq \hat{r}\leq R-1$ and $R\geq 1$, we have 
\begin{align*}
\textstyle \sum_{k=T_{\hat{r}}}^{T_R-1} \mathbb{E}[\bar{e}_k] & \leq  \textstyle 9m\gamma^2 B^2 H^2 \sum_{k=T_{\hat r}}^{T_R-1}  \mathbb{E}[\|\nabla f(\bar{x}_k)\|^2] \\
&+ 9\gamma^2 (\nu^2 + mG^2) H^2(T_R-T_{\hat{r}}).
\end{align*}
\end{lemma}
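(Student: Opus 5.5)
For part (i), I will write each local iterate relative to the weighted average. Within round $r$ (so $T_r\le t\le k-1$), all clients start from $x_{i,T_r}=\hat x_r$, so $x_{i,T_r}=\bar x_{T_r}$. The weights $p_{i,r}$ are frozen during the round, so Lemma~\ref{lemma:agg} gives the unrolled differences
\[
x_{i,k}-\bar x_k=-\gamma\textstyle\sum_{t=T_r}^{k-1}(g_{i,t}-\bar g_t).
\]

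Step 1 is to use the variance identity for the weighted mean: $\sum_i p_{i,r}\|y_i-\sum_j p_{j,r}y_j\|^2\le \sum_i p_{i,r}\|y_i\|^2$. Applying it with $y_i=\sum_t g_{i,t}$ and then $\|\sum_{t}y_t\|^2\le (k-T_r)\sum_t\|y_t\|^2$ gives
\[
\bar e_k\le \gamma^2(k-T_r)\textstyle\sum_t\sum_i p_{i,r}\|g_{i,t}\|^2 .
\]

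Step 2 splits each $g_{i,t}$ into three pieces,
\[
g_{i,t}=(g_{i,t}-\nabla f_i(x_{i,t}))+(\nabla f_i(x_{i,t})-\nabla f_i(\bar x_t))+\nabla f_i(\bar x_t),
\]
and uses $\|a+b+c\|^2\le 3(\|a\|^2+\|b\|^2+\|c\|^2)$. I take expectations by first conditioning on $\mathcal F_t$; this is valid because $p_{i,r}$ is $\mathcal F_{T_r}$-measurable. The bounded variance in Assumption~\ref{assum:main}(ii) turns the first piece into $\nu^2$, after summing with $\sum_i p_{i,r}=1$. $L$-smoothness turns the second piece into $L^2\bar e_t$. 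For the third piece I use $p_{i,r}\le1$ to bound it by $\sum_i\|\nabla f_i(\bar x_t)\|^2$. Together these give (i).

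For part (ii), note $k-T_r\le H$, which I read as the bound on the synchronization period $T_{r+1}-T_r$. Assumption~\ref{assum:bgd} gives $\sum_i\|\nabla f_i\|^2\le m(G^2+B^2\|\nabla f\|^2)$. The main obstacle is the recursive $3L^2\gamma^2H\sum_t\mathbb E[\bar e_t]$ term. I handle it by setting $S_k=\sum_{t=T_r}^{k-1}\mathbb E[\bar e_t]$, with $\bar e_{T_r}=0$, and $A_k=\sum_{t=T_r}^{k-1}(\nu^2+m(G^2+B^2\mathbb E\|\nabla f(\bar x_t)\|^2))$. Then
\[
\mathbb E[\bar e_k]\le 3\gamma^2H A_k+3\gamma^2HL^2 S_k .
\]
Since $\gamma\le 1/(\sqrt3 HL)$, we have $3\gamma^2HL^2\le 1/H$. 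Because $A_t$ is nondecreasing in $t$ and each $\mathbb E[\bar e_t]\le 3\gamma^2HA_k+S_t/H$, induction gives $S_t\le 3\gamma^2HA_k\,(t-T_r)(1+1/H)^{t-T_r}\le 3e\gamma^2H^2A_k$. Substituting back, $\mathbb E[\bar e_k]\le 3\gamma^2HA_k(1+e)\le 9H\gamma^2A_k$, which is (ii).

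For part (iii), I sum (ii) over $k$ in each round. Each inner index $t$ of a round appears in at most $H$ outer sums, so
\[
\textstyle\sum_{k=T_r}^{T_{r+1}-1}\mathbb E[\bar e_k]\le 9H^2\gamma^2\sum_{t=T_r}^{T_{r+1}-1}(\nu^2+mG^2+mB^2\mathbb E\|\nabla f(\bar x_t)\|^2).
\]
Summing over rounds $\hat r,\dots,R-1$ gives exactly the stated bound, where the constant terms contribute $(T_R-T_{\hat r})$ copies of $9\gamma^2(\nu^2+mG^2)H^2$.
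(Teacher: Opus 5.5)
Parts (i) and (iii) are correct and follow the paper's argument. In (i), the paper applies the weighted-variance identity per time step $t$ via Lemma~\ref{lem:weighted_gradient_inner_product} after Cauchy--Schwarz, whereas you apply it to the summed gradients first; the two are equivalent. In (iii), both arguments count each $t$ at most $H$ times.

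The gap is in the final step of (ii). Write $c\triangleq 3\gamma^2HA_k$, $q\triangleq 1+1/H$ and $n\triangleq k-T_r\le H$. Your induction gives $S_k\le c\,n\,q^{n}\le e\,cH$, and hence $\mathbb{E}[\bar e_k]\le c+S_k/H\le(1+e)c$. But $3(1+e)\approx 11.15>9$, so your inequality $3\gamma^2HA_k(1+e)\le 9H\gamma^2A_k$ is false. As written, you prove (ii) only with constant $3(1+e)$ rather than $9$, and (iii) inherits the wrong constant. The loss comes from bounding the geometric sum $\sum_{j=0}^{n-1}q^j$ by $n\,q^n$.

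To repair it, solve the recursion $S_{t+1}\le q\,S_t+c$, $S_{T_r}=0$, exactly: $S_k\le c\,\frac{q^n-1}{q-1}=cH(q^n-1)$. Then
\[
\mathbb{E}[\bar e_k]\le c+\tfrac{1}{H}S_k\le c\,q^{n}\le c\,(1+1/H)^{H}<e\,c=3e\,\gamma^2HA_k<9\,\gamma^2HA_k .
\]
If you also use $\bar e_{T_r}=0$, i.e.\ $S_{T_r+1}=0$, the factor $q^n$ sharpens to $q^{n-1}\le(1+1/H)^{H-1}$, which is the same factor the paper bounds by $e<3$.

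With this correction, your route is a valid and more elementary alternative to the paper's. The paper invokes Lemma~\ref{lemma_recursive}, which proves by induction the weighted estimate $a_k\le H\gamma^2\sum_{t=T_r}^{k-1}(\beta H\gamma^2+1)^{k-t-1}\theta_t$. That keeps a separate geometric weight on each $\theta_t$ and only afterwards bounds every weight by $(1+1/H)^{H-1}<3$. You instead freeze the forcing term at $A_k$, using that $A_t$ is nondecreasing, and run a scalar recursion on the partial sums $S_t$. This reaches the same $3e<9$ constant without the double-sum bookkeeping.
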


\begin{lemma}\label{lemma:gbar_sq}  
Let \fm{Assumptions \ref{assum:main} and \ref{assum:game} hold}. 
Let $\bar{g}_k$ be given by Definition~\ref{def:main_terms}. Then, the following hold. 

\noindent (a) For any $k \geq 1$,  \begin{align*}
    \mathbb{E}[\|\bar{g}_k\|^2] & \leq  \nu^2 + 2mL^2 \mathbb{E}\left[ \bar{e}_k \right] + 4m\fy{\delta_r^2} \textstyle  \sum_{i=1}^{m} \mathbb{E}[\| \nabla f_i(\bar{x}_k) \|^2]\\ & +4 \mathbb{E}\left[\| \nabla f(\bar{x}_k)\|^2\right].
\end{align*}
\noindent (b) Additionally, if Assumption \ref{assum:bgd} holds, then
{\begin{align*}
\mathbb{E}[\|\bar{g}_k\|^2]  &\leq \fm{\nu^2} + 2mL^2 \mathbb{E}\left[ \bar{e}_k \right] + 4m^2\fy{\delta_r^2}G^2  \\
& + 4\left(  m^2\fy{\delta_r^2}B^2 +  1 \right) \mathbb{E}\left[\| \nabla f(\bar{x}_k)\|^2\right].
\end{align*}}
\end{lemma}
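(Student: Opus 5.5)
We decompose $\bar g_k$ around the quantities controlled elsewhere in the paper. We write
\[
\bar g_k = \textstyle\sum_{i=1}^m p_{i,r}(g_{i,k}-\nabla f_i(x_{i,k})) + \sum_{i=1}^m p_{i,r}(\nabla f_i(x_{i,k})-\nabla f_i(\bar x_k)) + \sum_{i=1}^m (p_{i,r}-p_i^*)\nabla f_i(\bar x_k) + \nabla f(\bar x_k),
\]
using $\nabla f(\bar x_k)=\sum_i p_i^*\nabla f_i(\bar x_k)$. Call the four pieces $A_1,\dots,A_4$, so that $A_4=\nabla f(\bar x_k)$.

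\emph{Noise term.} We condition on $\mathcal F_k$. Within round $r$ the weights $p_{i,r}$ and the iterates $x_{i,k}$ are $\mathcal F_k$-measurable, and by Assumption~\ref{assum:main}(ii) the noise $A_1$ has zero conditional mean. Hence
\[
\mathbb E[\|\bar g_k\|^2\mid\mathcal F_k]=\mathbb E[\|A_1\|^2\mid\mathcal F_k]+\|A_2+A_3+A_4\|^2 .
\]
For $\mathbb E\|A_1\|^2$ we apply Jensen with the convex weights $p_{i,r}$, giving $\|\sum_i p_{i,r}u_i\|^2\le\sum_i p_{i,r}\|u_i\|^2$, and then the per-client variance bound $\nu^2$. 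This yields $\nu^2$ (the cross terms also vanish by independence across clients). Either route gives exactly $\nu^2$, which matches the claim.

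\emph{Deterministic part.} We split $\|A_2+A_3+A_4\|^2\le 2\|A_2\|^2+2\|A_3+A_4\|^2\le 2\|A_2\|^2+4\|A_3\|^2+4\|A_4\|^2$. For $A_2$, exactly as in the proof of Lemma~\ref{lem:agg_gradient}, we use $\|\sum_{i=1}^m y_i\|^2\le m\sum_i\|y_i\|^2$, together with $p_{i,r}^2\le p_{i,r}$ and $L$-smoothness. This gives $\|A_2\|^2\le mL^2\bar e_k$, hence the term $2mL^2\bar e_k$. For $A_3$ we again use the $m$-factor inequality and Lemma~\ref{linear_rate}(ii), $|p_{i,r}-p_i^*|\le\delta_r$. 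This gives $\|A_3\|^2\le m\delta_r^2\sum_i\|\nabla f_i(\bar x_k)\|^2$, hence $4m\delta_r^2\sum_i\|\nabla f_i(\bar x_k)\|^2$. The last term gives $4\|\nabla f(\bar x_k)\|^2$. Taking total expectation proves (a); the only care needed is that $\delta_r$ is deterministic, so it passes through the expectation.

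\emph{Part (b).} We insert Assumption~\ref{assum:bgd} in the form $\sum_i\|\nabla f_i(\bar x_k)\|^2\le mG^2+mB^2\|\nabla f(\bar x_k)\|^2$ into (a). This turns $4m\delta_r^2\sum_i\|\nabla f_i\|^2$ into $4m^2\delta_r^2G^2+4m^2\delta_r^2B^2\|\nabla f(\bar x_k)\|^2$, and collecting the gradient terms gives the stated coefficient.

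The main obstacle is the constant bookkeeping in the deterministic part. The two-level splitting must produce coefficients $2,4,4$ rather than $3,3,3$. If a tighter constant than $2mL^2$ were needed for the $A_2$ term, the splitting would have to be adjusted; as stated, the splitting above suffices.
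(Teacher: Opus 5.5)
Your proposal is correct and follows the paper's own proof: the same zero-mean splitting of the noise term, the same two-level bound $2\|A_2\|^2+4\|A_3\|^2+4\|A_4\|^2$ (with $p_{i,r}^2\le p_{i,r}$, $L$-smoothness, and Lemma~\ref{linear_rate}(ii)), and the same substitution of Assumption~\ref{assum:bgd} for part (b). Your Jensen bound $\|\sum_i p_{i,r}u_i\|^2\le\sum_i p_{i,r}\|u_i\|^2$ on the noise term is a slightly cleaner choice. The paper instead writes the noise term as $\sum_i p_{i,r}^2\,\mathbb{E}\|\cdot\|^2$, which implicitly requires the clients' gradient noises to be uncorrelated, and Assumption~\ref{assum:main} does not state that.
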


{\bf Proof of Theorem~\ref{Theorem:main}.} 
 (i-1) 
 From the $L$-smoothness of the global loss function and Lemma~\ref{lemma:agg}, we may write
\begin{align*}
& f(\bar{x}_{k+1}) \leq f(\bar{x}_k) - \gamma \nabla f(\bar{x}_k)^{\top} \bar{g}_k + \tfrac{L\gamma^2 }{2} \|\bar{g}_k\|^2.
\end{align*}
Taking expectation on the both sides and invoking Lemmas~\ref{lem:agg_gradient} and \ref{lemma:gbar_sq}, we obtain
\begin{align*}
 &(\tfrac{1}{4} -  m^2 \fm{\delta_r^2} B^2 (1+2L\gamma) -2L\gamma)\gamma\mathbb{E}[\|\nabla f(\bar{x}_k)\|^2] \\
 & \leq \mathbb{E}[f(\bar{x}_k)] - \mathbb{E}[f(\bar{x}_{k+1})] + \fm{\tfrac{L \gamma^2  \nu^2}{2}}+ (\tfrac{ 1}{2} +L \gamma ) mL^2 \gamma \mathbb{E}[\bar{e}_k]\\
& + m^2 \fm{\delta_r^2} G^2 \gamma (1+2L\gamma).
\end{align*}
Summing both sides for $k=T_{\hat{r}},\ldots,T_R-1$, dividing by $\gamma (T_R-T_{\hat r})$, and using Lemma~\ref{lem:Consensus_Error}, we obtain
{\begin{align*}
&(\tfrac{1}{4} -  m^2 \fm{\delta_{\hat{r}}^2} B^2 (1+2L\gamma) -2L\gamma) \tfrac{\textstyle\sum_{k=T_{\hat{r}}}^{T_R-1}\mathbb{E}[\|\nabla f(\bar{x}_k)\|^2]}{T_R-T_{\hat r}} \\
&\leq (\gamma (T_R-T_{\hat r}))^{-1}(\mathbb{E}[f(\bar{x}_{T_{\hat{r}}})] - \mathbb{E}[f(\bar{x}_{T_R})])+ \fm{\tfrac{L \gamma  \nu^2}{2}\textstyle} \\
& + (\tfrac{ 1}{2} +L \gamma ) mL^2 (9\gamma^2 (\nu^2 + mG^2) H^2) \\
& + (\tfrac{ 1}{2} +L \gamma ) mL^2 (\tfrac{9m\gamma^2 B^2 H^2}{T_R-T_{\hat r}}\textstyle\sum_{k=T_{\hat{r}}}^{T_R-1} \mathbb{E}[\|\nabla f(\bar{x}_k)\|^2])\\
&+  m^2 \fy{\delta_{0}}^2  G^2  (1+2L\gamma)  \tfrac{1}{T_R-T_{\hat r}}\textstyle\sum_{k=T_{\hat{r}}}^{T_R-1}\rho^{2r}.
\end{align*}} 
Recall that $T_r := H\, r$ for any $r\geq 0$, implying that $T_R-T_{\hat r} = H(R-\hat{r})$. We  may write
\begin{align*}
\tfrac{1}{T_R-T_{\hat r}}\textstyle\sum_{k=T_{\hat{r}}}^{T_R-1}\rho^{2r} & \leq \tfrac{1}{T_R-T_{\hat r}}\textstyle\sum_{r=\hat{r}}^{R-1}H \rho^{2r}  
= \tfrac{\rho^{2\hat{r}}-\rho^{2R}}{(1-\rho^2)(R-\hat{r})}.
\end{align*}
In view of $\gamma \leq  \sqrt{2}/{\sqrt{153} m B H  L}$ and $\gamma L \leq \tfrac{1}{32}$, we have $(\tfrac{ 1}{2} +L \gamma ) m L^2  9m\gamma^2 B^2 H^2 \leq   \tfrac{1}{16} .$ Further,  the assumption \fm{$\hat{r}\geq \tfrac{\ln(17m^2\delta_{0}^2  B^2)}{2\ln(1/\rho)}$}, implies that $m^2 \fm{\delta_{\hat{r}}^2} B^2 (1+2L\gamma) \leq \tfrac{1}{16}.$ From the preceding inequalities, we obtain
\begin{align*}
&  (\tfrac{1}{16})  \tfrac{1}{T_R-T_{\hat r}}\textstyle\sum_{k=T_{\hat{r}}}^{T_R-1}\mathbb{E}[\|\nabla f(\bar{x}_k)\|^2]  
 \\
& \leq (\gamma (T_R-T_{\hat r}))^{-1}(\mathbb{E}[f(\bar{x}_{\hat{r}})] - f^*)+ \tfrac{L \gamma  \nu^2}{2}\\
&+    m^2 \fy{\delta_{0}}^2  G^2 (1+2L\gamma )  \tfrac{\rho^{2\hat{r}}-\rho^{2R}}{(1-\rho^2)(R-\hat{r})} \\
&  +  \tfrac{153}{32} (\nu^2 + mG^2) m L^2 H^2\gamma^2.
\end{align*}
Invoking the definition of $k^*$, we obtain the result in (i). 

\noindent (i-2)
 {Note that since $k^*$ is chosen uniformly at random between $T_{\hat{r}}$ and $T_{R}-1$, and that the number of local steps in each round is constant and is equal to $H$,  we have that $r^*$ is uniformly distributed in $\{\hat{r},\ldots, R-1\}$.  We also have that $r^*$ should satisfy $T_{r^*} \leq k^* \leq T_{r^*+1}-1$. Invoking Lemma~\ref{linear_rate}, we have $\|\hat{N}_{r^*}-N^*\|^2 \leq \|\hat{N}_0-N^*\|^2 \rho^{2r^*}$. Thus, $\mathbb{E}[\|\hat{N}_{r^*}-N^*\|^2] \leq \|\hat{N}_0-N^*\|^2 \mathbb{E}[\rho^{2r^*}]$. We have $\mathbb{E}[\rho^{2r^*}] = \frac{1}{R-\hat{r}}\sum_{r^*=\hat{r}}^{R-1} \rho^{2r^*}=\frac{\rho^{2\hat{r}} - \rho^{2R}}{(R - \hat{r} )(1 - \rho^2)}$.  }
 
\noindent (ii) Consider the bound in (i-1). To ensure that this inequality holds, it is necessary to have $\gamma \leq (\min\left\{\frac{1}{32}, \tfrac{\sqrt{2}}{\sqrt{153} m B H  }\right\})\frac{1}{L}$, $   \fm{\tfrac{\ln(17m^2\delta_{0}^2  B^2)}{2\ln(1/\rho)}} \leq  \hat{r}$. We have $\gamma := \sqrt{\frac{1}{R_{\varepsilon} H}}$, $H$ as a constant, $T_R:= R_\varepsilon H$, and $T_{\hat{r}}:=\frac{R_\varepsilon H}{2}$. Further $\hat{r} = T_{\hat{r}}/H = R_\varepsilon/2$. From $R_\varepsilon \geq R_3$ and $R_\varepsilon H \geq M\max\{(m BLH)^2,L^2\}$, for some suitable $M>0$, the two aforementioned conditions are satisfied, and thus, the bound in (i-1) holds. We obtain
\begin{align*}
\mathbb{E}[\|\nabla f(\bar{x}_{k^*})\|^2] &\leq 
32 (\hat{f} - f^*)\tfrac{1}{\sqrt{R_\varepsilon H}}+  \tfrac{8 L   \nu^2 }{\sqrt{R_\varepsilon H}} \\
& +  \tfrac{153}{2} (\nu^2 + mG^2) m L^2 H^2 \tfrac{1}{R_\varepsilon H} \\
&+   32 m^2 \fy{\delta_{0}}^2  G^2 (1+\tfrac{2L}{\sqrt{R_\varepsilon H}} )  \tfrac{\rho^{2\hat{r}}}{(1-\rho^2)R_\varepsilon H}\\
&\leq 
153\sqrt{\tfrac{(\hat{f} - f^*)^2+L^2\nu^4+m^2L^4H^4(\nu^2+mG^2)^2}{R_\varepsilon H}}   \\
& +     ( \tfrac{96 m^2 \fy{\delta_{0}}^2  G^2}{(1-\rho^2) L^2})\rho^{R_\varepsilon}\fy{,}
\end{align*}
where we used $2\hat{r} = R_\varepsilon$ and assumed that $R_\varepsilon H \geq L^2 $. Assuming that $R_\varepsilon \geq (306)^2 R_{1,\varepsilon}$, we obtain 
\begin{align*}
\mathbb{E}[\|\nabla f(\bar{x}_{k^*})\|^2] & \leq 
\tfrac{\varepsilon}{2}  +     ( \tfrac{96 m^2 \fy{\delta_{0}}^2  G^2}{(1-\rho^2) L^2})\rho^{R_\varepsilon}.
\end{align*}
From $R_\varepsilon \geq R_{2,\varepsilon},$ 
we have that 
$ ( \tfrac{96 m^2 \fy{\delta_{0}}^2  G^2}{(1-\rho^2) L^2})\rho^{R_\varepsilon} \leq \tfrac{\varepsilon}{2}$. Thus, we obtain $\mathbb{E}[\|\nabla f(\bar{x}_{k^*})\|^2] \leq \varepsilon$. It suffices to show that  $\ \mathbb{E}[\|\hat{N}_{r^*}-N^*\|^2]  \leq \varepsilon$. From (i-2),  substituting $2\hat{r} = R_\varepsilon$ and $R-\hat{r} = R_\varepsilon/2$, and using $R_\varepsilon H \geq L^2$, we have 
\begin{align*} &\mathbb{E}[\|\hat{N}_{r^*}-N^*\|^2] \leq  \tfrac{\|\hat{N}_0 - N^*\|^2(\rho^{2\hat r}-\rho^{2R})}{(1-\rho^2)(R-\hat{r})} \leq \tfrac{2\|\hat{N}_0 - N^*\|^2\rho^{R_\varepsilon}}{(1-\rho^2)R_\varepsilon}\\
&\leq  \tfrac{\|\hat{N}_0 - N^*\|^2 \max\{\sqrt{192} m  (m-1)\|N^*\|_1^{-1} G,1\}^2\rho^{R_\varepsilon}}{(1-\rho^2)L^2}\leq \varepsilon  ,\end{align*}
where the last inequality follows from $R_\varepsilon \geq R_{2,\varepsilon}$.

\noindent {\bf Proof of Theorem~\ref{thm:nonconvex_mm}.}

\noindent \fr{(i) The proof follows the proof of
Theorem~\ref{Theorem:main} (i-1) up to the per-iteration
drift inequality, since Lemmas~\ref{lem:agg_gradient} and
\ref{lemma:gbar_sq} hold under $|p_{i,r} - p_i^*| \leq \hat{\delta}_r$
 and do not require strong monotonicity. From the
$L$-smoothness of $f$, Lemma~\ref{lemma:agg}, and
Lemmas~\ref{lem:agg_gradient} and \ref{lemma:gbar_sq}, we obtain
\begin{align*}
&(\tfrac{1}{4} - m^2 \delta_r^2 B^2(1+2L\gamma) - 2L\gamma)\,\gamma\,
\mathbb{E}[\|\nabla f(\bar{x}_k)\|^2]\\
&\le \mathbb{E}[f(\bar{x}_k)] - \mathbb{E}[f(\bar{x}_{k+1})]
+ \tfrac{L\gamma^2 \nu^2}{2}
+ (\tfrac{1}{2}+L\gamma)\,m L^2 \gamma\,\mathbb{E}[\bar{e}_k]\\
& + m^2 \delta_r^2 G^2 \gamma(1+2L\gamma).
\end{align*}
}
\fyrr{\noindent\textit{Departure from the strongly monotone proof.} At
this point, the proof of Theorem~\ref{Theorem:main} substitutes
$\delta_r = \delta_0 \rho^r$ from Lemma~\ref{linear_rate} (ii), which
delivers a geometric contraction factor $\rho^{2\hat r}$. Under
Assumption~\ref{assum:game_monotone}, geometric contraction is no
longer available. By Lemma~\ref{lem:welfare} (iii) and strict
convexity of $h$ on $\mathcal{N}^* \subseteq \mathcal{N}$, we  have
$\bar N_R \to N^*$. Invoking the assumption that $\hat{N}_r$ has a limit point, then by Ces\`aro mean theorem $\hat{N}_r \to N^*$. Consequently, $\hat{\delta}_r$ remains a generic
vanishing sequence rather than a geometric one, and we retain it
symbolically in the analysis.}

\fyrr{Notably, in view of Lemma~\ref{lem:welfare} (iv), $\hat{\delta}_r \to 0$, and thus there exists $\hat{r}$ such that
$17 m^2 \hat{\delta}_{\hat{r}}^2 B^2 \le 1$. Combined with
$\gamma L \le 1/32$, this gives
$m^2 \hat{\delta}_r^2 B^2(1+2L\gamma) + 2L\gamma \le 1/16 + 1/16 = 1/8$, so
$\tfrac{1}{4} - m^2 \hat{\delta}_r^2 B^2(1+2L\gamma) - 2L\gamma \ge 1/8$
uniformly in $r$. We obtain \begin{align*}
&  (\tfrac{1}{16})  \tfrac{1}{T_R-T_{\hat r}}\textstyle\sum_{k=T_{\hat{r}}}^{T_R-1}\mathbb{E}[\|\nabla f(\bar{x}_k)\|^2]  
 \\
& \leq (\gamma (T_R-T_{\hat r}))^{-1}(\mathbb{E}[f(\bar{x}_{\hat{r}})] - f^*)+ \tfrac{L \gamma  \nu^2}{2}\\
&+    m^2    G^2 (1+2L\gamma )  \tfrac{1}{T_R-T_{\hat r}}\textstyle\sum_{k=T_{\hat{r}}}^{T_R-1} \hat{\delta}_{r(k)}^2 \\
&  +  \tfrac{153}{32} (\nu^2 + mG^2) m L^2 H^2\gamma^2.
\end{align*}
Multiplying the both sides by $16$ and invoking the definition of $k^*$, we obtain the bound.} 

\noindent \fr{(ii) Taking
$\limsup_{R\to\infty}$ on the both sides in (i), the first term on the right vanishes, and the third term vanishes since
$\hat{\delta}_r \to 0$ as $R \to \infty$ implies
$\tfrac{1}{R}\sum_{r=\hat{r}}^{R-1}\hat{\delta}_r^2 \to 0$ by the Ces\`aro mean theorem. By
the choice of $\gamma$, the second term satisfies $8L\gamma\nu^2 \le
\varepsilon/2$ and the third term satisfies
$\tfrac{153}{2}(\nu^2+mG^2)\,mL^2 H^2 \gamma^2 \le \varepsilon/2$,
and hence $\limsup_{R\to\infty}\mathbb{E}[\|\nabla f(\bar{x}_{k^*})\|^2]
\le \varepsilon$.}

\section{Numerical Results}

We implement \texttt{IncentFedAvg} on both MNIST and CIFAR-10 datasets and demonstrate the practical effectiveness of the method in each case. 

\subsection{\fr{Strongly monotone game}} For the MNIST experiment, we utilize a two-layer neural network with input dimension $d = 784$, hidden layer size $M$, and $C = 10$ output classes. The weight matrices $\mathbf{X}^{(1)} \in \mathbb{R}^{M \times (d+1)}$ and $\mathbf{X}^{(2)} \in \mathbb{R}^{C \times (M+1)}$ are initialized using scaling factors optimized for the chosen learning rate $\gamma$. For the CIFAR-10 experiment, we adopt a convolutional neural network architecture suited for the $32 \times 32 \times 3$ input images, with two convolutional layers followed by fully connected layers and $C = 10$ output classes. For CIFAR-10, the $32 \times 32 \times 3$ images are flattened to $d = 3072$, resulting in $\mathbf{X}^{(1)} \in \mathbb{R}^{3073 \times 128}$ and $\mathbf{X}^{(2)} \in \mathbb{R}^{129 \times 10}$. Both weight matrices are initialized with scaled random values, and stepsize $\gamma = 0.005$.

The local loss for each client $i$ is defined by the cross-entropy $E = -\tfrac{1}{J}\textstyle\sum_{j=1}^{J} \sum_{c=1}^{C} v_{jc} \log(v^{\prime}_{jc})$, where $v^{\prime}_{jc}$ is the predicted probability derived from the softmax output of the pre-activation $a_c^{(2)}$. 
 
We employ the random discovery model for payoff functions $a_i(N) = q_i Q^{\top}N = \sum_{j=1}^m p_ip_j^{\top}N_j$ and linear model for cost functions $c_i(N_i)= \theta_i N_i$ (cf. Section~\ref{sec:prob}). 
$\theta_i$ is randomly chosen in $[0,\|q_i\|^2]$, and we regularize the agents' net utility functions to ensure strong monotonicity of the Nash game, with regularization parameter $\lambda:=10^{-5}$.

\begin{figure}[t]
\centering
\includegraphics[width=0.48\linewidth]{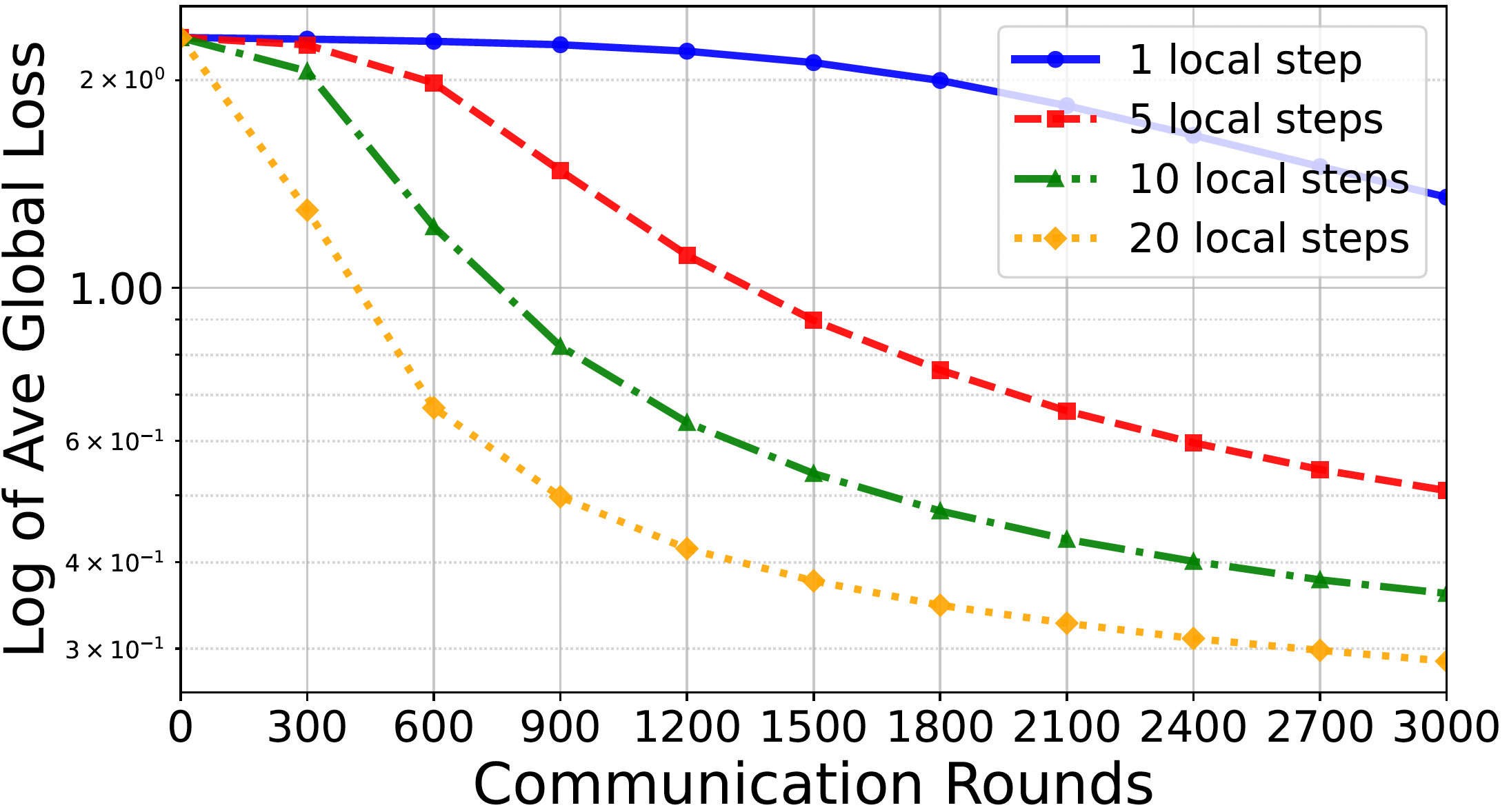}
\includegraphics[width=0.48\linewidth]{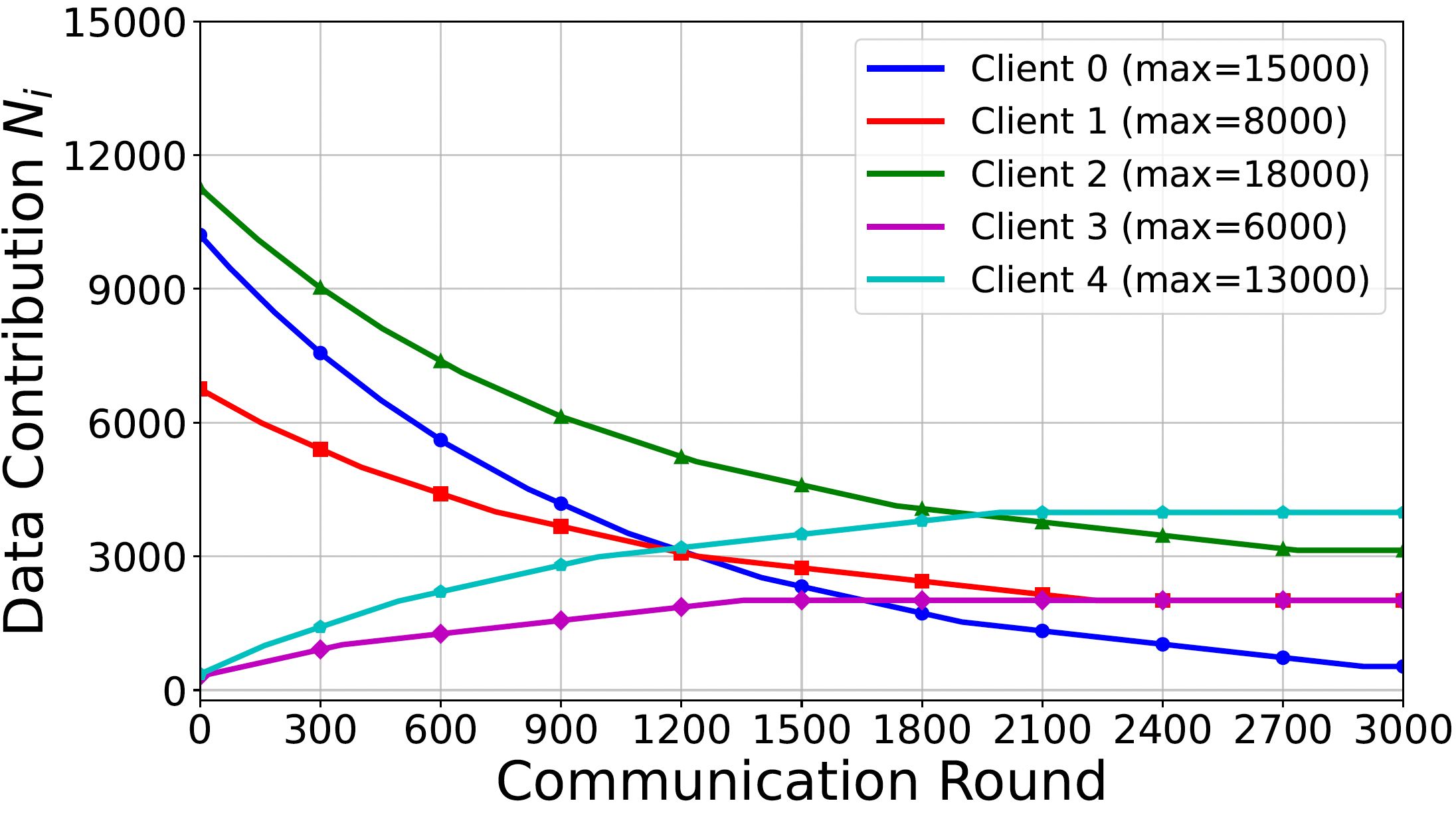}
\caption{\textbf{MNIST.} Left: Global cross-entropy loss across different local steps $H$. Right: Convergence of client contributions $N_{i,r}$ toward the Nash equilibrium under \texttt{IncentFedAvg}.}
\label{fig:mnist_results}
\end{figure}

\begin{figure}[t]
\centering
\includegraphics[width=0.48\linewidth]{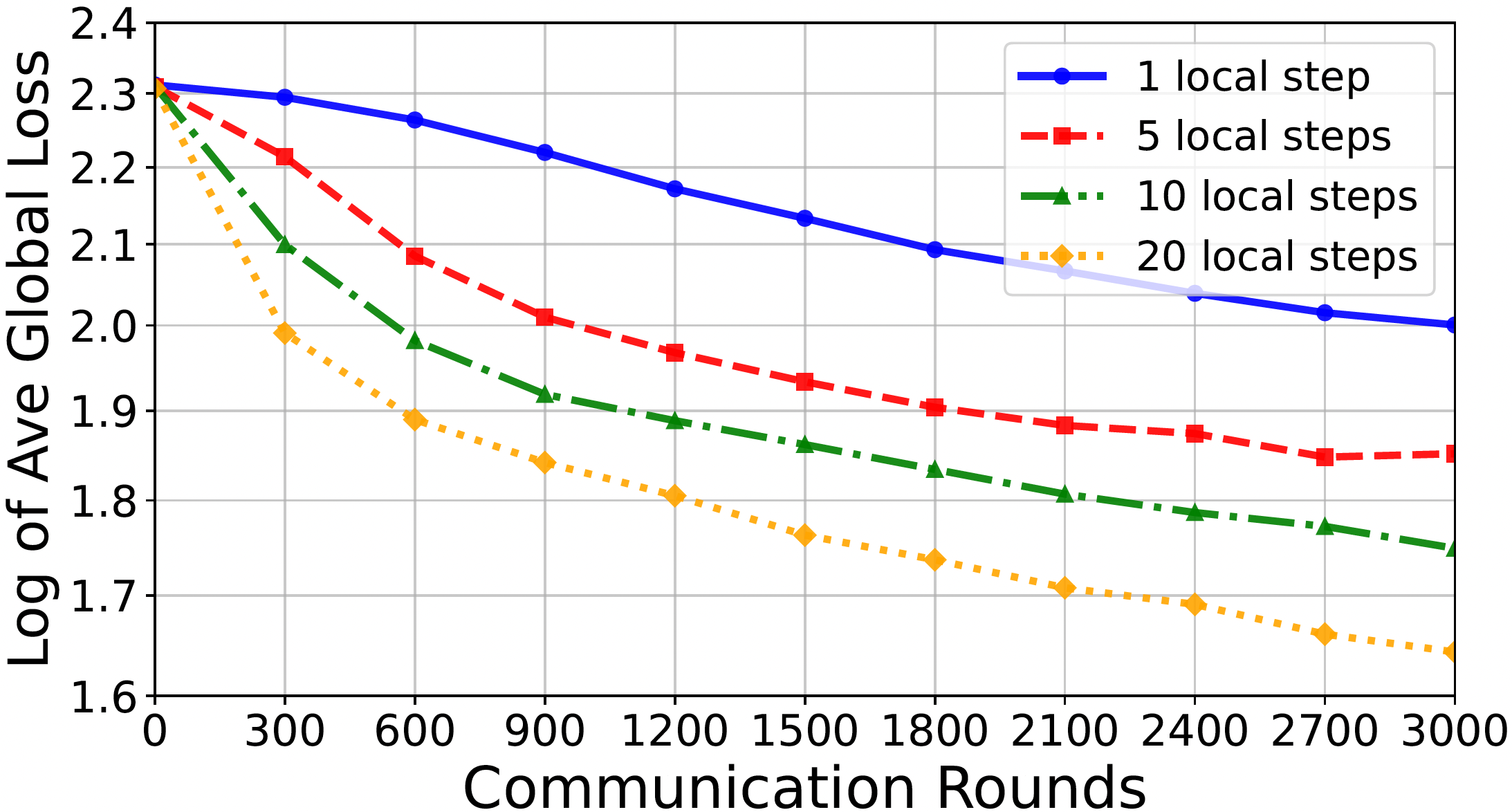}
\includegraphics[width=0.48\linewidth]{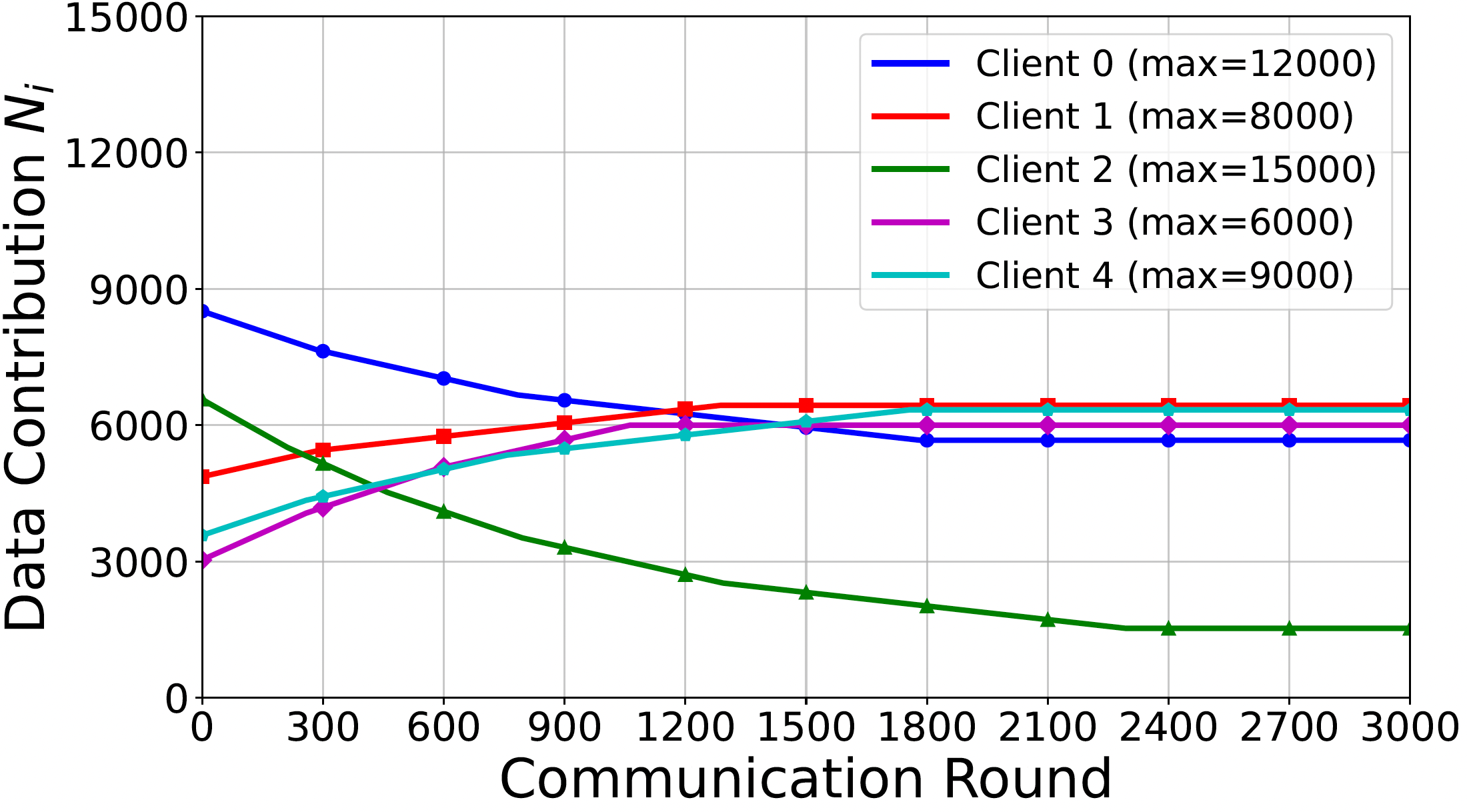}
\caption{\textbf{CIFAR-10.} Left: Global cross-entropy loss across different local steps $H$. Right: Convergence of client contributions $N_{i,r}$ toward the Nash equilibrium under \texttt{IncentFedAvg}.}
\label{fig:cifar_results}
\end{figure}

\subsubsection{Observations and insights}
Fig.~\ref{fig:mnist_results} and Fig.~\ref{fig:cifar_results} present the numerical results. The left plots show the global cross-entropy loss versus communication rounds for different local update steps $H \in \{1,5,10,20\}$, while the right plots show the evolution of client data contributions $N_{i,r}$, illustrating convergence to the  equilibrium of the participation game.

\paragraph{Impact of local computation ($H$)}
The loss curves in Fig.~\ref{fig:mnist_results} and Fig.~\ref{fig:cifar_results} show that larger $H$ indeed accelerates convergence. In particular, $H=1$ converges the slowest, while $H=20$ reaches low loss values in fewer communication rounds. This reflects a standard federated learning trade-off: increased local computation allows clients to make more progress before synchronization, improving communication efficiency. The effect is particularly noticeable on CIFAR-10 despite its higher task complexity.

\paragraph{Nash equilibrium stability}
The right plots in Fig.~\ref{fig:mnist_results} and Fig.~\ref{fig:cifar_results} show that client contributions $N_{i,r}$ converge to stable values, indicating the emergence of a sustainable Nash equilibrium. Clients with heterogeneous utilities settle at their contribution levels across both datasets. Interestingly, in both settings, there exist clients whose participation strategy has either decreased or increased in reaching stability, highlighting the trade-off between the payoff and cost function of each client. 

\subsection{\fr{Merely monotone game: welfare-selected equilibrium}}
\label{subsec:numerics-mm}

\fr{We next evaluate \texttt{IncentFedAvg} on the merely monotone formulation in~\eqref{prob:incentive_FL_welfare}, where the participation game admits a set $\mathcal{N}^{*}$ of equilibria and the welfare-selection objective $h$ identifies a unique target $N^{*} \in \arg\min_{N \in \mathcal{N}^{*}} h(N)$. We keep the random discovery payoff $a_i(N) = q_i Q^{\top} N$ and the linear cost $c_i(N_i) = \theta_i N_i$, and drop the strong-monotonicity regularizer used in the first experiment, so that $F$ is merely monotone. For the welfare loss we use the soft-plus \fyrr{function}}
\fr{$
  h(N) \;=\; \log\!\Big(1 + \exp\!\big(\textstyle\sum_{i=1}^{m} N_i\big)\Big),$}
\fr{which is strictly convex, and by rewarding large aggregate participation it selects the welfare-improving equilibrium in $\mathcal{N}^{*}$. The participation update uses the rate conditions of Lemma~\ref{lem:welfare} with $\tilde{\gamma}_r = \tilde{\gamma}_0 (r+1)^{-a}$ and $\lambda_r = \lambda_0 (r+1)^{-b}$, where we set $a = 0.5$ and $b = 0.25$. We take $\mathcal{N} = \prod_{i=1}^{m}[N_i^{\min}, N_i^{\max}]$ with $m = 5$ and per-client upper bounds $(N_i^{\max})_{i=0}^{4} = (15000, 8000, 18000, 6000, 13000)$.}

\fr{Fig.~\ref{fig:mnist_mm} (left) reports the per-client trajectories $N_{i,r}$ on MNIST. Three clients (Clients $0,2,4$) converge to their upper bounds $N_i^{\max}$ within roughly $1000$ rounds, while Clients $1,3$ are driven to zero. Clients whose marginal discovery payoff dominates their linear cost reach their maximum participation, and clients for which $\theta_i$ dominates exit the game. The new effect, absent in the strongly monotone setting, is that the iteratively regularized update reliably selects the same welfare-maximizing equilibrium across runs. Fig.~\ref{fig:mnist_mm} (right) shows the aggregate $\sum_i N_{i,r}$, which after a short transient stabilizes at $46{,}000 = N_0^{\max} + N_2^{\max} + N_4^{\max}$, matching the welfare-selected equilibrium.}

\fr{\begin{figure}[t]
\centering
\includegraphics[width=0.48\linewidth]{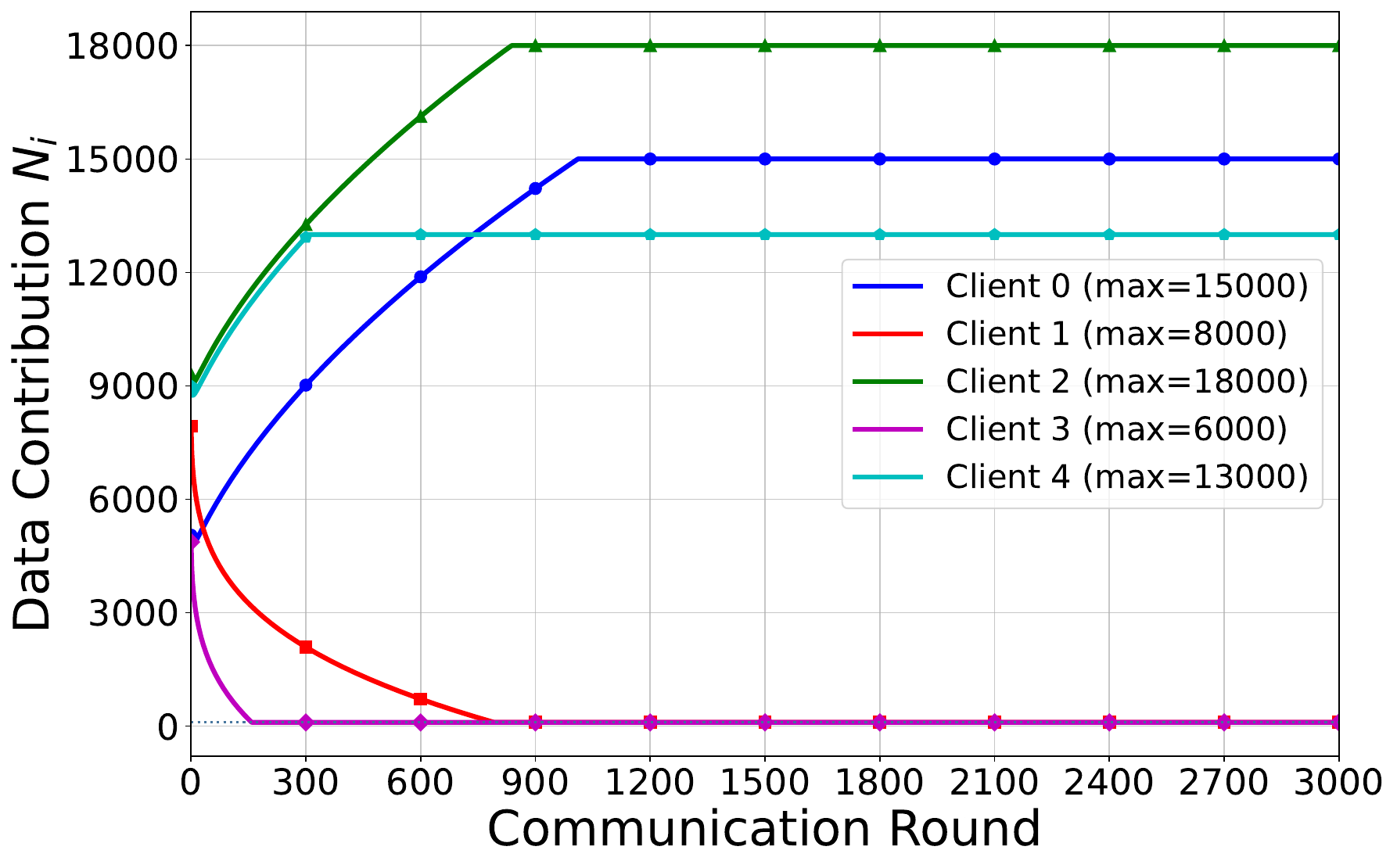}
\includegraphics[width=0.48\linewidth]{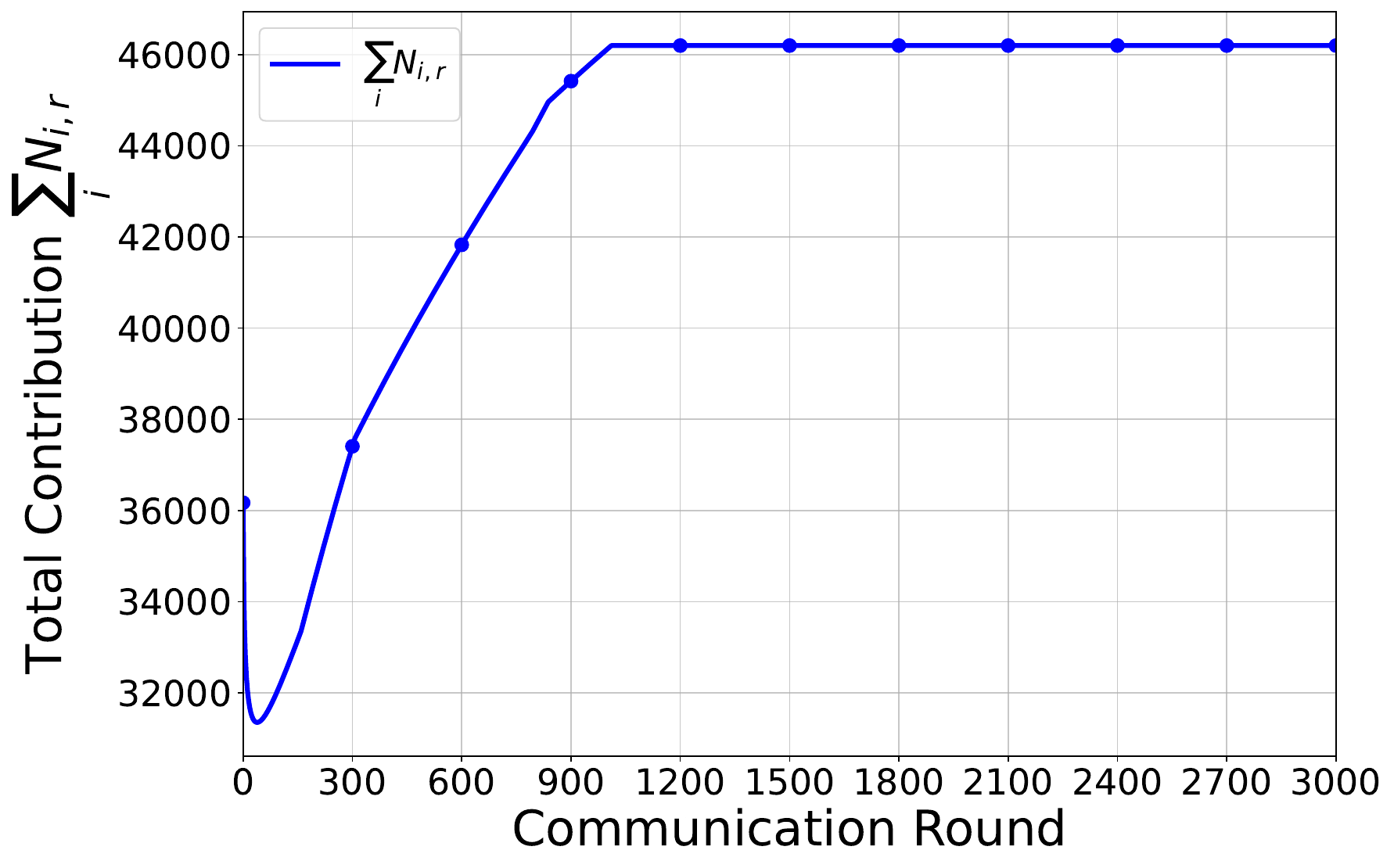}
\caption{\textbf{MNIST, merely monotone game.} Left: Per-client contributions $N_{i,r}$ under the iteratively regularized update of Algorithm~\ref{alg:IncentFedAvg} with welfare loss $h(N)=\log(1+\exp(\sum_i N_i))$, random discovery payoff, and linear cost. Three clients converge to their maximum participation level $N_i^{\max}$ while two are driven to the minimum, identifying the welfare-selected equilibrium. Right: Aggregate contribution $\sum_i N_{i,r}$ converges to $46{,}000$, the sum of $N_i^{\max}$ over the surviving clients.}
\label{fig:mnist_mm}
\end{figure}}

\fr{\subsection{Zero-sum game with power-law payoff}
\label{subsec:numerics-zs}

To stress-test the algorithm in a setting where contributions are unambiguously competitive, we consider a payoff--cost configuration in which one client's reward is funded by the contributions of the others.}

\fr{\begin{definition}[Zero-sum participation game]
\label{def:zero-sum}
A data participation game is \emph{zero-sum} if each client's cost equals the sum of the other clients' payoffs,
\begin{align}
c_i(N) \;=\; \textstyle\sum_{j\neq i} a_j(N), \qquad i \in [m],
\end{align}
so that the per-client utility loss is $l_i(N) = \sum_{j \neq i} a_j(N) - a_i(N)$. For $m=2$ this is a zero-sum game in the standard sense ($l_1 + l_2 = 0$); for $m \geq 3$ the configuration retains the property that any unit of payoff to one client is registered as cost by every other client.
\end{definition}}

\fr{\paragraph{Power-law payoff} We adopt the empirical scaling-law form of Kaplan~et~al.~\cite{kaplan2020scaling}, in which the cross-entropy loss on a neural model scales with dataset size $m$ as $\ell(m) = \alpha \cdot m^{-\beta}$ for some $\alpha > 0$ and $\beta \in (0,1]$. The resulting task accuracy, used as the client payoff, is}
\fr{\begin{align}
  a_i(N) \;=\; 1 - \alpha_i \, \|N\|_1^{-\beta_i}, \qquad \alpha_i > 0, \;\; \beta_i \in (0,1]
  \label{eq:power-law-payoff}
\end{align}}
\fr{This payoff is nonnegative and nondecreasing in the aggregate contribution $\|N\|_1 = \sum_j N_j$, and matches the scaling behavior observed in large neural models~\cite{kaplan2020scaling,henighan2020scaling}, where more data improves performance but each additional data point helps less than the previous one. We draw $\alpha_i$ and $\beta_i$ per client to model heterogeneous scaling, and keep the same upper bounds $(N_i^{\max})_{i=0}^{4} = (15000, 8000, 18000, 6000, 13000)$ as in Section~\ref{subsec:numerics-mm} for direct comparability.}

\fr{\paragraph{Participation dynamics} Under~\eqref{eq:power-law-payoff} and Definition~\ref{def:zero-sum}, the partial derivative of client $i$'s utility loss is
\begin{align*}
  \nabla_{N_i} l_i(N)
  \;=\; \textstyle\sum_{j\neq i} \alpha_j \beta_j \|N\|_1^{-\beta_j - 1} \;-\; \alpha_i \beta_i \|N\|_1^{-\beta_i - 1}.
\end{align*}}
\fr{The sign of this expression partitions clients into two groups. Clients whose own scaling coefficients dominate the sum of the others' decrease $l_i$ by raising $N_i$ and reach their maximum capacity, while the rest are pushed to the lower bound of the strategy set.}

\fr{Fig.~\ref{fig:mnist_zs} reports the loss curves and per-client trajectories on MNIST. The partition predicted by the sign analysis is visible in the right plot. Clients $0$ and $4$, whose scaling coefficients dominate, climb to their upper bounds $N_i^{\max}$, while Clients $1,2,3$ decay approximately linearly under the projection and exit at zero by round $\approx 1400$. The decay is monotone rather than oscillatory, since once a client's projected gradient turns inward, the boundary becomes absorbing under the projected update. 
}

\fr{\begin{figure}[t]
\centering
\includegraphics[width=0.48\linewidth]{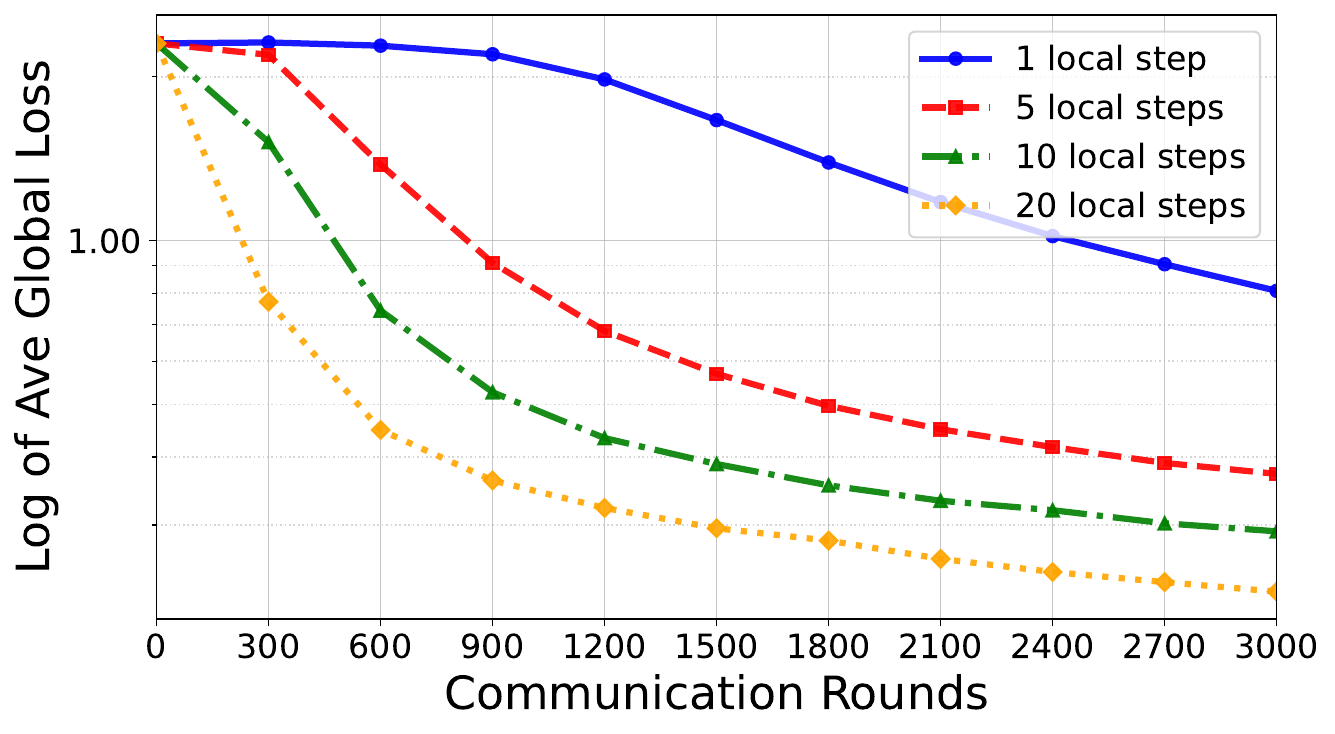}
\includegraphics[width=0.48\linewidth]{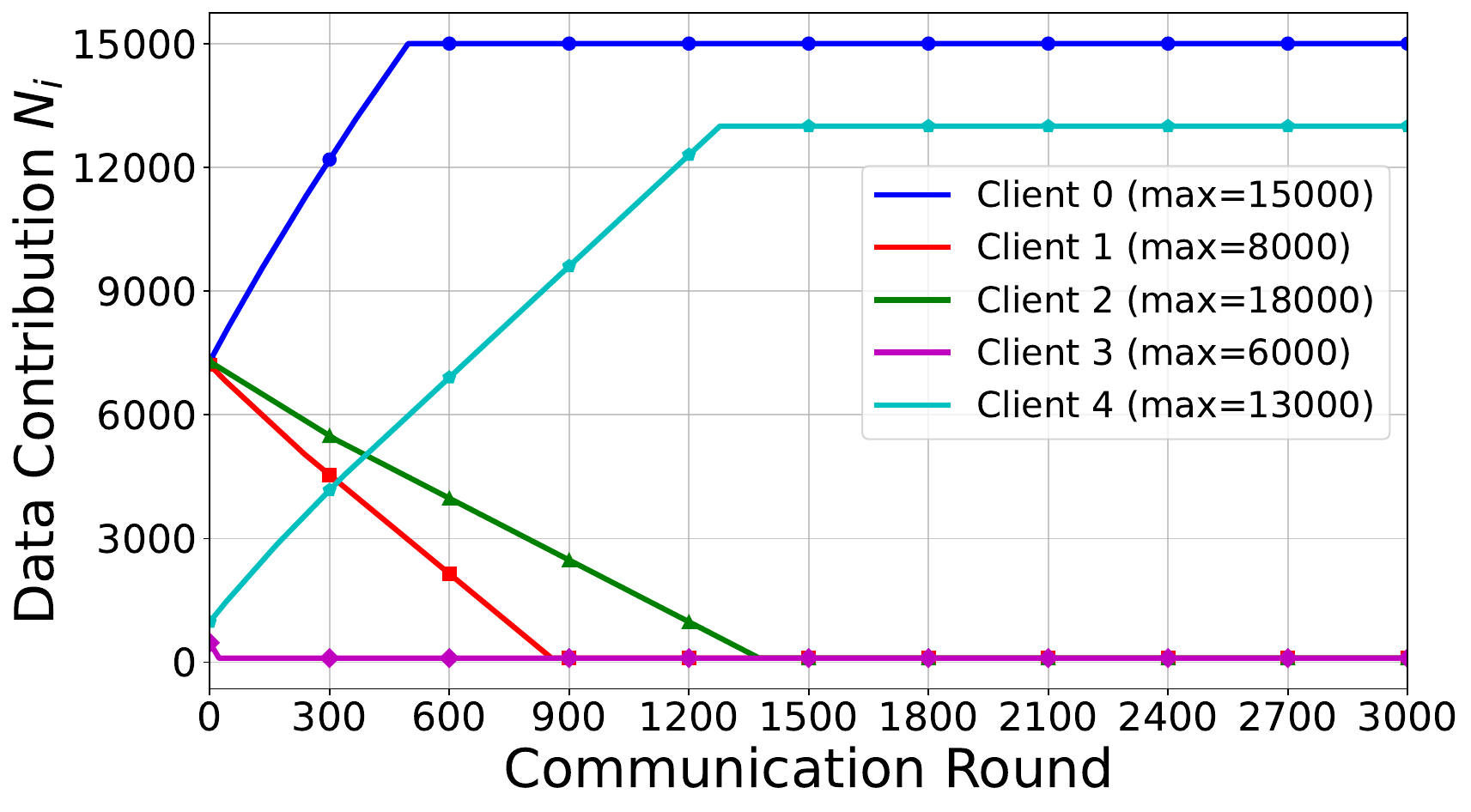}
\caption{\textbf{MNIST, zero-sum game with power-law payoff.} Left: Global cross-entropy loss across different local steps $H$. Right: Per-client contributions $N_{i,r}$ under the Kaplan~et~al.~scaling-law payoff $a_i(N) = 1 - \alpha_i \|N\|_1^{-\beta_i}$ and cost $c_i(N) = \sum_{j\neq i} a_j(N)$ from Definition~\ref{def:zero-sum}. Clients with the dominant scaling coefficients (Clients $0,4$) converge to their upper bounds $N_i^{\max}$, while the remaining clients exit the game.}
\label{fig:mnist_zs}
\end{figure}}

\fr{\paragraph{Comparison across game configurations} Across the three configurations, strongly monotone discovery (Figs.~\ref{fig:mnist_results},~\ref{fig:cifar_results}), merely monotone discovery with welfare selection (Fig.~\ref{fig:mnist_mm}), and zero-sum with power-law payoff (Fig.~\ref{fig:mnist_zs}), \texttt{IncentFedAvg} converges to a stable participation profile, with the limit determined by the game's incentive structure rather than by initialization. The strongly monotone discovery game returns a heterogeneous interior equilibrium, welfare selection in the merely monotone game pushes total participation to the sum of the high-capacity clients' upper bounds, and the zero-sum power-law game concentrates participation in the clients with the dominant scaling coefficients. In all three, the participation equilibrium emerges within an order of magnitude of the rounds needed for the global model to converge, confirming that the coupled cooperative and noncooperative design produces compatible time scales.}

\section{Conclusions}
We proposed \texttt{IncentFedAvg}, an incentive-aware FedAvg method that incorporates strategic data participation into federated learning. The approach couples cooperative local model training with a noncooperative Nash game determining client data contributions, allowing clients to adjust participation based on payoff–cost tradeoffs.  \fyrr{For strongly monotone games, we establish performance guarantees for convex and nonconvex objectives. For merely monotone games, we prove asymptotic convergence under welfare loss minimization.} \fr{While the proposed framework captures strategic data participation 
through dataset size $N_i$, it does not explicitly model data 
heterogeneity at the level of individual sample quality or local 
distribution shift. Notably, the random discovery payoff 
$a_i(N) = q_i Q^\top N$ does encode each client's class 
distribution $q_i$, so clients with more informative or 
complementary distributions receive higher marginal payoffs. 
However, the strategic variable $N_i$ remains a scalar quantity 
representing dataset size, and differences in contribution 
value arising from distributional heterogeneity are not 
explicitly captured in the Nash game formulation. Extending the 
framework to incorporate distribution-aware contribution metrics represents an important and practically relevant direction for 
future work.}




\bibliographystyle{IEEEtran}
\bibliography{ref_v4}

\newpage
\section*{APPENDIX}

\subsection{Supplementary material}
\begin{remark}
If Assumption~\ref{assum:main} (i) holds, then the global loss function $f$ is $L$-smooth. This is because for any $x,y \in \mathbb{R}^n$ we have
\begin{align*}
&\|\nabla f(x) - \nabla f(y)\| = \left\|\textstyle \sum_{i=1}^m p_i^* (\nabla f_i(x) -   \nabla f_i(y)\right\| \\
&\leq \textstyle\sum_{i=1}^m\left\| p_i^* (\nabla f_i(x) -   \nabla f_i(y))\right\|  \leq L(\textstyle\sum_{i=1}^m  p_i^*)\|x-y\| \\ & = L\|x-y\|. 
\end{align*} 
\end{remark}
\begin{remark}[Compact local representation of Alg.~\ref{alg:IncentFedAvg}]
Let us define $\mathcal{I} \triangleq \{K_1, K_2, \ldots\}$ where $K_r \triangleq T_r - 1$ for $r \geq 1$. The following equation, for $k \geq 0$, compactly represents the local update rules of Algorithm \ref{alg:IncentFedAvg}.
\begin{equation}\label{updated_x}
x_{i,k+1} := 
\begin{cases}
 \textstyle\sum_{j=1}^m \fm{p_{j,r}}\left(x_{j,k} - \gamma g_{j,k} \right), & k \in \mathcal{I} \\
x_{i,k} - \gamma g_{i,k} , & k \notin \mathcal{I}.
\end{cases}
\end{equation}
\end{remark}
{\bf Proof of Lemma~\ref{lemma:agg}}\begin{proof}  
 {Case 1.} If $k \in \mathcal{I}$, from equation \eqref{updated_x} we may write
\fm{\begin{align*}
&x_{i,k+1} = \textstyle\sum_{j=1}^m \fm{p_{j,r}}\left(x_{j,k} - \gamma g_{j,k}\right) \\
&= \textstyle\sum_{j=1}^m \fm{p_{j,r}} x_{j,k} - \gamma   \textstyle\sum_{j=1}^m \fm{p_{j,r}} g_{j,k} = \bar{x}_k - \gamma \bar{g}_k,
\end{align*}}
where the last equation is implied by the definition of $\bar{x}_k$ and $\bar{g}_k$. Taking the average on the both sides over $i \in [m]$, we obtain $\bar{x}_{k+1} = \bar{x}_k - \gamma \bar{g}_k$.

\noindent {Case 2.} If $k \notin \mathcal{I}$, from equation \eqref{updated_x}, 
$x_{i,k+1} = x_{i,k} - \gamma g_{i,k}.$ By multiplying the both sides by $\fm{p_{i,r}}$ and then, summing over $i \in [m]$, we obtain \fm{$\textstyle\sum_{i=1}^m \fm{p_{i,r}} x_{i,k+1} =  \textstyle\sum_{i=1}^m \fm{p_{i,r}} x_{i,k} - \gamma \textstyle\sum_{i=1}^m \fm{p_{i,r}} g_{i,k}.$}  Invoking Definition \ref{def:main_terms}, we obtain the result. 
\end{proof}

{\bf Proof of Lemma~\ref{linear_rate} (i)} 
\begin{proof}
\noindent {(i)} From the fixed-point property of the projected gradient method,  $N^* = \Pi_{\mathcal{N}}[ {N}^* -\tilde{\gamma}F({N}^*) ]$.  From the nonexpansivity of the Euclidean projection, we may write  
\begin{align*}
&\|\hat{N}_{r+1} - N^*\|^2  \\ &= \|\Pi_{\mathcal{N}}[ \hat{N}_{r} -\tilde{\gamma}F(\hat{N}_{r}) ]-\Pi_{\mathcal{N}}[ {N}^* -\tilde{\gamma}F({N}^*) ]\|^2\\
&\leq \|(\hat{N}_{r} -{N}^*)-\tilde\gamma(F(\hat{N}_{r})-F({N}^*))\|^2\\
& = \|\hat{N}_{r} -{N}^*\|^2 +\tilde\gamma^2\|F(\hat{N}_{r})-F({N}^*)\|^2  \\ & -2\tilde{\gamma}( \hat{N}_{r} -{N}^*)^\top(F(\hat{N}_{r})-F({N}^*))\\
&\leq (1-2\mu_F\tilde{\gamma}+L_F^2\tilde{\gamma}^2)\|\hat{N}_{r} -{N}^*\|^2,
\end{align*}
where in the preceding relation, we used the strong monotonicity and the Lipschitz continuity of $F$. From $\tilde\gamma \leq \tfrac{\mu_F}{L_F^2}$, we have $0 < 1-2\mu_F\tilde{\gamma}+L_F^2\tilde{\gamma}^2 \leq (1-0.5\mu_F\tilde\gamma)^2<1$. Thus, $\|\hat{N}_r -N^* \|\leq (1-0.5\mu_F\tilde\gamma) \|\hat{N}_r - N^*\|$. Unrolling this relation recursively, we obtain the result in (i). 
\end{proof}

{\bf Proof of Lemma~\ref{lem:weighted_gradient_inner_product}}\begin{proof}
By the linearity of expectation and rearranging the summations,
\begin{align*}
& \textstyle\sum_{i=1}^{m} p_{i,r} \mathbb{E}\left[(g_{i,t})^\top (\bar{g}_t) \mid \mathcal{F}_{T_r}\right] \\ & = \mathbb{E}\left[\left(\textstyle\sum_{i=1}^{m} p_i g_{i,t}\right)^\top (\bar{g}_t) \mid \mathcal{F}_{T_r}\right]= \mathbb{E}\left[\|\bar{g}_t\|^2 | \mathcal{F}_{T_r}\right].
\end{align*}
\end{proof}

We will make use of the following result to construct a bound on the consensus error. 
\begin{lemma}\label{lemma_recursive}  
Consider the sequence $\{a_k\}$ for $r \geq 0$, where $T_0=0$. For any given $r \geq 0$, suppose for $T_r + 1 \leq k \leq T_{r+1}$, the nonnegative sequences $\{a_k\}$ and $\{\theta_k\}$ satisfy a recursive relation of the form 
\begin{align}\label{recursive}
a_k \leq (k-T_r)\gamma^2\textstyle\sum_{t=T_r}^{k-1} (\beta a_t + \theta_t).
\end{align}
Let us assume that for ant $r \geq 0$, $T_{r+1} - T_r$ is a constant denoted by $H$, where $a_{T_r} = 0$, $\beta > 0$, and $\gamma > 0$. Then, for any $T_r + 1 \leq k \leq T_{r+1}$, we have
\begin{align}\label{first-recursive}
&a_k \leq H\gamma^2\textstyle\sum_{t=T_r}^{k-1} (\beta H\gamma^2 + 1)^{k-t-1}\theta_t
\end{align}
Moreover, if $0 < \gamma \leq \frac{1}{H\sqrt{\beta}}$, then 
$a_k \leq 3H\gamma^2\textstyle\sum_{t=T_r}^{k-1} \theta_t.$
\end{lemma}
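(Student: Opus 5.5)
The statement to prove is Lemma~\ref{lemma_recursive}. I would first prove the intermediate bound \eqref{first-recursive} by strong induction on $k$ within a fixed round $r$, working with the weaker recursion obtained by replacing $(k-T_r)$ with $H$. This is valid because $T_r+1\le k\le T_{r+1}$ gives $k-T_r\le H$. So throughout round $r$ we have $a_k\le H\gamma^2\sum_{t=T_r}^{k-1}(\beta a_t+\theta_t)$, with $a_{T_r}=0$.

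Set $c\triangleq \beta H\gamma^2$ and define the comparison sequence $b_k\triangleq H\gamma^2\sum_{t=T_r}^{k-1}(\beta b_t+\theta_t)$ with $b_{T_r}=0$. By induction, $a_k\le b_k$ for all $k$ in the round, using nonnegativity of the coefficients. The sequence $b_k$ satisfies the exact one-step relation $b_{k+1}-b_k=H\gamma^2(\beta b_k+\theta_k)$, that is,
\begin{align*}
b_{k+1}=(1+c)\,b_k+H\gamma^2\theta_k .
\end{align*}
Unrolling from $b_{T_r}=0$ gives $b_k=H\gamma^2\sum_{t=T_r}^{k-1}(1+c)^{k-t-1}\theta_t$, which is exactly \eqref{first-recursive}. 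Alternatively, one can prove the claimed closed form directly by induction. Substitute the hypothesis into the recursion and use the identity $1+c\sum_{s=0}^{j-1}(1+c)^s=(1+c)^j$, which collapses the double sum. The comparison-sequence route avoids that bookkeeping, so I would use it.

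For the second claim, assume $\gamma\le 1/(H\sqrt\beta)$. Then $c=\beta H\gamma^2\le 1/H$. Since $k-t-1\le H-1$, each weight satisfies $(1+c)^{k-t-1}\le(1+1/H)^{H}\le e<3$. Bounding every weight by $3$ yields $a_k\le 3H\gamma^2\sum_{t=T_r}^{k-1}\theta_t$.

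The only delicate step is justifying the comparison $a_k\le b_k$, since the hypothesis bounds $a_k$ by a sum involving earlier $a_t$ rather than earlier $b_t$. This follows from strong induction. It uses $\beta>0$ and $\theta_t\ge0$, and also requires that the bound with factor $H$ dominates the one with factor $(k-T_r)$, which holds because the sums are nonnegative.
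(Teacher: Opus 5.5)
Your proof is correct, and it takes a somewhat different route from the paper for the first claim. The paper proves \eqref{first-recursive} by direct strong induction on the closed form, which is the alternative you mention and set aside. It substitutes the inductive hypothesis for each $a_t$ into the recursion, then swaps the order of the resulting double sum. It collapses that sum with the geometric-series identity $\sum_{t=s+1}^{k}(1+c)^{t-s-1}=((1+c)^{k-s}-1)/c$. It keeps the prefactor $(k+1-T_r)$ until the very last step and only then bounds it by $H$.

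You instead replace $(k-T_r)$ by $H$ at the outset and dominate $a_k$ by the majorant $b_k$, a discrete Gr\"onwall-type comparison. You then note that $b_k$ satisfies the exact first-order linear recurrence $b_{k+1}=(1+c)b_k+H\gamma^2\theta_k$, and unrolling it gives the geometric weights with no double-sum manipulation. The two arguments are equivalent: the paper's double-sum identity is exactly what unrolling your recurrence computes. Carrying $(k+1-T_r)$ longer gains the paper nothing, since it is discarded anyway. Your version is shorter and makes the origin of the weight $(\beta H\gamma^2+1)^{k-t-1}$ transparent.

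For the second claim, your bound $(1+1/H)^{H}\le e$ covers $H=1$ directly. The paper argues via $\ln(1+x)<x$ ``for all $H>1$'' and leaves the trivial case $H=1$ (exponent zero) implicit.
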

{\bf Proof of Lemma~\ref{lemma_recursive}}  \begin{proof}
First, we prove the inequality \eqref{first-recursive} for any fixed $r \geq 0$ and all $T_r + 1 \leq k \leq T_{r+1}$. Suppose $k = T_r + 1$. From inequality \eqref{recursive}, we have
\begin{align*}
a_{T_r+1} &\leq (T_r+1-T_r)\gamma^2\textstyle\sum_{t=T_r}^{T_r} (\beta a_t + \theta_t) \\
&= \gamma^2(\beta a_{T_r} + \theta_{T_r}) = \gamma^2\theta_{T_r} \quad \text{(since $a_{T_r} = 0$)}\\
&\leq H\gamma^2\theta_{T_r} = H\gamma^2\textstyle\sum_{t=T_r}^{T_r} (\beta H\gamma^2 + 1)^{(T_r+1)-t-1}\theta_t,
\end{align*}
where we used $H \geq 1$. Thus, the inequality \eqref{first-recursive} holds for $k=T_r+1$. Assume that for all $k$, where $T_r + 1 \leq k < T_{r+1}$, the inequality \eqref{first-recursive} holds. We show that 
\begin{align*}
a_{k+1} &\leq H\gamma^2\textstyle\sum_{t=T_r}^{(k+1)-1} (\beta H\gamma^2 + 1)^{(k+1)-t-1}\theta_t \\
&=H\gamma^2\textstyle\sum_{t=T_r}^{k} (\beta H\gamma^2 + 1)^{k-t}\theta_t.
\end{align*}
From the inequality \eqref{first-recursive}, we have \fm{$a_{k+1} \leq (k+1-T_r)\gamma^2\textstyle\sum_{t=T_r}^{k} (\beta a_t + \theta_t).$} For each $a_t$ where $T_r < t \leq k$, from the inductive hypothesis, we have
$
a_t \leq H\gamma^2\sum_{s=T_r}^{t-1} (\beta H\gamma^2 + 1)^{t-s-1}\theta_s.$
Note that $a_{T_r} = 0$. Therefore, from the two preceding relations, we obtain
\begin{align} \label{Combination}
&a_{k+1} \leq  (k+1-T_r)\gamma^2\textstyle(\beta H\gamma^2 \notag\\
& \times \textstyle\sum_{t=T_r+1}^k \sum_{s=T_r}^{t-1} (\beta H\gamma^2 + 1)^{t-s-1}\theta_s + \sum_{t=T_r}^k \theta_t).
\end{align}
Rearranging the double summation in the preceding relation yields
\begin{align*}
&\textstyle\sum_{t=T_r+1}^{k} \textstyle\sum_{s=T_r}^{t-1} (\beta H\gamma^2 + 1)^{t-s-1}\theta_s \\ &= \textstyle\sum_{s=T_r}^{k-1} \textstyle\sum_{t=s+1}^{k} (\beta H\gamma^2 + 1)^{t-s-1}\theta_s \\
&= \textstyle\textstyle\sum_{s=T_r}^{k-1} \theta_s \textstyle\sum_{t=s+1}^{k} (\beta H\gamma^2 + 1)^{t-s-1}\\ &=  \textstyle\sum_{s=T_r}^{k-1} \theta_s \frac{(\beta H\gamma^2 + 1)^{k-s} - 1}{\beta H\gamma^2}.
\end{align*}
From the preceding inequality and inequality \eqref{Combination}, we obtain
\begin{align*}
a_{k+1}  &\leq (k + 1 - T_r)\gamma^2 \textstyle(\sum_{t=T_r}^{k-1} \theta_t (\beta H\gamma^2 + 1)^{k-t}  \\
& - \textstyle\sum_{t=T_r}^{k-1} \theta_t  + \textstyle\sum_{t=T_r}^{k} \theta_t\textstyle).
\end{align*}
Thus, we have \fm{$$a_{k+1}  \leq (k + 1 - T_r)\gamma^2 \textstyle\left(\sum_{t=T_r}^{k-1} \theta_t (\beta H\gamma^2 + 1)^{k-t} + \theta_k \right).$$} 
Note that when $t = k$, the expression $(\beta H\gamma^2 + 1)^{k-t} = (\beta H\gamma^2 + 1)^0 = 1$. This allows us to combine the sums \fm{$a_{k+1} \leq (k + 1 - T_r)\gamma^2 \textstyle \left(\sum_{t=T_r}^{k} \theta_t (\beta H\gamma^2 + 1)^{k-t} \right).$} 
Note that we have $(k-T_r) \leq H$. We obtain \fm{$a_{k+1} \leq H\gamma^2 \textstyle\left(\sum_{t=T_r}^{k}  (\beta H\gamma^2 + 1)^{k-t}\theta_t \right).$} 
This completes the proof of the inequality \eqref{first-recursive}. Next, we show $a_k \leq 3H\gamma^2\sum_{t=T_r}^{k-1} \theta_t$ for any fixed $r \geq 0$ and all $T_r + 1 \leq k \leq T_{r+1}$. Given the condition $0 < \gamma \leq \frac{1}{\sqrt{\beta}H}$, we have $
\beta H \gamma^2 + 1 \leq \tfrac{1}{H} + 1 = \tfrac{H+1}{H}$. For any $t$ where $T_r \leq t \leq k-1$, we know that $k-t-1 \leq T_{r+1}-T_r-1 = H-1$, since $k \leq T_{r+1}$. Thus, \fm{$(\beta H \gamma^2 + 1)^{k-t-1} \leq \left(\tfrac{H+1}{H}\right)^{H-1}$.}\\
Using the inequality $\ln(1+x) < x$ for $x > 0$, for all $H > 1$, we get $
\textstyle\ln\left[\left(\frac{H+1}{H}\right)^{H-1}\right] <   \frac{H-1}{H} < 1.$
Therefore, $\left(\frac{H+1}{H}\right)^{H-1} < \exp(1) < 3$ for all $H > 1$. This confirms that $(\beta H \gamma^2 + 1)^{k-t-1} \leq 3$. Substituting this bound back into equation \eqref{first-recursive}, we obtain the result. 
\end{proof}

{\bf Proof of Lemma~\ref{lem:Consensus_Error}}
\begin{proof} (i)
From Algorithm \ref{alg:IncentFedAvg}, we can write \fm{$ x_{i,k} = x_{i,k-1} - \gamma(g_{i,k-1})$, for all $T_r + 1 \leq k \leq T_{r+1}.$}
By applying this relation recursively, we obtain
\begin{align*}
& x_{i,k} = x_{i,T_r} - \gamma \textstyle\sum_{t=T_r}^{k-1} (g_{i,t}), \quad \text{for all } T_r + 1 \leq k \leq T_{r+1}.
\end{align*}
From Algorithm \ref{alg:IncentFedAvg}, we know $\hat{x}_r = x_{i,T_r}$. Since $\bar{x}_{T_r} = \hat{x}_r$, we have $\bar{x}_{T_r} = x_{i,T_r}$ for all clients $i$ and any round $r$. From the definition of $\bar{x}_k$ in Definition \ref{def:main_terms}, for $T_r + 1 \leq k \leq T_{r+1}$,  
\begin{align*}
&\bar{x}_k = \textstyle\sum_{i=1}^m \fm{p_{i,r}} x_{i,k} = \textstyle \sum_{i=1}^m \fm{p_{i,r}} \bar{x}_{T_r} - \gamma \textstyle\sum_{t=T_r}^{k-1} (\bar{g}_t).
\end{align*}
Invoking the definition of $\bar{e}_k$ in Definition \ref{def:main_terms}, we get
\fm{\begin{align*}
 &\mathbb{E} [\bar{e}_k | \mathcal{F}_{T_r}] = \mathbb{E} {\textstyle[\textstyle\sum_{i=1}^m \fm{p_{i,r}} \|x_{i,k} - \bar{x}_k\|^2 | \mathcal{F}_{T_r}]}  \\
 &= \textstyle\sum_{i=1}^m \fm{p_{i,r}} \mathbb{E} {\textstyle[\|x_{i,k} - \bar{x}_k\|^2 | \mathcal{F}_{T_r}]} \\
& = \textstyle\sum_{i=1}^m \fm{p_{i,r}} \mathbb{E} {\textstyle[\|\gamma \textstyle\sum_{t=T_r}^{k-1} g_{i,t} - \gamma \textstyle\sum_{t=T_r}^{k-1} \bar{g}_t\|^2 | \mathcal{F}_{T_r}]} \\
& \leq \gamma^2 (k-T_r) \textstyle\sum_{i=1}^m \fm{p_{i,r}} \textstyle\sum_{t=T_r}^{k-1} \mathbb{E} {\textstyle[\|g_{i,t} - \bar{g}_t\|^2 | \mathcal{F}_{T_r}]}.
\end{align*}}
 We may write
\begin{align*}
 & \mathbb{E} [\bar{e}_k | \mathcal{F}_{T_r}]  
\\ &\leq \gamma^2 (k-T_r) \textstyle\sum_{t=T_r}^{k-1} \textstyle\sum_{i=1}^m \fm{p_{i,r}} \mathbb{E} [\|g_{i,t} - \bar{g}_t\|^2 | \mathcal{F}_{T_r}]  \\
&= \gamma^2 (k-T_r) \textstyle\sum_{t=T_r}^{k-1} \textstyle\sum_{i=1}^m \fm{p_{i,r}} \mathbb{E} [\|g_{i,t}\|^2 | \mathcal{F}_{T_r}]  \\
&+ \gamma^2 (k-T_r) \textstyle\sum_{t=T_r}^{k-1} \textstyle\sum_{i=1}^m \fm{p_{i,r}} \mathbb{E} [\|\bar{g}_t\|^2 | \mathcal{F}_{T_r}]  \\
&- 2\gamma^2 (k-T_r) \textstyle\sum_{t=T_r}^{k-1} \textstyle\sum_{i=1}^m \fm{p_{i,r}} \mathbb{E} [g_{i,t}^{\top}\bar{g}_t | \mathcal{F}_{T_r}].
\end{align*}
\fm{Using Lemma \ref{lem:weighted_gradient_inner_product} and
noting that we have $\textstyle\sum_{i=1}^m \fm{p_{i,r}} = 1$. This implies that $\textstyle\sum_{i=1}^m \fm{p_{i,r}} \mathbb{E} [\|\bar{g}_t\|^2 | \mathcal{F}_{T_r}] = \mathbb{E} [\|\bar{g}_t\|^2 | \mathcal{F}_{T_r}]$}.  Thus, we get
\begin{align*}
\mathbb{E} [\bar{e}_k | \mathcal{F}_{T_r}]& \leq \gamma^2 (k-T_r) \textstyle\sum_{t=T_r}^{k-1} \textstyle\sum_{i=1}^m \fm{p_{i,r}} \mathbb{E} [\|g_{i,t}\|^2 | \mathcal{F}_{T_r}]  \\&- \gamma^2 (k-T_r) \textstyle\sum_{t=T_r}^{k-1} \mathbb{E} [\|\bar{g}_t\|^2 | \mathcal{F}_{T_r}].
\end{align*}
Dropping $\mathbb{E}  [\left\|\bar{g}_t\right\|^2 | \mathcal{F}_{T_r}]  $ and taking expectations on both sides we have
\begin{align}\label{eqn:e_k_intermediate}
\mathbb{E} [\bar{e}_k  ] \leq \gamma^2 (k-T_r) \textstyle\sum_{t=T_r}^{k-1} \textstyle\sum_{i=1}^m \fm{p_{i,r}} \mathbb{E}  [ \|g_{i,t} \|^2   ].
\end{align}
Next, we estimate $\mathbb{E}\left[\|g_{i,t}\|^2|\mathcal{F}_{t}\right]$. We have
\begin{align*}
g_{i,t} = \nabla \tilde{f}_i(x_{i,t}, \xi_i) &= \nabla \tilde{f}_i(x_{i,t}, \xi_i) - \nabla f_i(x_{i,t}) + \nabla f_i(x_{i,t}) \\ &+ \nabla f_i(\bar{x}_t)-\nabla f_i(\bar{x}_t).
\end{align*}
Invoking Assumption~\ref{assum:main}, we obtain $$\mathbb{E}[\|g_{i,t}\|^2|\mathcal{F}_t] \leq 3\nu^2+ 3 L^2\|  x_{i,t}- \bar{x}_t \|^2    +3 \|\nabla f_i(\bar{x}_t)\|^2 .$$
This implies that \begin{align*}
    \textstyle\sum_{i=1}^m \fm{p_{i,r}}\mathbb{E}[\|g_{i,t}\|^2|\mathcal{F}_t] &\leq 3\nu^2+ 3 L^2\bar{e}_t   \\
&+3 \textstyle\sum_{i=1}^m \fm{p_{i,r}}\|\nabla f_i(\bar{x}_t)\|^2.\end{align*}
\fy{From \eqref{eqn:e_k_intermediate} and that $p_{i,r}\leq 1$, we obtain the result in (i). 
}

\noindent \fy{(ii) Consider the preceding relation. From Assumption~\ref{assum:bgd}, we obtain}
\begin{align}\label{recursive_in_lemma5}
\mathbb{E}[\bar e_k ] &\leq \gamma^2(k - T_r) \textstyle\sum_{t=T_r}^{k-1} (3\nu^2 + 3L^2\mathbb{E}[\bar e_t] \notag\\ & +3m(G^2 + B^2\mathbb{E}[\|\nabla f(\bar{x}_t)\|^2])).
\end{align}
To complete the proof, it remains to apply Lemma~\ref{lemma_recursive} to the preceding recursive inequality. To this end, we set $a_t=\mathbb{E}[\bar e_t]$, $\beta:=3L^2$, and $ \theta_t=3 \left(\nu^2 + m(G^2 + B^2\mathbb{E}[\|\nabla f(\bar{x}_t)\|^2])\right).$ 
From Lemma~\ref{lemma_recursive}, in view of $\gamma < \frac{1}{\sqrt{3} H L}$, we obtain the result in (i).

\noindent \fy{(iii)} Summing both sides of the inequality in (i) for $k=T_{\hat{r}},\ldots, T_R-1$, and noting that $\bar{e}_{T_{\hat r}}=0$ for any $0\leq \hat{r}\leq R-1$, we obtain
\begin{align*}
 \textstyle \sum_{k=T_{\hat r}}^{T_R-1} \mathbb{E}[\bar{e}_k] &=  \textstyle \sum_{k=T_{\hat r}+1}^{T_R-1}  \mathbb{E}[\bar{e}_k]\leq  \textstyle \sum_{k=T_{\hat r}+1}^{T_R-1} \textstyle\sum_{t=T_r}^{k-1} 9H\gamma^2\\
& \times \left(\nu^2 + m(G^2 + B^2\mathbb{E}[\|\nabla f(\bar{x}_t)\|^2] )\right) \\
&\leq \textstyle 9m\gamma^2 B^2 \sum_{k=T_{\hat r}+1}^{T_R-1}   \textstyle\sum_{t=T_r}^{k-1} H \mathbb{E}[\|\nabla f(\bar{x}_t)\|^2] \\ & + 9\gamma^2 \left(\nu^2 + mG^2\right)\textstyle \sum_{k=T_{\hat r}+1}^{T_R-1}  H^2 \\
&\leq \textstyle 9m\gamma^2 B^2 H^2 \sum_{k=T_{\hat r}}^{T_R-1}  \mathbb{E}[\|\nabla f(\bar{x}_k)\|^2] \\ &  + 9\gamma^2 \left(\nu^2 + mG^2\right) H^2(T_R-T_{\hat{r}}).
\end{align*}
In the last inequality, we bound the double sum $\sum_{k=T_{\hat r}+1}^{T_R-1}  \sum_{t=T_r}^{k-1} \mathbb{E}[\|\nabla f(\bar{x}_t)\|^2]$ by observing that each  term $\mathbb{E}[\|\nabla f(\bar{x}_t)\|^2]$ appears at most $H$ times for all time indices $t$. Thus, we have $\sum_{k=T_{\hat r}+1}^{T_R-1} \sum_{t=T_r}^{k-1} \mathbb{E}[\|\nabla f(\bar{x}_t)\|^2] \leq H \sum_{k=T_{\hat r}}^{T_R-2}  \mathbb{E}[\|\nabla f(\bar{x}_k)\|^2] \leq H \sum_{k=T_{\hat r}}^{T_R-1}  \mathbb{E}[\|\nabla f(\bar{x}_k)\|^2]$.  
\end{proof}

{\bf Proof of Lemma~\ref{lemma:gbar_sq}}
\begin{proof}
(a) From Definition~\ref{def:main_terms}, for $\bar{g}_k$, we have
\begin{align}\label{eqn:barg_1}
 &\mathbb{E}[\|\bar{g}_k\|^2]  
= \textstyle \mathbb{E}[\|\sum_{i=1}^m p_{i,r} \nabla\tilde{f}_i(x_{i,k}, \xi_i)\|^2]\notag\\
&= \textstyle \mathbb{E}[\| \sum_{i=1}^m p_{i,r} \left(\nabla\tilde{f}_i(x_{i,k}, \xi_i) - \nabla f_i(x_{i,k})\right)\|^2] \notag\\ &  + \textstyle\mathbb{E}[\| \sum_{i=1}^m p_{i,r} \nabla f_i(x_{i,k})\|^2],
\end{align}
where the preceding equation is implied in view of 
\begin{align*}
 & \mathbb{E}[(\nabla\tilde{f}_i(x_{i,k}, \xi_i) - \nabla f_i(x_{i,k}))^\top \nabla f_i(x_{i,k})]  
\\ & = \mathbb{E}[\mathbb{E}[(\nabla\tilde{f}_i(x_{i,k}, \xi_i) - \nabla f_i(x_{i,k}))^\top \nabla f_i(x_{i,k})\mid \mathcal{F}_k]]\\
& = \mathbb{E}[\mathbb{E}[(\nabla\tilde{f}_i(x_{i,k}, \xi_i) - \nabla f_i(x_{i,k}))\mid \mathcal{F}_k]^\top \nabla f_i(x_{i,k})] =0.
\end{align*}
Invoking Assumption~\ref{assum:main}, from \eqref{eqn:barg_1}, we obtain
\begin{align}\label{eqn:barg_2}
 \mathbb{E}[\|\bar{g}_k\|^2]  
&= \textstyle \sum_{i=1}^m p_{i,r}^2\mathbb{E}[\| (\nabla\tilde{f}_i(x_{i,k}, \xi_i) - \nabla f_i(x_{i,k}) )\|^2] \notag\\
& + \textstyle\mathbb{E}[\| \sum_{i=1}^m p_{i,r} \nabla f_i(x_{i,k})\|^2] \notag\\
& \leq \textstyle \sum_{i=1}^m p_{i,r}^2 \nu^2 + \textstyle\mathbb{E}[\| \sum_{i=1}^m p_{i,r} \nabla f_i(x_{i,k})\|^2] \notag\\ 
& \leq  \fm{\nu^2 + \textstyle\mathbb{E}[\| \sum_{i=1}^m p_{i,r} \nabla f_i(x_{i,k})\|^2]}.
\end{align}Next, we construct an upper bound on the term 
$\textstyle\mathbb{E}[\| \sum_{i=1}^m p_{i,r} \nabla f_i(x_{i,k})\|^2]$. 
We write
\begin{align*}
&\textstyle\mathbb{E}[\| \sum_{i=1}^m p_{i,r} \nabla f_i(x_{i,k})\|^2]  \\
&= \textstyle \mathbb{E}[\|\sum_{i=1}^{m}p_{i,r}(
\nabla f_i(x_{i,k}) - \nabla f_i(\bar{x}_k)
+ \nabla f_i(\bar{x}_k))\|^2].
\end{align*}
Applying the identity $\|\textstyle\sum_{t=1}^{T}y_t \|^2 \leq T \textstyle\sum_{t=1}^{T}\|y_t\|^2$, 
\begin{align}\label{eqn:barg_3}
& \textstyle\mathbb{E}[\| \sum_{i=1}^m p_{i,r} 
\nabla f_i(x_{i,k})\|^2]
\notag   \leq 2\,
\mathbb{E}[\|
\textstyle\sum_{i=1}^{m}p_{i,r}\nabla f_i(\bar{x}_k)
\|^2] \notag \\ 
& +2m\textstyle\sum_{i=1}^{m}p_{i,r}^2
\mathbb{E}[\|
\nabla f_i(x_{i,k})-\nabla f_i(\bar{x}_k)
\|^2].
\end{align}
For the second term we have
\begin{align}\label{eqn:barg_4}
&\textstyle\mathbb{E}[\|
 \textstyle\sum_{i=1}^{m}p_{i,r}\nabla f_i(\bar{x}_k)
\|^2]
\notag \\
& =
\textstyle\mathbb{E}[\|
\textstyle\sum_{i=1}^{m}(p_{i,r}-p_i^*)
\nabla f_i(\bar{x}_k)    +
\textstyle\sum_{i=1}^{m}p_i^*\nabla f_i(\bar{x}_k)
\|^2] \notag\\
&\leq
2\,\textstyle\mathbb{E}[\|
\textstyle\sum_{i=1}^{m}(p_{i,r}-p_i^*)
\nabla f_i(\bar{x}_k)
\|^2] \notag\\
&\quad+
2\,\textstyle\mathbb{E}[\|
\textstyle\sum_{i=1}^{m}p_i^*\nabla f_i(\bar{x}_k)
\|^2] \notag\\
&\leq
2m\,\textstyle\sum_{i=1}^{m}(p_{i,r}-p_i^*)^2
\mathbb{E}[\|\nabla f_i(\bar{x}_k)\|^2]  +
2\,\textstyle\mathbb{E}[\|\nabla f(\bar{x}_k)\|^2] \notag\\
&\leq
2m\fy{\delta_r^2}\,
\textstyle\sum_{i=1}^{m}
\mathbb{E}[\|\nabla f_i(\bar{x}_k)\|^2]
+
2\,\textstyle\mathbb{E}[\|\nabla f(\bar{x}_k)\|^2].
\end{align}
Utilizing the Lipschitz continuity of the local gradients for the first term in 
\eqref{eqn:barg_3} and that $p_{i,r} \leq 1$, from \eqref{eqn:barg_3} and 
\eqref{eqn:barg_4}, we may write
\begin{align}\label{eqn:barg_5}
& \textstyle\mathbb{E}[\|
\textstyle\sum_{i=1}^m p_{i,r} \nabla f_i(x_{i,k})
\|^2]
\leq
2mL^2 \mathbb{E}[\bar{e}_k]
\notag\\
& +4m\fy{\delta_r^2}
\textstyle\sum_{i=1}^{m}
\mathbb{E}[\|\nabla f_i(\bar{x}_k)\|^2]
+
4\,\textstyle\mathbb{E}[\|\nabla f(\bar{x}_k)\|^2].
\end{align}
\fm{The bound in part (a) follows by combining \eqref{eqn:barg_5} with \eqref{eqn:barg_2}.}

\noindent (b) From Assumption~\ref{assum:bgd} and  \eqref{eqn:barg_5}, we obtain
\begin{align*}
\textstyle\mathbb{E}[\|
\textstyle\sum_{i=1}^m p_{i,r} \nabla f_i(x_{i,k})
\|^2]
&\leq
2mL^2 \mathbb{E}[\bar{e}_k]
+4m^2\fy{\delta_r^2}G^2 \\
& +
4(m^2\fy{\delta_r^2}B^2+1)
\mathbb{E}[\|\nabla f(\bar{x}_k)\|^2].
\end{align*}
\fm{Combining the preceding bound with \eqref{eqn:barg_2} completes the proof of part (b).}
\end{proof}

\subsection{\fm{Convex Settings}}

\fm{In this section, we extend our analysis to the case where the local loss functions are convex. This setting allows for convergence to the optimal solution rather than just stationary points.}\\
\fy{In the analysis of the convex setting, we will utilize the following definition.}
\begin{definition}\label{D_f}
For a function $f$
and any arbitrary points $x$ and $y$, the associated Bregman divergence is defined by $
D_f(x, y) \triangleq f(x) - f(y) - \nabla f(y)^\top(x - y).$
\end{definition}

\begin{lemma}[BGD in convex setting]\label{relaxedBGD}
\fy{Let Assumptions~\ref{assum:game} and~\ref{assum:c} hold. Then, for all $x \in \mathbb{R}^n$, we have $
\tfrac{1}{m}\textstyle\sum_{i=1}^m \|\nabla f_i(x)\|^2 \leq G^2 + 2LB^2\left(f(x) - f^*\right),$ where $G^2 := 2L(f^* - \tfrac{1}{m}\textstyle\sum_{i=1}^m f_i^*) 
+ \textstyle 2L\left(\max_{i \in [m]}\left|\tfrac{1}{mp_i^*}-1\right|\right)f^*$ and 
$B^2 := 1 + \max_{i \in [m]}\left|\tfrac{1}{mp_i^*}-1\right|$.}
\end{lemma}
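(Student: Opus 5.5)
The approach is the standard smoothness-plus-convexity bound on gradient norms, applied to each local $f_i$, followed by a rewriting of the unweighted average of local suboptimality gaps in terms of the weighted objective $f=\sum_i p_i^* f_i$.

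\emph{Step 1: per-client bound.} Since $f_i$ is convex and $L$-smooth with a nonempty minimizer set (Assumption~\ref{assum:c}), we have $\|\nabla f_i(x)\|^2 \le 2L(f_i(x)-f_i^*)$ for all $x$, where $f_i^*\triangleq\min f_i$. Averaging over $i$ gives
\begin{align*}
\tfrac{1}{m}\textstyle\sum_{i=1}^m\|\nabla f_i(x)\|^2 \le 2L\big(\tfrac{1}{m}\textstyle\sum_{i=1}^m f_i(x) - \tfrac{1}{m}\textstyle\sum_{i=1}^m f_i^*\big).
\end{align*}

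\emph{Step 2: switch to the weighted objective.} Write $\tfrac1m f_i = p_i^* f_i + (\tfrac{1}{m}-p_i^*) f_i = p_i^*f_i + p_i^*(\tfrac{1}{mp_i^*}-1)f_i$. Summing gives
\begin{align*}
\tfrac1m\textstyle\sum_i f_i(x) = f(x) + \textstyle\sum_i p_i^*\big(\tfrac{1}{mp_i^*}-1\big)f_i(x).
\end{align*}
Let $\kappa\triangleq\max_i|\tfrac{1}{mp_i^*}-1|$. The remainder is at most $\kappa\sum_i p_i^* |f_i(x)|$. If the $f_i$ are nonnegative (as for losses such as cross-entropy), this equals $\kappa f(x)=\kappa(f(x)-f^*)+\kappa f^*$. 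Substituting into Step 1 gives
\begin{align*}
\tfrac1m\textstyle\sum_i\|\nabla f_i(x)\|^2\le 2L(f^*-\tfrac1m\textstyle\sum_i f_i^*) + 2L\kappa f^* + 2L(1+\kappa)(f(x)-f^*),
\end{align*}
which is exactly the claimed form with the stated $G^2$ and $B^2$, the first two terms forming $G^2$.

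\emph{Main obstacle.} The delicate step is controlling the sign-indefinite remainder $\sum_i p_i^*(\tfrac{1}{mp_i^*}-1)f_i(x)$ by $\kappa f(x)$. The inequality $\sum_i p_i^* c_i f_i(x) \le (\max_i|c_i|)\sum_i p_i^* f_i(x)$ requires $f_i\ge 0$. I would first try to justify nonnegativity from the loss setting. If that is not available, I would apply the bound with $|f_i|$ and note the implicit assumption explicitly, or shift each $f_i$ by a constant. A shift leaves the gradients unchanged but alters $f^*$ and $f_i^*$ consistently, so the stated constants are recovered. Well-posedness uses Assumption~\ref{assum:game} only through $p_i^*>0$, which makes $\kappa$ finite.
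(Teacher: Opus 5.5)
Your proof is the paper's proof: the per-client bound $\|\nabla f_i(x)\|^2\le 2L(f_i(x)-f_i^*)$, averaging, adding and subtracting $\sum_i p_i^* f_i(x)$, and bounding the remainder by $\kappa f(x)$ with $\kappa=\max_i|\tfrac{1}{mp_i^*}-1|$. As in your write-up, it then splits $\kappa f(x)=\kappa(f(x)-f^*)+\kappa f^*$. The paper also relies on $f_i\ge 0$, without saying so, at exactly the step you flag.

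One correction to your fallback. Shifting the $f_i$ leaves the gradients and $f(x)-f^*$ unchanged, but it changes the stated $G^2$. So it yields the bound only with a shift-dependent constant, not with the stated one. The statement as written genuinely needs $f_i\ge 0$. For example, take $m=2$, $p^*=(0.9,0.1)$, $f_1(x)=\tfrac{L}{2}x^2-C$ and $f_2(x)=\tfrac{L}{2}x^2$ with $C>0$. Then $G^2=-8LC$, and the inequality fails at $x=0$.
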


\begin{proof}
Let $x_i^* \in \arg\min_{x \in \mathbb{R}^n} f_i(x)$ for each $i \in [m]$. We may write
\begin{align*}
&\|\nabla f_i(x)\|^2= \|\nabla f_i(x) - \nabla f_i(x_i^*)\|^2 \leq 2L\, D_{f_i}(x, x_i^*) \\
&= 2L(f_i(x)-f_i^*).
\end{align*}
Averaging the both sides over $i \in [m]$, we get $
\tfrac{1}{m}\textstyle\sum_{i=1}^m \|\nabla f_i(x)\|^2 
\leq \frac{2L}{m} \sum_{i=1}^m \left(f_i(x) - f_i^*\right).$
\fm{Adding and subtracting $2L\sum_{i=1}^m p_i^* f_i(x)$ on the right-hand side and using $f(x)\triangleq\sum_{i=1}^m p_i^* f_i(x)$, we may write}
\begin{align*}
&\tfrac{1}{m}\textstyle\sum_{i=1}^m \|\nabla f_i(x)\|^2 
\leq 2L\textstyle\sum_{i=1}^m \left(\frac{1}{m} - p_i^*\right) f_i(x) 
\\
&+ \textstyle2L\left(f(x) - \frac{1}{m}\sum_{i=1}^m f_i^*\right)\leq \textstyle 2L\textstyle\sum_{i=1}^m \left|\frac{1}{m} - p_i^*\right| f_i(x) \\
&+2L\left(f(x) -f^*\right) +\textstyle2L\left(f^*- \frac{1}{m}\sum_{i=1}^m f_i^*\right)\\
& \leq \textstyle2L \left(\max_{i \in [m]} \left|\tfrac{1}{mp_i^*} - 1\right|\right) f(x) +2L\left(f(x) -f^*\right) \\
&+2L\left(f^*- \tfrac{1}{m}\textstyle\sum_{i=1}^m f_i^*\right).
\end{align*}
Rearranging the terms, we obtain the result. 
\end{proof}

\begin{lemma}\label{lem:variance_bound_adapted}
Consider Algorithm~\ref{alg:IncentFedAvg}. Let Assumptions~\ref{assum:game} and \ref{assum:c}. For any $k$ where $T_r \leq k \leq T_{r+1}-1$, we have
\fm{\begin{align*}
\mathbb{E}[\|\bar{g}_k\|^2] &\leq \nu^2 + 2mL^2 \mathbb{E}\left[ \bar{e}_k \right] + 4m^2\delta_r^2G^2  \\
&+ 8L\left(m^2\delta_r^2B^2+1\right) \mathbb{E}\left[D_f(\bar{x}_k,x^*)\right].
\end{align*}}
\end{lemma}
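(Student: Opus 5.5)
The plan is to reuse the proof of Lemma~\ref{lemma:gbar_sq} essentially verbatim up to the point where the bounded gradient dissimilarity assumption is invoked. Then I will replace Assumption~\ref{assum:bgd} by its convex surrogate, Lemma~\ref{relaxedBGD}, and the squared gradient norm $\|\nabla f(\bar x_k)\|^2$ by a Bregman-divergence term.

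\emph{First step.} Split $\bar g_k$ into a noise part and a mean part, exactly as in \eqref{eqn:barg_1}--\eqref{eqn:barg_2}. The cross terms vanish by unbiasedness (Assumption~\ref{assum:main}(ii), which Assumption~\ref{assum:c} retains), and $\sum_i p_{i,r}^2\le 1$. This gives
\[
\mathbb{E}[\|\bar g_k\|^2]\le \nu^2+\mathbb{E}\bigl[\|\textstyle\sum_i p_{i,r}\nabla f_i(x_{i,k})\|^2\bigr].
\]

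\emph{Second step.} Repeat \eqref{eqn:barg_3}--\eqref{eqn:barg_5}: add and subtract $\nabla f_i(\bar x_k)$, use $L$-smoothness and $p_{i,r}\le1$, and insert $p_i^*$ together with $|p_{i,r}-p_i^*|\le\delta_r$ from Lemma~\ref{linear_rate}(ii). This yields
\[
2mL^2\mathbb{E}[\bar e_k]+4m\delta_r^2\textstyle\sum_i\mathbb{E}\|\nabla f_i(\bar x_k)\|^2+4\,\mathbb{E}\|\nabla f(\bar x_k)\|^2 .
\]
None of these steps used Assumption~\ref{assum:bgd}.

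\emph{Third step: converting gradient norms into Bregman divergences.} For the last term, $f$ is convex and $L$-smooth (by the remark in the appendix), and $x^*\in X^*$ gives $\nabla f(x^*)=0$. Hence
\[
\|\nabla f(\bar x_k)\|^2=\|\nabla f(\bar x_k)-\nabla f(x^*)\|^2\le 2L\,D_f(\bar x_k,x^*),
\]
so the term becomes $8L\,\mathbb{E}[D_f(\bar x_k,x^*)]$. For the middle term, apply Lemma~\ref{relaxedBGD}: $\sum_i\|\nabla f_i(\bar x_k)\|^2\le m G^2+2mLB^2(f(\bar x_k)-f^*)$. Since $\nabla f(x^*)=0$, we have $f(\bar x_k)-f^*=D_f(\bar x_k,x^*)$, so the middle term is at most $4m^2\delta_r^2G^2+8Lm^2\delta_r^2B^2\,\mathbb{E}[D_f(\bar x_k,x^*)]$. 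Summing the pieces gives exactly the stated bound.

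\emph{Expected obstacle.} The only subtle point is the third step. One must check that Lemma~\ref{relaxedBGD} is applied with the same $G,B$ as in the statement. One must also use the identity $f-f^*=D_f(\cdot,x^*)$, which requires $X^*\neq\emptyset$ so that $\nabla f(x^*)=0$. The rest is bookkeeping identical to Lemma~\ref{lemma:gbar_sq}.
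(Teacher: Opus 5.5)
Your proposal is correct and follows the paper's proof: the paper likewise starts from Lemma~\ref{lemma:gbar_sq}(a), applies Lemma~\ref{relaxedBGD} to the $\sum_i\|\nabla f_i(\bar x_k)\|^2$ term, and uses $f(\bar x_k)-f^*=D_f(\bar x_k,x^*)$ (since $\nabla f(x^*)=0$) together with $\|\nabla f(\bar x_k)\|^2\le 2L\,D_f(\bar x_k,x^*)$. Your explicit check that part (a) of Lemma~\ref{lemma:gbar_sq} never uses Assumption~\ref{assum:bgd} is the one point the paper leaves implicit.
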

\begin{proof}
Consider Lemma \ref{lemma:gbar_sq} (a).
{Invoking Lemma \ref{relaxedBGD}, we get}
\begin{align*}
&\mathbb{E}[\|\bar{g}_k\|^2]  \leq \nu^2 + 2mL^2 \mathbb{E}\left[ \bar{e}_k \right] + 4m^2\delta_r^2G^2  \\
&+ 8m^2\delta_r^2L B^2 \mathbb{E}\left[f(\bar{x}_k) - f^*\right] +4 \mathbb{E}\left[\| \nabla f(\bar{x}_k)\|^2\right].
\end{align*}
Since $\nabla f(x^*)=0$ , using Definition \ref{D_f} we have  $\mathbb{E}\left[f(\bar{x}_k) - f^*\right]=\mathbb{E}\left[D_f(\bar{x}_k,x^*)\right]$, Utilizing $\mathbb{E}\left[\|\nabla f(\bar{x}_k) \|^2 \right] \leq 2 L \mathbb{E}\left[D_f(\bar{x}_k,x^*)\right]$, we obtain the result. 
\end{proof}
\begin{lemma}\label{lem:weighted-hetero-convex} \em
\fm{Consider Algorithm~\ref{alg:IncentFedAvg}. Let Assumptions~\ref{assum:game} and Assumption~\ref{assum:c} hold. For any $k$ where $T_r \leq k \leq T_{r+1}-1$, we have} 
 \fm{ \begin{align*}
&-2\textstyle\sum_{i=1}^{m} p_{i,r} (\bar{x}_k - x^*)^\top \nabla f_i(x_{i,k}) \\
&\leq   -2(1-L B^2\delta_r\sqrt{\delta_r}) D_f(\bar{x}_k,x^*)\\
&+ m\sqrt{\delta_r} \textstyle \|\bar{x}_k - x^*\|^2   \\
&+ \delta_r\sqrt{\delta_r} \left(G^2+ \textstyle\sum_{i=1}^m \|\nabla f_i(x^*)\|^2\right)    +L\bar{e}_k.
\end{align*}}
\end{lemma}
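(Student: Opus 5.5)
Split the weighted sum into a consensus part and a weight-mismatch part:
\begin{align*}
-2\textstyle\sum_{i=1}^m p_{i,r}(\bar{x}_k-x^*)^\top\nabla f_i(x_{i,k})
&= \underbrace{-2\textstyle\sum_{i=1}^m p_{i,r}(\bar{x}_k-x^*)^\top\nabla f_i(x_{i,k})-2\textstyle\sum_{i=1}^m(p_i^*-p_{i,r})(\bar{x}_k-x^*)^\top\nabla f_i(\bar{x}_k)}_{\text{(A)}}\\
&\quad +2\textstyle\sum_{i=1}^m(p_i^*-p_{i,r})(\bar{x}_k-x^*)^\top\nabla f_i(\bar{x}_k).
\end{align*}
Rather than this awkward split, I would organize it as follows. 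First write $-2\sum_i p_{i,r}(\bar x_k-x^*)^\top\nabla f_i(x_{i,k}) = -2\sum_i p_i^*(\bar x_k - x^*)^\top \nabla f_i(x_{i,k}) - 2\sum_i (p_{i,r}-p_i^*)(\bar x_k-x^*)^\top\nabla f_i(x_{i,k})$.

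\emph{First term (the standard FedAvg perturbed-convexity step).} For each $i$, use the three-point inequality valid for convex $L$-smooth $f_i$:
\[
\nabla f_i(x_{i,k})^\top(x^*-\bar x_k)\le f_i(x^*)-f_i(\bar x_k)+\tfrac{L}{2}\|x_{i,k}-\bar x_k\|^2 ,
\]
obtained by adding convexity at $x_{i,k}$ toward $x^*$ and the smoothness upper bound from $x_{i,k}$ to $\bar x_k$. Weighting with $p_i^*$ and summing gives $-2(f(\bar x_k)-f^*) + L\sum_i p_i^*\|x_{i,k}-\bar x_k\|^2$. Since $\nabla f(x^*)=0$, $f(\bar x_k)-f^*=D_f(\bar x_k,x^*)$. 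The consensus term carries weights $p_i^*$ rather than $p_{i,r}$. I would either bound $p_i^*\le p_{i,r}+\delta_r$, or, more cleanly, apply the same three-point inequality with weights $p_{i,r}$ in the first place. That is, I would do the split the other way: keep $\sum_i p_{i,r}$ on $\nabla f_i(x_{i,k})$ to produce $-2\sum_i p_{i,r}(f_i(\bar x_k)-f_i(x^*)) + L\bar e_k$, and then convert $\sum_i p_{i,r}(f_i(\bar x_k)-f_i(x^*))$ into $D_f(\bar x_k,x^*)$ plus the error $\sum_i (p_{i,r}-p_i^*)(f_i(\bar x_k)-f_i(x^*))$. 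This yields exactly the $L\bar e_k$ term with no constant.

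\emph{Weight-mismatch term.} By convexity, $f_i(x^*)-f_i(\bar x_k)\le \nabla f_i(x^*)^\top(x^*-\bar x_k)$. Together with the smoothness upper bound $f_i(\bar x_k) - f_i(x^*) \le \nabla f_i(x^*)^\top(\bar x_k - x^*) + \tfrac{L}{2}\|\bar x_k-x^*\|^2$, this gives
\[
|f_i(\bar x_k)-f_i(x^*)|\le |\nabla f_i(x^*)^\top(\bar x_k-x^*)|+\tfrac{L}{2}\|\bar x_k-x^*\|^2 .
\]
Multiply by $2|p_{i,r}-p_i^*|\le 2\delta_r$ (Lemma~\ref{linear_rate}) and use Young's inequality with parameter $\sqrt{\delta_r}$:
\[
2\delta_r\,\|\nabla f_i(x^*)\|\,\|\bar x_k-x^*\|\le \sqrt{\delta_r}\,\|\bar x_k-x^*\|^2+\delta_r\sqrt{\delta_r}\,\|\nabla f_i(x^*)\|^2 .
\]
Summed over $i$, this produces $m\sqrt{\delta_r}\|\bar x_k-x^*\|^2+\delta_r\sqrt{\delta_r}\sum_i\|\nabla f_i(x^*)\|^2$. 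The remaining gradient-type contributions are handled through Lemma~\ref{relaxedBGD}: applying Young's inequality to $\|\nabla f_i(\bar x_k)\|$ factors and bounding $\sum_i\|\nabla f_i(\bar x_k)\|^2\le m(G^2+2LB^2 D_f(\bar x_k,x^*))$ yields the $\delta_r\sqrt{\delta_r}\,G^2$ term and the $2LB^2\delta_r\sqrt{\delta_r}\,D_f$ correction, which merges into $-2(1-LB^2\delta_r\sqrt{\delta_r})D_f$.

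\emph{Main obstacle.} The delicate step is the bookkeeping of constants, in particular matching the factor $m$ attached to $\sqrt{\delta_r}\|\bar x_k-x^*\|^2$ with the absence of $m$ on the $G^2$ and $D_f$ terms. The $L\|\bar x_k - x^*\|^2$ smoothness piece must be absorbed, presumably using $\delta_r\le 1$ and $L\delta_r \lesssim \sqrt{\delta_r}$ under the stepsize and $\hat r$ conditions. If that fails, I would split the mismatch as $\sum_i(p_{i,r}-p_i^*)\nabla f_i(\bar x_k)^\top(\bar x_k-x^*)$ and apply a single Young's inequality to $\|\sum_i(p_{i,r}-p_i^*)\nabla f_i(\bar x_k)\|^2\le m\delta_r^2\sum_i\|\nabla f_i(\bar x_k)\|^2$, followed by Lemma~\ref{relaxedBGD}. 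I expect one of these two routes to reproduce the stated constants, possibly up to harmless slack.
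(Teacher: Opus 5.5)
Your reduction is the paper's: the three-point inequality with weights $p_{i,r}$ gives $-2\sum_i p_{i,r}(f_i(\bar x_k)-f_i(x^*))+L\bar e_k$, and splitting off $D_f(\bar x_k,x^*)$ leaves the error $2\delta_r\sum_i|f_i(\bar x_k)-f_i(x^*)|$. The gap is in how you bound $|f_i(\bar x_k)-f_i(x^*)|$.

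Your smoothness upper bound at $x^*$ leaves an extra $mL\delta_r\|\bar x_k-x^*\|^2$, and your Young step has already used the full coefficient $m\sqrt{\delta_r}$. This term cannot be absorbed. The lemma carries no condition on $r$, $\gamma$ or $\delta_r$. The remaining room on the right, $\delta_r\sqrt{\delta_r}\,G^2$ and $2LB^2\delta_r\sqrt{\delta_r}\,D_f(\bar x_k,x^*)$, cannot dominate $\|\bar x_k-x^*\|^2$ without strong convexity. Also, no $\nabla f_i(\bar x_k)$ appears on your route, so the sentence invoking Lemma~\ref{relaxedBGD} to generate the $G^2$ and $D_f$ corrections does not attach to anything you derived.

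What you actually proved is $-2D_f(\bar x_k,x^*)+m(\sqrt{\delta_r}+L\delta_r)\|\bar x_k-x^*\|^2+\delta_r\sqrt{\delta_r}\sum_i\|\nabla f_i(x^*)\|^2+L\bar e_k$. This is a correct inequality. It would serve Lemmas~\ref{lem:optgap_convex} and~\ref{lem:cmin_bound} equally well, since only summability over rounds of the coefficient of $\|\bar x_k-x^*\|^2$ matters there. But it is not the stated bound.

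The missing step is to bound the upward direction by convexity at $\bar x_k$, $f_i(\bar x_k)-f_i(x^*)\le\nabla f_i(\bar x_k)^\top(\bar x_k-x^*)$. Then apply Young's inequality with parameter $\sqrt{\delta_r}$ to both one-sided bounds. This gives $|f_i(\bar x_k)-f_i(x^*)|\le\tfrac{\sqrt{\delta_r}}{2}(\|\nabla f_i(x^*)\|^2+\|\nabla f_i(\bar x_k)\|^2)+\tfrac{1}{2\sqrt{\delta_r}}\|\bar x_k-x^*\|^2$, with no $L$-term. That is the paper's route. Its two displayed one-sided inequalities have $\nabla f_i(x^*)$ and $\nabla f_i(\bar x_k)$ interchanged, but the combined bound is correct.

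Along that route, your ``main obstacle'' is genuine, not bookkeeping. Lemma~\ref{relaxedBGD} bounds the average $\tfrac1m\sum_i\|\nabla f_i(x)\|^2$. Hence $\sum_i\|\nabla f_i(\bar x_k)\|^2\le m\,(G^2+2LB^2D_f(\bar x_k,x^*))$, and the argument delivers $m\,\delta_r\sqrt{\delta_r}\,G^2$ and $-2(1-mLB^2\delta_r\sqrt{\delta_r})D_f(\bar x_k,x^*)$.

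Your fallback does not help. Young's inequality on $\|\sum_i(p_{i,r}-p_i^*)\nabla f_i(\bar x_k)\|^2\le m\delta_r^2\sum_i\|\nabla f_i(\bar x_k)\|^2$, with the coefficient of $\|\bar x_k-x^*\|^2$ held at $m\sqrt{\delta_r}$, produces the same factor $m$. It is also not directly available after the three-point step, where the mismatch is in function values and the signs of $p_{i,r}-p_i^*$ vary.

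The paper reaches the stated constants only by writing $\sum_i\|\nabla f_i(\bar x_k)\|^2\le G^2+2LB^2D_f(\bar x_k,x^*)$, i.e., by dropping this $m$. Compare Lemma~\ref{lem:variance_bound_adapted}, where the same lemma correctly yields $4m^2\delta_r^2G^2$. So neither your proposal nor the paper's written proof establishes the inequality with the constants as stated. This line of argument supports the version with an extra factor $m$ on $\delta_r\sqrt{\delta_r}\,G^2$ and on $LB^2\delta_r\sqrt{\delta_r}$. That change then has to be propagated into the condition on $r$ in Lemma~\ref{lem:optgap_convex}.
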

\begin{proof}
\fm{We may write
\begin{align*}
&-2\textstyle\sum_{i=1}^m p_{i,r}\left( \bar{x}_k-x^*\right)^{\top} \nabla f_i(x_{i,k})\\
&= -2\textstyle\sum_{i=1}^m p_{i,r} \left( x_{i,k}-x^*\right)^{\top}\nabla f_i(x_{i,k}) \\
&\phantom{=} -2\textstyle\sum_{i=1}^m p_{i,r}\left( \bar{x}_k-x_{i,k}\right)^{\top} \nabla f_i(x_{i,k}).
\end{align*}}
\fm{Consider the preceding relation. For the first term on the right, by convexity of $f_i$, for any $i \in [m]$, we have 
$-\left( x_{i,k}-x^*\right)^{\top}\nabla f_i(x_{i,k})\leq f_i(x^*)-f_i(x_{i,k}).$
For the second term on the right, by $L$-smoothness, we have}
\fm{\begin{align*}
&-\left( \bar{x}_k-x_{i,k}\right)^{\top} \nabla f_i(x_{i,k}) \\
&\leq f_i(x_{i,k})-f_i(\bar{x}_k)+\tfrac{L}{2}\|\bar{x}_k-x_{i,k}\|^2.
\end{align*}}
\fm{From the preceding relations, we obtain}
\fm{\begin{align*}
&-2\textstyle\sum_{i=1}^{m} p_{i,r} (\bar{x}_k - x^*)^\top \nabla f_i(x_{i,k}) \\
&\leq 2\textstyle\sum_{i=1}^{m} p_{i,r}(f_i(x^*) - f_i(\bar{x}_k)) 
+ L\textstyle\sum_{i=1}^{m} p_{i,r} \|x_{i,k} - \bar{x}_k\|^2\\
& = 2\textstyle\sum_{i=1}^{m} p^*_i(f_i(x^*) - f_i(\bar{x}_k)) \\
&+ \textstyle2\sum_{i=1}^{m} (p_{i,r} - p^*_i)(f_i(x^*) - f_i(\bar{x}_k)) +L\bar{e}_k\\
& \leq  2(f(x^*) - f(\bar{x}_k)) + 2\textstyle\sum_{i=1}^{m} |p_{i,r} - p^*_i||f_i(x^*) - f_i(\bar{x}_k)| \\
&+L\bar{e}_k.
\end{align*}}
\fm{Invoking Lemma~\ref{linear_rate}, Definition~\ref{def:main_terms}, and the definition of Bregman divergence, we obtain }
\fm{\begin{align}\label{ineq:p_ir_barxnabla_convex}
&-2\textstyle\sum_{i=1}^{m} p_{i,r} (\bar{x}_k - x^*)^\top \nabla f_i(x_{i,k})\nonumber\\ 
&\leq -2 D_f(\bar{x}_k,x^*) + 2\delta_r\textstyle\sum_{i=1}^{m} |f_i(x^*) - f_i(\bar{x}_k)| +L\bar{e}_k.
\end{align}}
\fm{From Young's inequality we have}
\fm{\begin{align*}
    & f_i(x^*) - f_i(\bar{x}_k) \leq \nabla f_i(\bar{x}_k)^\top (x^* - \bar{x}_k) 
    \\
    &\leq \tfrac{\sqrt{\delta_r}}{2}\|\nabla f_i(\bar{x}_k)\|^2 +\tfrac{1}{2\sqrt{\delta_r}}\|x^* - \bar{x}_k\|^2  \\
   \text{and  } & f_i(\bar{x}_k)-f_i(x^*)  \leq \nabla f_i(x^*)^\top (  \bar{x}_k-x^*)
   \\
   &\leq \tfrac{\sqrt{\delta_r}}{2}\|\nabla f_i(x^*)\|^2 +\tfrac{1}{2\sqrt{\delta_r}}\|x^* - \bar{x}_k\|^2.
\end{align*}}
\fm{Thus, we have}
\fm{\begin{align*}
     |f_i(x^*) - f_i(\bar{x}_k)|  
    &\leq \tfrac{\sqrt{\delta_r}}{2} \left(\|\nabla f_i(x^*)\|^2 +\|\nabla f_i(\bar{x}_k)\|^2\right) 
    \\
    &+\tfrac{1}{2\sqrt{\delta_r}}\|\bar{x}_k-x^*\|^2.
\end{align*}}
\fm{Invoking Lemma~\ref{relaxedBGD} and the Bregman divergence, we obtain} 
\fm{\begin{align*}
&2\delta_r\textstyle\sum_{i=1}^m|f_i(x^*) - f_i(\bar{x}_k)| \\
&\leq  \delta_r\sqrt{\delta_r}  \textstyle\sum_{i=1}^m\left(\|\nabla f_i(x^*)\|^2 +\|\nabla f_i(\bar{x}_k)\|^2\right) 
\\
&+ m\sqrt{\delta_r}\|\bar{x}_k-x^*\|^2\\
&\leq  \delta_r\sqrt{\delta_r}  \left(\textstyle\sum_{i=1}^m \|\nabla f_i(x^*)\|^2  + G^2 + 2 L B^2 D_f(\bar{x}_k,x^*)\right) 
\\
&+ m\sqrt{\delta_r}\|\bar{x}_k-x^*\|^2.
\end{align*}}
\fm{From the preceding relation and \eqref{ineq:p_ir_barxnabla_convex}, we obtain the result.}
\end{proof}
\begin{lemma}\label{lem:optgap_convex} \em
\fm{Consider Algorithm~\ref{alg:IncentFedAvg}. Let Assumptions~\ref{assum:game} and \ref{assum:c} hold. Suppose $\gamma \leq \min\{\frac{1}{16L m^2B^2\delta_{0}},\frac{1}{16L}\}$ and $r \geq  \frac{\delta_{0}\sqrt[3]{16L^2B^4}}{\ln(1/\rho)}$ where $\delta_{0}$ is given by Lemma~\ref{linear_rate}. Then, for any $k$ where $T_r \leq k \leq T_{r+1}-1$, we have}
\fm{\begin{align*}
&\mathbb{E}[\|\bar{x}_{k+1}-x^*\|^2|\mathcal{F}_k] \leq  \left(1+m\gamma\sqrt{\delta_r}\right)\|\bar{x}_k-x^*\|^2 
 \notag   \\
 &-\tfrac{\gamma}{2}  D_f(\bar{x}_k,x^*)    \notag   + \gamma\delta_r\sqrt{\delta_r} \left(G^2+ \textstyle\sum_{i=1}^m \|\nabla f_i(x^*)\|^2\right) \\
 &+ 4\gamma^2m^2 \delta_r^2 G^2 + L\gamma(2m L\gamma   +1)\bar{e}_k + \gamma^2\nu^2  .
\end{align*}}
\end{lemma}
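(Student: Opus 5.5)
We follow the standard one-step expansion for the averaged iterate. By Lemma~\ref{lemma:agg}, $\bar{x}_{k+1}-x^* = \bar{x}_k - x^* - \gamma\bar{g}_k$, so
\begin{align*}
\|\bar{x}_{k+1}-x^*\|^2 = \|\bar{x}_k-x^*\|^2 - 2\gamma(\bar{x}_k-x^*)^\top\bar{g}_k + \gamma^2\|\bar{g}_k\|^2 .
\end{align*}
Taking $\mathbb{E}[\cdot\mid\mathcal{F}_k]$ and using unbiasedness (Assumption~\ref{assum:main}(ii)), the cross term becomes $-2\gamma\sum_{i=1}^m p_{i,r}(\bar{x}_k-x^*)^\top\nabla f_i(x_{i,k})$. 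Here $p_{i,r}$ is $\mathcal{F}_k$-measurable, since $\hat N_r$ is deterministic in the strongly monotone update.

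\textbf{Step 1 (cross term).} We bound it by Lemma~\ref{lem:weighted-hetero-convex} multiplied by $\gamma$. This produces the terms
\begin{align*}
-2\gamma(1-LB^2\delta_r\sqrt{\delta_r})D_f(\bar{x}_k,x^*),\quad m\gamma\sqrt{\delta_r}\|\bar{x}_k-x^*\|^2,\quad \gamma\delta_r\sqrt{\delta_r}\Bigl(G^2+\textstyle\sum_{i=1}^m\|\nabla f_i(x^*)\|^2\Bigr),\quad \gamma L\bar{e}_k .
\end{align*}

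\textbf{Step 2 (second moment).} We bound $\gamma^2\mathbb{E}[\|\bar{g}_k\|^2\mid\mathcal{F}_k]$ by the conditional form of Lemma~\ref{lem:variance_bound_adapted}. Its proof is pointwise in $\mathcal{F}_k$ apart from the noise variance, so the same steps apply conditionally. This gives
\begin{align*}
\gamma^2\nu^2 + 2mL^2\gamma^2\bar{e}_k + 4\gamma^2m^2\delta_r^2G^2 + 8L\gamma^2(m^2\delta_r^2B^2+1)D_f(\bar{x}_k,x^*).
\end{align*}
Combining, the $\bar{e}_k$ coefficients add to $L\gamma(2mL\gamma+1)$, which matches the statement. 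The constant terms also match directly.

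\textbf{Step 3 (coefficient of $D_f$; main obstacle).} The total coefficient is
\begin{align*}
-\gamma\bigl[2 - 2LB^2\delta_r^{3/2} - 8L\gamma m^2\delta_r^2B^2 - 8L\gamma\bigr],
\end{align*}
and we need the bracket to be at least $1/2$. Since $\delta_r\le\delta_0\le$ the bound in Lemma~\ref{linear_rate}(ii) and $\delta_r$ decreases in $r$:
\begin{itemize}
\item $\gamma\le 1/(16L)$ gives $8L\gamma\le 1/2$.
\item $\gamma\le 1/(16Lm^2B^2\delta_0)$ together with $\delta_r\le\delta_0$ and $\delta_r\le 1$ (after the threshold on $r$) gives $8L\gamma m^2\delta_r^2B^2\le \delta_r/2\le 1/2$.
\item The requirement on $r$ is meant to give $\delta_r^{3/2}=\delta_0^{3/2}\rho^{1.5r}$ small enough that $2LB^2\delta_r^{3/2}\le 1/2$, i.e. $\delta_r\le(16L^2B^4)^{-1/3}$. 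This follows from $\rho^r\le e^{-r\ln(1/\rho)}$ once $r\ln(1/\rho)\ge\ln(\delta_0\sqrt[3]{16L^2B^4})$, the form used in Theorem~\ref{thm:convex}.
\end{itemize}
I expect this bookkeeping of the threshold on $r$ to be the delicate part. The condition as stated lacks the logarithm, so I would interpret it via the logarithmic form and check that $\delta_r\le 1$ also holds there. The three bounds together give a bracket of at least $2-3/2=1/2$, which yields the $-\tfrac{\gamma}{2}D_f(\bar{x}_k,x^*)$ term and completes the bound.
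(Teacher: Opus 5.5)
Your proposal follows the paper's proof step for step: the same expansion via Lemma~\ref{lemma:agg}, the cross term bounded by Lemma~\ref{lem:weighted-hetero-convex}, the second moment bounded by (the conditional form of) Lemma~\ref{lem:variance_bound_adapted}, and the same check that the $D_f$ bracket is at least $1/2$.

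Two small remarks. First, the threshold as stated needs no reinterpretation: since $\ln x\le x$, the condition $r\ln(1/\rho)\ge\delta_0\sqrt[3]{16L^2B^4}$ already implies the logarithmic form. Second, $\delta_r\le 1$ does not follow from the threshold in general (it can fail for small $LB^2$ and large $\delta_0$); the paper relies on it silently as well. You can secure it by running both lemmas with $\min\{\delta_r,1\}$ in place of $\delta_r$. This is legitimate because $|p_{i,r}-p_i^*|\le 1$, and every $\delta_r$-dependent term in the target bound is nonnegative and nondecreasing in $\delta_r$.
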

\begin{proof}
\fm{Recall from Lemma~\ref{lemma:agg} that $\bar{x}_{k+1} = \bar{x}_k - \gamma\bar{g}_k$. We have}
\fm{\begin{align*}
&\|\bar{x}_{k+1}-x^*\|^2 = \|\bar{x}_k - x^* - \gamma\bar{g}_k\|^2 = \|\bar{x}_k-x^*\|^2 \\
&+ \gamma^2\|\bar{g}_k\|^2 - 2\gamma \left( \bar{x}_k-x^*\right)^{\top}  \bar{g}_k.
\end{align*}}
\fm{Taking expectations conditioned on $\mathcal{F}_k$, we obtain}
\fm{\begin{align*}
&\mathbb{E}[\|\bar{x}_{k+1}-x^*\|^2|\mathcal{F}_k] \leq  \|\bar{x}_k-x^*\|^2  + \gamma^2\mathbb{E}[\|\bar{g}_k\|^2|\mathcal{F}_k] \\
&- 2\gamma\,\mathbb{E}[\left( \bar{x}_k-x^*\right)^{\top}  \bar{g}_k|\mathcal{F}_k].
\end{align*}}
\fm{Invoking Lemmas~\ref{lem:variance_bound_adapted} and~\ref{lem:weighted-hetero-convex}, we obtain}
\fm{\begin{align*}
&\mathbb{E}[\|\bar{x}_{k+1}-x^*\|^2|\mathcal{F}_k] \leq  \left(1+m\gamma\sqrt{\delta_r}\right)\|\bar{x}_k-x^*\|^2 
 \notag   \\
&-2\gamma(1-L B^2\delta_r\sqrt{\delta_r} -4L\gamma( m^2 \delta_r^2  B^2 + 1 )) D_f(\bar{x}_k,x^*)    \notag   \\
&+ \gamma\delta_r\sqrt{\delta_r} \left(G^2+ \textstyle\sum_{i=1}^m \|\nabla f_i(x^*)\|^2\right) + 4\gamma^2m^2 \delta_r^2 G^2     \\
&+ L\gamma(2m L\gamma   +1)\bar{e}_k + \gamma^2\nu^2.
\end{align*}}
\fm{Recall the definition of $\delta_r$ in Lemma~\ref{linear_rate}. Note that $\delta_r = \delta_{0}\rho^r$ and so, $r \geq  \frac{\delta_{0}\sqrt[3]{16L^2B^4}}{\ln(1/\rho)}$ implies that $\delta_r \leq \frac{1}{\sqrt[3]{16L^2B^4}}$.  From $\delta_r \leq \frac{1}{\sqrt[3]{16L^2B^4}}$, $\gamma \leq \min\{\frac{1}{16L m^2B^2\delta_{0}},\frac{1}{16L}\}$, and $\delta_r \leq \delta_{0}$, we have $1-L B^2\delta_r\sqrt{\delta_r} -4L\gamma( m^2 \delta_r^2  B^2 + 1 ) \geq \frac{1}{4}$. This completes the proof.}
\end{proof}
\begin{lemma}\label{lem:variance_accum_convex} \em
\fy{Consider Algorithm~\ref{alg:IncentFedAvg}. Let Assumptions~\ref{assum:game} and \ref{assum:c} hold and $\gamma \leq \frac{1}{\sqrt{3} H L}$. Then, for any communication round $ 0 \leq r \leq R-1$, we have}
\begin{align*}
&\textstyle\sum_{k=T_r}^{T_{r+1}-1}\mathbb{E}[\bar{e}_k\fm{|\mathcal{F}_k}] \\
&\leq  18mL\gamma^2 B^2 H^2 \textstyle\sum_{t=T_r+1}^{T_{r+1}-1}  \mathbb{E}[D_f(\bar{x}_t, x^*)] \\
&+ 9\gamma^2 \left(\nu^2 + mG^2\right) H^3.
\end{align*}
\end{lemma}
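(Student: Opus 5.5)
The plan is to reuse the per-round consensus-error bound of Lemma~\ref{lem:Consensus_Error}, replacing the bounded-gradient-dissimilarity term by its convex analogue from Lemma~\ref{relaxedBGD}.

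\textbf{Per-iteration recursion.} By Lemma~\ref{lem:Consensus_Error}~(i), for $T_r+1\le k\le T_{r+1}$,
\[
\mathbb{E}[\bar e_k]\le \gamma^2(k-T_r)\textstyle\sum_{t=T_r}^{k-1}\bigl(3\nu^2+3L^2\mathbb{E}[\bar e_t]+3\sum_{i=1}^m\mathbb{E}[\|\nabla f_i(\bar x_t)\|^2]\bigr).
\]
Lemma~\ref{relaxedBGD} gives $\sum_{i}\|\nabla f_i(\bar x_t)\|^2\le m G^2+2mLB^2(f(\bar x_t)-f^*)$. Since $\nabla f(x^*)=0$, we have $f(\bar x_t)-f^*=D_f(\bar x_t,x^*)$. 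The recursion therefore has the form \eqref{recursive} with $a_t=\mathbb{E}[\bar e_t]$, $\beta=3L^2$, and
\[
\theta_t=3\bigl(\nu^2+mG^2+2mLB^2\,\mathbb{E}[D_f(\bar x_t,x^*)]\bigr).
\]
Because $a_{T_r}=\bar e_{T_r}=0$ (all clients start the round at $\hat x_r$) and $\gamma\le 1/(\sqrt3 HL)=1/(H\sqrt\beta)$, Lemma~\ref{lemma_recursive} yields
\[
\mathbb{E}[\bar e_k]\le 9H\gamma^2\textstyle\sum_{t=T_r}^{k-1}\bigl(\nu^2+mG^2+2mLB^2\,\mathbb{E}[D_f(\bar x_t,x^*)]\bigr).
\]

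\textbf{Summing over the round.} Sum this over $k=T_r+1,\dots,T_{r+1}-1$; the $k=T_r$ term vanishes. There are at most $H$ values of $k$ and each inner sum has at most $H$ terms. The constant part therefore contributes at most $9H\gamma^2\cdot H\cdot H(\nu^2+mG^2)=9\gamma^2(\nu^2+mG^2)H^3$. For the $D_f$ part, each index $t$ appears at most $H$ times, which gives $18mLB^2\gamma^2H^2\sum_t\mathbb{E}[D_f(\bar x_t,x^*)]$.

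\textbf{Main obstacle.} The sticky point is the index range of the $D_f$ sum. The statement sums from $t=T_r+1$, but the inner sums include $t=T_r$. I would first check whether the $t=T_r$ term can be dropped or absorbed: at $t=T_r$ there is no drift, so the first step after synchronization contributes only through $\theta_{T_r}$. Failing that, I would present the bound with the sum starting at $T_r$, which is what is actually used later in the telescoping argument. The conditioning on $\mathcal{F}_k$ in the statement should be read as total expectation, since the inputs from Lemma~\ref{lem:Consensus_Error} are already unconditional.
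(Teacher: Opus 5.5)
Your argument is the paper's argument: Lemma~\ref{lem:Consensus_Error}~(i), then Lemma~\ref{relaxedBGD} together with $f(\bar x_t)-f^*=D_f(\bar x_t,x^*)$, then Lemma~\ref{lemma_recursive} with $\beta=3L^2$ and the same $\theta_t$, then summing over the round while counting each index $t$ at most $H$ times. Your reading of the conditional expectation as a total expectation also matches the paper, which works with unconditional expectations throughout.

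You are right that the index range is the weak point, and here the paper's own proof has a hole. It closes the gap by simply asserting $\sum_{t=T_r}^{T_{r+1}-2}\mathbb{E}[D_f(\bar x_t,x^*)]\le\sum_{t=T_r+1}^{T_{r+1}-1}\mathbb{E}[D_f(\bar x_t,x^*)]$. That inequality is equivalent to $\mathbb{E}[D_f(\bar x_{T_r},x^*)]\le\mathbb{E}[D_f(\bar x_{T_{r+1}-1},x^*)]$, i.e., to the optimality gap not decreasing over the round, and it is not justified.

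Dropping the $t=T_r$ term will not work either. Although $\bar e_{T_r}=0$, the first local step produces $\bar e_{T_r+1}$ from the dissimilarity of the local gradients at the common point $\bar x_{T_r}$. That is exactly what $\theta_{T_r}$, and hence $D_f(\bar x_{T_r},x^*)$, accounts for. Absorbing it would require an additional comparison between $D_f$ at $\bar x_{T_r}$ and at later iterates of the round, which would change the constants $18$ and $9$; neither you nor the paper supplies one.

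So your fallback, with the $D_f$ sum starting at $t=T_r$, is what this route actually proves. It is also all that the proof of Theorem~\ref{thm:convex} uses of the $D_f$ term, since that proof immediately relaxes the sum to $\sum_{k=T_{\hat r}}^{T_R-1}\mathbb{E}[D_f(\bar x_k,x^*)]$.
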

\begin{proof}
\fy{Consider Lemma\ref{lem:Consensus_Error} (i). Invoking \fm{Lemma}~\ref{relaxedBGD} and using $\mathbb{E}\left[f(\bar{x}_k) - f^*\right]=\mathbb{E}\left[D_f(\bar{x}_k,x^*)\right]$, we may write}
{\begin{align}\label{recursive_in_lemma5}
\mathbb{E}[\bar e_k ] &\leq \gamma^2(H-1) \textstyle\sum_{t=T_r}^{k-1} (3\nu^2 + 3L^2\mathbb{E}[\bar e_t] \nonumber\\
&+3m(G^2 + 2L B^2\mathbb{E}[D_f(\bar{x}_t, x^*)]).
\end{align}}
\fm{To complete the proof, \fy{we} apply Lemma~\ref{lemma_recursive} to the preceding \fy{relation by setting} $a_t=\mathbb{E}[\bar e_t]$, $\beta:=3L^2$, and $ \theta_t=3 \left(\nu^2 + m(G^2 + 2L B^2\mathbb{E}[D_f(\bar{x}_t, x^*)])\right).$ From Lemma~\ref{lemma_recursive} and in view of $\gamma \leq \frac{1}{\sqrt{3} H L}$, we get} 
\begin{align*}
\mathbb{E}[\bar{e}_k ] \leq 9H\gamma^2 \textstyle\sum_{t=T_r}^{k-1}\left(\nu^2 + m(G^2 + 2L B^2\mathbb{E}[D_f(\bar{x}_t, x^*)])\right)
\end{align*}
{Summing both sides for $k=T_r,\ldots, T_{r+1}-1$, and noting that $\bar{e}_{T_r}=0$, we obtain}
{\begin{align*}
&\textstyle \sum_{k=T_r}^{T_{r+1}-1} \mathbb{E}[\bar{e}_k] =  \textstyle \sum_{k=T_r+1}^{T_{r+1}-1}  \mathbb{E}[\bar{e}_k ]\\
&\leq  \textstyle \sum_{k=T_r+1}^{T_{r+1}-1} \textstyle\sum_{t=T_r}^{k-1} 9H\gamma^2\left(\nu^2 + m(G^2 \right.\\
&\left.+ 2L B^2\mathbb{E}[D_f(\bar{x}_t, x^*)])\right) \\
&\leq \textstyle 18mL\gamma^2 B^2 \sum_{k=T_r+1}^{T_{r+1}-1}   \textstyle\sum_{t=T_r}^{k-1} H \mathbb{E}[D_f(\bar{x}_t, x^*)]) \\
&+ 9\gamma^2 \left(\nu^2 + mG^2\right)\textstyle \sum_{k=T_r+1}^{T_{r+1}-1}  H^2. 
\end{align*}}
{Next, we bound the double sum $\sum_{k=T_r+1}^{T_{r+1}-1}  \sum_{t=T_r}^{k-1} \mathbb{E}[D_f(\bar{x}_t, x^*)]$ by noting that each  term $\mathbb{E}[D_f(\bar{x}_t, x^*)]$ appears at most $H$ times for all $t$. We obtain
\begin{align*}
&\textstyle\sum_{k=T_r+1}^{T_{r+1}-1} \sum_{t=T_r}^{k-1} \mathbb{E}[D_f(\bar{x}_t, x^*)] \\
&\leq H \textstyle\sum_{t=T_r}^{T_{r+1}-2}  \mathbb{E}[D_f(\bar{x}_t, x^*)] \leq H \textstyle\sum_{t=T_r+1}^{T_{r+1}-1}  \mathbb{E}[D_f(\bar{x}_t, x^*)].
\end{align*}
This completes the proof.}
\end{proof}
\begin{lemma}\label{lem:cmin_bound} Consider Algorithm~\ref{alg:IncentFedAvg}. Let Assumptions~\ref{assum:game} and Assumption~\ref{assum:c} hold. For $k\geq 1$, let  $
c_k \triangleq \frac{1}{\prod_{j=0}^{k-1}(1+\fy{m\gamma\sqrt{\delta_{r_j}}})},$ where $r_j = \lfloor j/H \rfloor$ denotes the communication round at iteration $j$. Then, for all $ 0 \leq k \leq T_R$, $
c_k \geq c_{\min} \triangleq   \fy{1-\left(\frac{m H\sqrt{\delta_{0}}}{1-\rho^{0.5}}\right)\gamma}$ .
\end{lemma}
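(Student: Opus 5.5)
We bound the product $\prod_{j=0}^{k-1}(1+m\gamma\sqrt{\delta_{r_j}})$ from above, invert it, and pass from a reciprocal bound to the stated linear one. Two elementary facts drive the argument. First, $1+x\le e^{x}$, so the product is at most $\exp\big(m\gamma\sum_{j=0}^{k-1}\sqrt{\delta_{r_j}}\big)$. Second, $1/e^{s}=e^{-s}\ge 1-s$. Together they give
\[
c_k \ge \exp\Big(-m\gamma\textstyle\sum_{j=0}^{k-1}\sqrt{\delta_{r_j}}\Big) \ge 1-m\gamma\textstyle\sum_{j=0}^{k-1}\sqrt{\delta_{r_j}}.
\]
The problem therefore reduces to bounding the partial sum $S_k\triangleq\sum_{j=0}^{k-1}\sqrt{\delta_{r_j}}$ uniformly in $k\le T_R$.

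To bound $S_k$, we use Lemma~\ref{linear_rate}(ii), which gives $\delta_r=\delta_0\rho^r$, hence $\sqrt{\delta_{r_j}}=\sqrt{\delta_0}\,\rho^{0.5\,r_j}$. Since $T_r=Hr$, each round index $r$ corresponds to exactly $H$ iterations $j$ with $r_j=r$ (at most $H$ for the last, partial round). We group the sum by rounds and extend it to all $r\ge 0$, which is allowed because every term is nonnegative:
\[
S_k\le H\sqrt{\delta_0}\textstyle\sum_{r=0}^{\infty}\rho^{0.5 r}=\tfrac{H\sqrt{\delta_0}}{1-\rho^{0.5}}.
\]
This uses $\rho\in(0,1)$, which holds because $\tilde\gamma\le\mu_F/L_F^2$ gives $0<\rho<1$, as noted in the proof of Lemma~\ref{linear_rate}. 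Substituting into the previous display yields $c_k\ge 1-\frac{mH\sqrt{\delta_0}}{1-\rho^{0.5}}\gamma=c_{\min}$ for all $1\le k\le T_R$. For $k=0$, the product is empty, so $c_0=1\ge c_{\min}$ and the bound holds trivially.

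The only points needing care are bookkeeping rather than substance. One is matching $r_j=\lfloor j/H\rfloor$ with the round structure, including the incomplete final round when $k$ is not a multiple of $H$; the tail extension to an infinite sum covers this. The other is that the statement defines $c_k$ only for $k\ge 1$, so the case $k=0$ must be handled by the convention $c_0=1$. The bound may be negative if $\gamma$ is large, but the inequality remains valid regardless; positivity of $c_{\min}$ is presumably enforced later by the stepsize condition $\gamma\le\big(mH\sqrt{\delta_0}(1-\rho^{0.5})^{-1}+\cdots\big)^{-1}$ in Theorem~\ref{thm:convex}. If one prefers to avoid exponentials, the Weierstrass-type inequality $\prod_j(1+a_j)^{-1}\ge 1-\sum_j a_j$ for $a_j\ge 0$ gives the same result, proved by a one-line induction.
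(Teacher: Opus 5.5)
Your proof is correct and follows the paper's argument. The paper writes $\ln c_k = -\sum_{j=0}^{k-1}\ln(1+m\gamma\sqrt{\delta_{r_j}})$ and uses $\ln(1+x)\le x$, which is equivalent to your $1+x\le e^{x}$; it then groups the $H$ iterations per round to bound the sum by $H\sqrt{\delta_0}/(1-\rho^{0.5})$ and finishes with $e^{-x}\ge 1-x$, and the only difference is that you apply $e^{-s}\ge 1-s$ before bounding the sum rather than after, which is immaterial since both sides are decreasing in $s$.
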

\begin{proof}
\fy{Recall that $\delta_{r_j} = \fy{\delta_0} \rho^{r_j}$.} We have $
\ln(c_k) = -\sum_{j=0}^{k-1}\ln(1+\fy{m\gamma\sqrt{\delta_{r_j}}}).$
Using the inequality $\ln(1+x) \leq x$ for all $x > -1$, we obtain $\ln(c_k) \geq -\textstyle\sum_{j=0}^{k-1}\fy{m\gamma\sqrt{\delta_{0}\rho^{r_j} }} \geq -m\gamma\sqrt{\delta_{0}}\sum_{j=0}^{\infty}\sqrt{\rho}^{r_j}.$
Since there are $H$ iterations per communication round, we have
$\textstyle\sum_{j=0}^{\infty}\rho^{0.5r_j}  =H\textstyle\sum_{r=0}^{\infty}\rho^{0.5r}  < \tfrac{H}{1-\rho^{0.5}}.$
From the preceding two relations, we obtain $c_k\geq \exp\left(-\frac{mH\gamma\sqrt{\delta_{0}}}{1-\rho^{0.5}}\right)$. \fy{The result follows by invoking the identity $\exp(-x) \geq 1-x$ for any $x>0$. }
\end{proof}
{\bf Proof of Theorem~\ref{thm:convex}.}
\noindent (i) Recall the sequence $\{c_k\}$ and $c_{\min}$ given in Lemma~\ref{lem:cmin_bound}. Multiplying both sides of the inequality in Lemma~\ref{lem:optgap_convex} by $c_{k}$ and noting that $c_{\min}\leq c_{k+1} \leq c_k <1$, we may write
\fm{\begin{align*}
&\tfrac{c_{\min}\gamma}{2}\mathbb{E}[D_f(\bar{x}_k, x^*)] \leq c_k\mathbb{E}[\|\bar{x}_k-x^*\|^2] \\
&- c_{k+1}\mathbb{E}[\|\bar{x}_{k+1}-x^*\|^2] + L\gamma(2m L\gamma + 1)\mathbb{E}[\bar{e}_k] + \fm{\gamma^2\nu^2} \\
&+ \fm{4}\gamma^2 m^2\fm{\delta_r^2}G^2 + \gamma\fm{\delta_r\sqrt{\delta_r}(G^2 + \textstyle\sum_{i=1}^m \|\nabla f_i(x^*)\|^2)}.
\end{align*}}
Summing from $k = T_{\hat{r}}$ to $T_R-1$, we have
\begin{align}\label{eq:after_telescope}
&\tfrac{c_{\min}\gamma}{2}\textstyle\sum_{k=T_{\hat{r}}}^{T_R-1}\mathbb{E}[D_f(\bar{x}_k, x^*)] \leq \mathbb{E}[\|\bar{x}_{T_{\hat{r}}}-x^*\|^2] \nonumber\\
&+ L\gamma(2m L\gamma + 1)\textstyle\sum_{k=T_{\hat{r}}}^{T_R-1}\mathbb{E}[\bar{e}_k] \notag\\
& + \fm{\gamma^2\nu^2}(T_R-1 - T_{\hat{r}} + 1) + \fm{4}\gamma^2 m^2\fm{\fy{\delta_{0}}^2}G^2\textstyle\sum_{k=T_{\hat{r}}}^{T_R-1}\rho^{2r} \notag\\
& + \gamma\fm{\fy{\delta_{0}}\sqrt{\fy{\delta_{0}}}(G^2 + \textstyle\sum_{i=1}^m \|\nabla f_i(x^*)\|^2)}\textstyle\sum_{k=T_{\hat{r}}}^{T_R-1}\rho^{1.5r},
\end{align}
Where we dropped the nonpositive term $-c_{T_R}\mathbb{E}[\|\bar{x}_{T_R}-x^*\|^2]$ and utilized $c_{T_{\hat{r}}} \leq 1$. Invoking the bound on $\sum_{k=T_{\hat{r}}}^{T_R-1}\mathbb{E}[\bar{e}_k]$ given in Lemma~\ref{lem:variance_accum_convex}, we obtain
\begin{align*}
&\tfrac{c_{\min}\gamma}{2}\textstyle\sum_{k=T_{\hat{r}}}^{T_R-1}\mathbb{E}[D_f(\bar{x}_k, x^*)] \leq \mathbb{E}[\|\bar{x}_{T_{\hat{r}}}-x^*\|^2] \\
&+ 18mL^2\gamma^3(2m L\gamma + 1) B^2 H^2 \textstyle\sum_{k=T_{\hat{r}}}^{T_R-1}\mathbb{E}[D_f(\bar{x}_k, x^*)] \\
& + 9\gamma^3 L(2m L\gamma + 1)(\nu^2 + mG^2)H^3 + \fm{\gamma^2\nu^2}(T_R - T_{\hat{r}} ) \\
& + \fm{4}\gamma^2 m^2\fm{\fy{\delta_{0}}^2}G^2\textstyle\sum_{k=T_{\hat{r}}}^{T_R-1}\rho^{2r} \\
&+ \gamma\fm{\fy{\delta_{0}}\sqrt{\fy{\delta_{0}}}(G^2 + \textstyle\sum_{i=1}^m \|\nabla f_i(x^*)\|^2)}\textstyle\sum_{k=T_{\hat{r}}}^{T_R-1}\rho^{1.5r},
\end{align*}
Where we used $\sum_{t=T_{\hat{r}}+1}^{T_R-1}\mathbb{E}[D_f(\bar{x}_t, x^*)] \leq \sum_{k=T_{\hat{r}}}^{T_R-1}\mathbb{E}[D_f(\bar{x}_k, x^*)]$. \fy{Next, we show that $18mL^2\gamma^3(2m L\gamma + 1) B^2 H^2\leq \fy{\tfrac{c_{\min}\gamma}{4}}$. From $\gamma \leq \frac{1}{2mL}$, we have  $18mL^2\gamma^3(2m L\gamma + 1) B^2 H^2 \leq 36mL^2\gamma^3 B^2 H^2$. Consider the identity that $\gamma \leq \tfrac{1}{a+b}$ guarantees $b^2\gamma^2 \leq 4(1-a\gamma)$ where $a,b,\gamma>0$. Thus, from the assumption that $\gamma \leq \frac{1}{mH\sqrt{\delta_{0}}(1-\rho^{0.5})^{-1}+24LHB\sqrt{m}}$, we have $18mL^2\gamma^3(2m L\gamma + 1) B^2 H^2\leq \fy{\tfrac{c_{\min}\gamma}{4}}$. Thus, we obtain}
\begin{align*}
&\tfrac{c_{\min}\gamma}{4}\textstyle\sum_{k=T_{\hat{r}}}^{T_R-1}\mathbb{E}[D_f(\bar{x}_k, x^*)] \leq \fm{\mathbb{E}[\|\bar{x}_{T_{\hat{r}}}-x^*\|^2]}\\
&+ 18\gamma^3 L(\nu^2 + mG^2)H^3 + \fm{\gamma^2\nu^2} (T_R-T_{\hat{r}}) \\
& + \fm{4}\gamma^2 m^2\fm{\fy{\delta_{0}}^2}G^2 H\tfrac{1-\rho^{2R}}{1-\rho^2}\\
&+ \gamma\fm{\fy{\delta_{0}}\sqrt{\fy{\delta_{0}}}(G^2 + \textstyle\sum_{i=1}^m \|\nabla f_i(x^*)\|^2)} H\tfrac{1-\rho^{1.5R}}{1-\rho^{1.5}}.
\end{align*}
Multiplying both sides by $\frac{4}{\gamma c_{\min}}$ and noting that, in view of convexity of $f$ and Jensen's inequality,  $
\mathbb{E}[D_f(\bar{x}_T^{\text{avg}}, x^*)] \leq \tfrac{1}{T_R-T_{\hat{r}}}\textstyle\sum_{k=T_{\hat{r}}}^{T_R-1}\mathbb{E}[D_f(\bar{x}_k, x^*)],$ we obtain the result. \\ 
\noindent (ii)  \fy{From $\gamma \leq \frac{1}{2mL}$, we have $\left(1-\tfrac{\fm{m}\gamma \fm{\sqrt{\fy{\delta_{0}}}} H}{1-\rho^{0.5}}\right)^{-1} \leq \left(1-\tfrac{  \fm{\sqrt{\fy{\delta_{0}}}} H}{2L(1-\rho^{0.5})}\right)^{-1}\triangleq \hat{b}$. Consider the inequality in part~(i). Noting that  $\frac{1}{\sqrt{R_\varepsilon H} - T_{\hat{r}}/\sqrt{R_\varepsilon H}} \leq \frac{2}{\sqrt{R_\varepsilon H}}$ for $R_\varepsilon \geq 4T_{\hat{r}}^2/H$, We have 
\begin{align}\label{eqn:thm2_eq1}
\tfrac{4\fm{D_{T_{\hat{r}}}^2}}{\gamma   (T_R-T_{\hat{r}})} = \tfrac{4\fm{D_{T_{\hat{r}}}^2}}{ ((R_\varepsilon H)^{1/2} - T_{\hat{r}} (R_\varepsilon H)^{-1/2})} \leq \tfrac{8\fm{D_{T_{\hat{r}}}^2}}{ (R_\varepsilon H)^{1/2}}.
\end{align}  For $R_\varepsilon \geq 2T_{\hat{r}}/H$, we may write \begin{align}\label{eqn:thm2_eq2}\tfrac{72\gamma^2 L(\nu^2 + mG^2)H^3}{ (T_R-T_{\hat{r}})} = \tfrac{72 L(\nu^2 + mG^2)H^3}{  (R_\varepsilon H) (R_\varepsilon H - T_{\hat{r}})} \leq \tfrac{144 L(\nu^2 + mG^2)H^3}{  (R_\varepsilon H)^{2}}.\end{align}
Next, we write
\begin{align}\label{eqn:thm2_eq3}
\tfrac{\fm{16\gamma m^2\fy{\delta_{0}}^2 G^2 H}}{(T_R-T_{\hat{r}})}\left(\tfrac{1-\rho^{2R}}{1-\rho^2}\right) 
&\leq \tfrac{\fm{16 m^2\fy{\delta_{0}}^2 G^2H}}{ (1-\rho^2) (R_\varepsilon H)^{1/2} (R_\varepsilon H - T_{\hat{r}})} 
\nonumber\\
&\leq \tfrac{\fm{32 m^2\fy{\delta_{0}}^2 G^2H}}{ (1-\rho^2) (R_\varepsilon H)^{3/2}}.
\end{align} In a similar vein, we have
\begin{align}\label{eqn:thm2_eq4}
&\tfrac{\fm{4\fy{\delta_{0}}\sqrt{\fy{\delta_{0}}}(G^2 + \sum_{i=1}^m \|\nabla f_i(x^*)\|^2) H}}{  (T_R-T_{\hat{r}})}\left(\tfrac{1-\rho^{1.5R}}{1-\rho^{1.5}} \right)\nonumber\\
&\leq 
\tfrac{\fm{4\fy{\delta_{0}}\sqrt{\fy{\delta_{0}}}(G^2 + \sum_{i=1}^m \|\nabla f_i(x^*)\|^2) H}}{ (1-\rho^{1.5}) (R_\varepsilon H - T_{\hat{r}})} \nonumber\\
&\leq \tfrac{\fm{8\fy{\delta_{0}}\sqrt{\fy{\delta_{0}}}(G^2 + \sum_{i=1}^m \|\nabla f_i(x^*)\|^2) H}}{ (1-\rho^{1.5}) R_\varepsilon H}.
\end{align}}
From the preceding inequalities~\eqref{eqn:thm2_eq1}—\eqref{eqn:thm2_eq4}, we have
\fm{\begin{align*}
\mathbb{E}[f(\bar{x}_T^{\text{avg}})] - f^* &\leq \hat{b}\left(\tfrac{\fm{D_{T_{\hat{r}}}^2}}{(R_\varepsilon H)^{1/2}} + \tfrac{  L(\nu^2 + mG^2)H^3}{(R_\varepsilon H)^{2}} \right.\\
&\left.+ \tfrac{ \nu^2}{(R_\varepsilon H)^{1/2}} + \tfrac{\fm{  m^2\fy{\delta_{0}}^2 G^2H}}{(1-\rho^2) (R_\varepsilon H)^{3/2}}\right.\\
&\left.+ \tfrac{\fm{ \fy{\delta_{0}}\sqrt{\fy{\delta_{0}}}(G^2 + \sum_{i=1}^m \|\nabla f_i(x^*)\|^2) H}}{(1-\rho^{1.5}) R_\varepsilon H}\right)
\end{align*}}
The complexity bound on $R$ follows by enforcing that the preceding upper bound does not exceed~$\varepsilon$.

\end{document}